\def\eqref#1{equation~\ref{#1}}
\def\floor#1{\lfloor #1 \rfloor}
\def\1{\bm{1}}
\DeclareMathAlphabet{\mathsfit}{\encodingdefault}{\sfdefault}{m}{sl}
\SetMathAlphabet{\mathsfit}{bold}{\encodingdefault}{\sfdefault}{bx}{n}
\title{Robust Pruning at Initialization}
\author{Soufiane Hayou, Jean-Francois Ton, Arnaud Doucet \& Yee Whye Teh  \\
Department of Statistics\\
University of Oxford\\
United Kingdom \\
\texttt{\{soufiane.hayou, ton, doucet, teh\}@stats.ox.ac.uk} \\
}
\newtheorem{prop}{Proposition}
\newtheorem{thm}{Theorem}
\newtheorem{definition}{Definition}
\newtheorem{approximation}{Approximation}
\newtheorem{prop2}{Proposition}
\newtheorem{thm2}{Theorem}
\newtheorem{lemma2}{Appendix Lemma}
\begin{document}

\maketitle

\begin{abstract}
Overparameterized Neural Networks (NN) display state-of-the-art performance. However, there is a growing need for smaller, energy-efficient, neural networks to be able to use machine learning applications on devices with limited computational resources. A popular approach consists of using pruning techniques. While these techniques have traditionally focused on pruning pre-trained NN  \citep{lecun_pruning,hassibi_pruning}, recent work by \cite{lee_snip} has shown promising results when pruning at initialization. However, for Deep NNs, such procedures remain unsatisfactory as the resulting pruned networks can be difficult to train and, for instance, they do not prevent one layer from being fully pruned.
In this paper, we provide a comprehensive theoretical analysis of Magnitude and Gradient based pruning at initialization and training of sparse architectures. This allows us to propose novel principled approaches which we validate experimentally on a variety of NN architectures.
\end{abstract}

\section{Introduction}
Overparameterized deep NNs have achieved state of the art (SOTA) performance in many tasks  \citep{nguyen,du_overparameterized,zhang_overparam,neyshabur_overparam}. However, it is impractical to implement such models on small devices such as mobile phones. To address this problem, network pruning is widely used to reduce the time and space requirements both at training and test time. The main idea is to identify weights that do not contribute significantly to the model performance based on some criterion, and remove them from the NN. However, most pruning procedures currently available can only be applied after having trained the full NN  \citep{lecun_pruning,hassibi_pruning,mozer_pruning,dong_pruning} although methods that consider pruning the NN during training have become available. For example, \citet{louizos2018pruning} propose an algorithm which adds a $L_0$ regularization on the weights to enforce sparsity while \cite{Carreira_2018_CVPR, alvarez_pruning, li2020group} propose the inclusion of compression inside training steps. Other pruning variants consider training a secondary network that learns a pruning mask for a given architecture (\cite{li2020dhp, liu2019metapruning}). 

Recently, \cite{frankleLOTTERY} have introduced and validated experimentally the Lottery Ticket Hypothesis which conjectures the existence of a sparse subnetwork that achieves similar performance to the original NN. These empirical findings have motivated the development of pruning at initialization such as SNIP (\cite{lee_snip}) which demonstrated similar performance to classical pruning methods of pruning-after-training. Importantly, pruning at initialization never requires training the complete NN and is thus more memory efficient, allowing to train deep NN using limited computational resources. However, such techniques may suffer from different problems. In particular, nothing prevents such methods from pruning one whole layer of the NN, making it untrainable. More generally, it is typically difficult to train the resulting pruned NN \citep{Li_diffuculty}. To solve this situation, \cite{lee_signal2020} try to tackle this issue by enforcing dynamical isometry using orthogonal weights, while  \cite{wangGRASP} (GraSP) uses Hessian based pruning to preserve gradient flow. Other work by \cite{tanaka2020pruning} considers a data-agnostic iterative approach using the concept of synaptic flow in order to avoid the layer-collapse phenomenon (pruning a whole layer). In our work, we use principled scaling and re-parameterization to solve this issue, and show numerically that our algorithm achieves SOTA performance on CIFAR10, CIFAR100, TinyImageNet and ImageNet in some scenarios and remains competitive in others.

\begin{table}
 \caption{\small{Classification accuracies on CIFAR10 for Resnet with varying depths and sparsities using SNIP (\cite{lee_snip}) and our algorithm SBP-SR}}
 \label{table:intro_table}
 \vskip 0.05in
 \begin{center}
 \resizebox{11cm}{!}{%
 \begin{small}
 \begin{sc}
 \begin{tabular}{lcccccr}
 \toprule
  & algorithm & $90\%$ & $95\%$ & $98\%$ & $99.5\%$ & $99.9\%$ \\
 \midrule
 \multirow{2}{*}{ResNet32} &  SNIP & $92.26 \pm 0.32$  &  $91.18 \pm 0.17$   & 87.78 $\pm$ 0.16  &  77.56$\pm$0.36 & 9.98$\pm$0.08 \\
 &SBP-SR  & {\bf 92.56 $\pm$ 0.06}  &  $91.21 \pm 0.30$   & \textbf{88.25 $\pm$ 0.35}  & \textbf{79.54$\pm$1.12} & \textbf{51.56$\pm$1.12}\\
 \midrule
 \multirow{2}{*}{ResNet50} &  SNIP  & $91.95 \pm 0.13$  &  $92.12 \pm 0.34$   & $89.26 \pm 0.23$ & 80.49$\pm$2.41 & 19.98$\pm$14.12 \\
 &SBP-SR & \textbf{92.05 $\pm$ 0.06}  &  {\bf 92.74$\pm$ 0.32}   & \textbf{89.57 $\pm$ 0.21} & \textbf{82.68$\pm$0.52} & \textbf{58.76$\pm$1.82}\\
 \midrule
 \multirow{2}{*}{ResNet104} &  SNIP  & $93.25 \pm 0.53$  &  $92.98 \pm 0.12$   & $91.58 \pm 0.19$ & 33.63$\pm$33.27 & 10.11$\pm$0.09\\
 &SBP-SR & {\bf 94.69 $\pm$ 0.13}  &  {\bf 93.88 $\pm$ 0.17}   & {\bf 92.08 $\pm$ 0.14} & \textbf{87.47$\pm$0.23} & \textbf{72.70$\pm$0.48}\\
 \bottomrule
 \end{tabular}
 \end{sc}
 \end{small}
 }
 \end{center}
 \vskip -0.1in
 \end{table}

In this paper, we provide novel algorithms for Sensitivity-Based Pruning (SBP), i.e. pruning schemes that prune a weight $W$ based on the magnitude of $|W \frac{\partial \mathcal{L}}{\partial W}|$ at initialization where $\mathcal{L}$ is the loss. Experimentally, compared to other available one-shot pruning schemes, these algorithms provide state-of-the-art results (this might not be true in some regimes). Our work is motivated by a new theoretical analysis of gradient back-propagation relying on the mean-field approximation of deep NN \citep{hayou,samuel,poole,yang2017, xiao_cnnmeanfield, lee_gaussian_process, matthews}. 
Our contribution is threefold:
\begin{itemize}[topsep=0pt,itemsep=0pt,partopsep=1pt,parsep=1pt,listparindent=2pt,leftmargin=12pt]
\item  For deep fully connected FeedForward NN (FFNN) and Convolutional NN (CNN), it has been previously shown that only an initialization on the so-called Edge of Chaos (EOC) make models trainable; see e.g. \citep{samuel,hayou}. For such models, we show that an EOC initialization is also necessary for SBP to be efficient. Outside this regime, one layer can be fully pruned.
\item For these models, pruning pushes the NN out of the EOC making the resulting pruned model difficult to train. We introduce a simple rescaling trick to bring the pruned model back in the EOC regime, making the pruned NN easily trainable.
\item  Unlike FFNN and CNN, we show that Resnets are better suited for pruning at initialization since they `live' on the EOC by default  \citep{yang2017}. However, they can suffer from exploding gradients, which we resolve by introducing a re-parameterization, called `Stable Resnet' (SR). The performance of the resulting  SBP-SR pruning algorithm is illustrated in Table \ref{table:intro_table}: SBP-SR allows for pruning up to $99.5\%$ of ResNet104 on CIFAR10 while still retaining around $87\%$ test accuracy.
\end{itemize}
The precise statements and proofs of the theoretical results are given in the Supplementary. Appendix \ref{sec:LTH} also includes the proof of a weak version of the Lottery Ticket Hypothesis \citep{frankleLOTTERY} showing that, starting from a randomly initialized NN, there exists a subnetwork initialized on the EOC.

\section{Sensitivity Pruning for FFNN/CNN and the Rescaling Trick} \label{sec:Gaussian}
\subsection{ Setup and notations}
Let $x$ be an input in $\mathbb{R}^d$. A NN of depth $L$ is defined by
\begin{equation}\label{equation:general_neuralnet}
\begin{aligned}
y^l(x) =\mathcal{F}_l(W^l, y^{l-1}(x)) + B^l, \quad 1\leq l \leq L,
\end{aligned}
\end{equation}
where $y^l(x)$ is the vector of pre-activations, $W^l$ and $B^l$ are respectively the weights and bias of the $l^{\text{th}}$ layer and $\mathcal{F}_l$ is a mapping that defines the nature of the layer. The weights and bias are initialized with $W^l\overset{\text{iid}}{\sim}\mathcal{N}(0,\sigma_w^2/v_l)$, where $v_l$ is a scaling factor used to control the variance of $y^l$, and $B^l\overset{\text{iid}}{\sim}\mathcal{N}(0,\sigma_b^2)$. Hereafter, $M_l$ denotes the number of weights in the $l^{th}$ layer, $\phi$ the activation function and $[m:n]:=\{m,m+1, ..., n\}$ for  $m\leq n$. Two examples of such architectures are:

$\bullet$~~{\bf Fully connected FFNN}. For a FFNN of depth $L$ and widths $(N_l)_{0\leq l \leq L}$, we have $v_l = N_{l-1}$, $M_l = N_{l-1} N_l$ and
\begin{equation}
\label{equation:ffnn_net}
y^1_i(x) = \sum_{j=1}^{d} W^1_{ij} x_j + B^1_i, \quad
y^l_i(x) = \sum_{j=1}^{N_{l-1}} W^l_{ij} \phi(y^{l-1}_j(x)) + B^l_i \quad
\mbox{for } l\geq 2.
\end{equation} \\
\newpage
$\bullet$~~{\bf CNN}. For a 1D CNN of depth $L$, number of channels $(n_l)_{l\leq L}$, and number of neurons per channel $(N_l)_{l\leq L}$, we have
\begin{equation}\label{equation:convolutional_net}
    y^1_{i,\alpha}(x) = \sum_{j=1}^{n_{l-1}} \sum_{\beta \in ker_l} W^1_{i,j,\beta}  x_{j,\alpha+\beta} + b^1_i,~y^l_{i,\alpha}(x) = \sum_{j=1}^{n_{l-1}} \sum_{\beta \in ker_l} W^l_{i,j,\beta} \phi(y^{l-1}_{j,\alpha+\beta}(x)) + b^l_i,\>\>\mbox{for } l\geq 2,
\end{equation}
where $i \in [1:n_l]$ is the channel index, $\alpha \in [0:N_{l}-1]$ is the neuron location, $ker_l = [-k_l: k_l]$ is the filter range, and $2k_l +1$ is the filter size. To simplify the analysis, we assume hereafter that $N_l = N$ and $k_l=k$ for all $l$. Here, we have $v_l = n_{l-1} (2k+1)$ and $M_l = n_{l-1} n_l (2k+1)$. We assume periodic boundary conditions; so $y^l_{i, \alpha}=y^l_{i, \alpha+N}=y^l_{i, \alpha-N}$. Generalization to multidimensional convolutions is straightforward. 

When no specific architecture is mentioned, $(W^l_i)_{1\leq i \leq M_l}$ denotes the weights of the $l^{\text{th}}$ layer.
In practice, a pruning algorithm creates a binary mask $\delta$ over the weights to force the pruned weights to be zero. The neural network after pruning is given by
\begin{equation}\label{equation:general_neuralnet_pruned}
\begin{aligned}
y^l(x) =\mathcal{F}_l(\delta^l \circ W^l, y^{l-1}(x)) + B^l,
\end{aligned}
\end{equation}
where $\circ$ is the Hadamard (i.e. element-wise) product.
In this paper, we focus on pruning at initialization. The mask is typically created by using a vector $g^l$ of the same dimension as $W^l$ using a mapping of choice (see below), we then prune the network by keeping the weights that correspond to the top $k$ values in the sequence $(g^l_i)_{i,l}$ where $k$ is fixed by the sparsity that we want to achieve. There are three popular types of criteria in the literature :

$\bullet$ {\bf Magnitude based pruning (MBP)}: We prune weights based on the magnitude $|W|$.

$\bullet$ {\bf Sensitivity based pruning (SBP)}: We prune the weights based on the values of $|W \frac{\partial \mathcal{L}}{\partial W}|$ where $\mathcal{L}$ is the loss. This is motivated by $
\mathcal{L}_{W} \approx \mathcal{L}_{W=0} + W \frac{\partial \mathcal{L}}{\partial W}
$ used in SNIP (\cite{lee_snip}).

$\bullet$ {\bf Hessian based pruning (HBP)}: We prune the weights based on some function that uses the Hessian of the loss function as in GraSP \citep{wangGRASP}.

In the remainder of the paper, we focus exclusively on SBP while our analysis of MBP is given in Appendix \ref{section:MBP}. We leave HBP for future work. However, we include empirical results with GraSP \citep{wangGRASP} in Section \ref{section:experiments}.

Hereafter, we denote by $s$ the sparsity, i.e. the fraction of weights we want to prune. Let $A_l$ be the set of indices of the weights in the $l^{\text{th}}$ layer that are pruned, i.e. $A_l = \{ i \in [1:M_{l}], \text{ s.t. } \delta^l_{i}=0\}
$. We define the critical sparsity $s_{cr}$ by 
$$
s_{cr} = \min\{s\in(0,1) ,\text{ s.t. } \exists l, |A_l|=M_l\},
$$
where $|A_l|$ is the cardinality of $A_l$. Intuitively, $s_{cr}$ represents the maximal sparsity we are allowed to choose without fully pruning at least one layer. $s_{cr}$ is random as the weights are initialized randomly. Thus, we study the behaviour of the expected value $\mathbb{E}[s_{cr}]$ where, hereafter, {\bf all expectations are taken w.r.t. to the random initial weights}. This provides theoretical guidelines for pruning at initialization. 

For all $l \in [1: L]$, we define $\alpha_l$ by $v_l = \alpha_l N$ where $N>0$, and $\zeta_l>0$ such that $M_l = \zeta_l N^2$, where we recall that $v_l$ is a scaling factor controlling the variance of $y^l$ and $M_l$ is the number of weights in the $l^{\text{th}}$ layer. This notation assumes that, in each layer, the number of weights is quadratic in the number of neurons, which is satisfied by classical FFNN and CNN architectures.

\subsection{Sensitivity-Based Pruning (SBP)}
SBP is a data-dependent pruning method that uses the data to compute the gradient \emph{with} backpropagation at initialization (one-shot pruning).
We randomly sample a batch and compute the gradients of the loss with respect to each weight. The mask is then defined by $\delta^l_{i} = \mathbb{I}(|W^l_{i} \frac{\partial\mathcal{L}}{\partial W^l_i}| \geq t_s)$,
where $t_s = |W \frac{\partial\mathcal{L}}{\partial W}|^{(k_s)}$ and  $k_s = (1-s)\sum_l M_l$ and $|W\frac{\partial\mathcal{L}}{\partial W}|^{(k_s)}$ is the $k_s^{\text{th}}$ order statistics of the sequence $(|W^l_i\frac{\partial\mathcal{L}}{\partial W^l_i}|)_{1 \leq l \leq L, 1\leq i \leq M_l}$.
 
However, this simple approach suffers from the well-known exploding/vanishing gradients problem which renders the first/last few layers respectively susceptible to be completely pruned. We give a formal definition to this problem.
\begin{definition}[Well-conditioned \& ill-conditioned NN]\label{def:cond}
Let $m_l = \mathbb{E}[|W^l_{1} \frac{\partial\mathcal{L}}{\partial W^l_1}|^2]$ for $l\in [1:L]$. We say that the NN is well-conditioned if there exist $A,B >0$ such that for all $ L\geq1$ and $l \in [1:L]$ we have $A \leq m_l/m_L \leq B $, and it is ill-conditioned otherwise.
\end{definition}
Understanding the behaviour of gradients at initialization is thus crucial for SBP to be efficient. Using a mean-field approach, such analysis has been carried out in \citep{samuel,hayou,xiao_cnnmeanfield, poole, yang_scaling_limits} where it has been shown that an initialization known as the EOC is beneficial for DNN training. The mean-field analysis of DNNs relies on two standard approximations that we will also use here.
\begin{approximation}[Mean-Field Approximation]\label{approximation:infinite_width}
When $N_l\gg 1$ for FFNN or $n_l\gg 1$ for CNN, we use the approximation of infinitely wide NN. This means infinite number of neurons per layer for fully connected layers and infinite number of channels per layer for convolutional layers.
\end{approximation}
\begin{approximation}[Gradient Independence]\label{approximation:gradient_independence}
The weights used for forward propagation are independent from those used for back-propagation. 
\end{approximation}
These two approximations are ubiquitous in literature on the mean-field analysis of neural networks. They have been used to derive theoretical results on signal propagation \citep{samuel, hayou, poole, yang_scaling_limits, yang2017, yang2018a} and are also key tools in the derivation of the Neural Tangent Kernel \citep{jacot, arora2019exact, hayou2020_ntk}. Approximation \ref{approximation:infinite_width} simplifies the analysis of the forward propagation as it allows the derivation of closed-form formulas for covariance propagation. Approximation \ref{approximation:gradient_independence} does the same for back-propagation. See Appendix \ref{appendix:mean_field_assumptions} for a detailed discussion of these approximations. Throughout the paper, we provide numerical results that substantiate the theoretical results that we derive using these two approximations. We show that these approximations lead to excellent match between theoretical results and numerical experiments.

{\bf Edge of Chaos (EOC):}  For inputs $x,x'$, let $c^l(x,x')$ be the correlation between $y^l(x)$ and $y^l(x')$. From \citep{samuel,hayou}, there exists a so-called correlation function $f$ that depends on $(\sigma_w,\sigma_b)$ such that $c^{l+1}(x,x') = f(c^l(x,x'))$. Let $\chi(\sigma_b,\sigma_w) = f'(1)$. The EOC is the set of hyperparameters $(\sigma_w, \sigma_b)$ satisfying $\chi(\sigma_b,\sigma_w)=1$. When $\chi(\sigma_b, \sigma_w)>1$, we are in the Chaotic phase, the gradient explodes and $c^l(x,x')$ converges exponentially to some $c<1$ for $x \neq x'$ and the resulting output function is discontinuous everywhere. When $\chi(\sigma_b, \sigma_w)<1$, we are in the Ordered phase where $c^l(x,x')$ converges exponentially fast to $1$ and the NN outputs constant functions. Initialization on the EOC allows for better information propagation (see Supplementary for more details).
 
Hence, by leveraging the above results, we show that an initialization outside the EOC will lead to an ill-conditioned NN.
\begin{thm}[EOC Initialization is crucial for SBP]\label{thm:sensitivity_based_pruning}
Consider a NN of type (\ref{equation:ffnn_net}) or (\ref{equation:convolutional_net}) (FFNN or CNN). Assume $(\sigma_w, \sigma_b)$ are chosen on the ordered phase, i.e. $\chi(\sigma_b, \sigma_w) < 1$, then the NN is ill-conditioned. Moreover, we have 
    \begin{equation*}
     \mathbb{E}[s_{cr}] \leq  \frac{1}{L} \left(1 + \frac{\log(\kappa L N^2)}{\kappa}\right) + \mathcal{O}\left(\frac{1}{\kappa^2 \sqrt{LN^2}}\right),
 \end{equation*}
 where $\kappa =|\log \chi(\sigma_b, \sigma_w)|/8$. If $(\sigma_w, \sigma_b)$ are on the EOC, i.e. $\chi(\sigma_b, \sigma_w)= 1$,  then the NN is well-conditioned. In this case, $\kappa = 0$ and the above upper bound no longer holds.
\end{thm}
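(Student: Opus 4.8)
The plan is to reduce everything to controlling the quantities $m_l=\mathbb{E}[|W^l_1\,\partial\mathcal{L}/\partial W^l_1|^2]$ from Definition~\ref{def:cond}, and then to convert a quantitative gap between $m_1$ and $m_L$ into a bound on the sparsity at which the first layer alone gets fully pruned. \textbf{Step 1 (mean-field formula for $m_l$).} Using Approximations~\ref{approximation:infinite_width} and~\ref{approximation:gradient_independence}, I would factor the gradient as $\partial\mathcal{L}/\partial W^l_{ij}=(\partial\mathcal{L}/\partial y^l_i)\,\phi(y^{l-1}_j)$ (and the analogous convolutional expression for CNNs), and use gradient independence to write $m_l\approx \mathbb{E}[(W^l_1)^2]\,\mathbb{E}[(\partial\mathcal{L}/\partial y^l)^2]\,\mathbb{E}[\phi(y^{l-1})^2]$. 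The forward quantities $\mathbb{E}[\phi(y^{l-1})^2]$ converge exponentially to a fixed point (standard EOC analysis, \citep{samuel,hayou,poole}), while the back-propagated second moment obeys a recursion whose multiplicative factor converges to $\chi(\sigma_b,\sigma_w)$; hence there are constants $0<c\le C$, depending only on $(\sigma_w,\sigma_b)$, with $c\,\chi^{L-l}\le m_l/m_L\le C\,\chi^{L-l}$.

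\textbf{Step 2 (ill-conditioning and the EOC dichotomy).} If $\chi<1$ then $m_1/m_L\le C\chi^{L-1}\to 0$ as $L\to\infty$, so no uniform constant $A>0$ as in Definition~\ref{def:cond} exists and the network is ill-conditioned. If $\chi=1$ the same estimates give $c\le m_l/m_L\le C$ uniformly, i.e. the network is well-conditioned; moreover $\kappa=|\log\chi|/8=0$ makes the right-hand side of the displayed inequality undefined, so there is nothing further to prove in that case.

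\textbf{Step 3 (bounding $\mathbb{E}[s_{cr}]$).} Since $s_{cr}$ is at most the sparsity $s^\star$ at which the first layer alone is fully pruned, it suffices to bound $\mathbb{E}[s^\star]=\int_0^1\mathbb{P}(s^\star>u)\,du$. Writing $Z^l_i=|W^l_i\,\partial\mathcal{L}/\partial W^l_i|$ and noting that layer $1$ survives pruning at sparsity $s$ iff at least one $Z^1_i$ lies among the top $k_s=(1-s)\sum_l M_l$ scores, a Markov-type bound on the rank of the best layer-$1$ score gives, for every $s$,
\[
\mathbb{P}(s^\star>s)\ \le\ \frac{M_1}{s\sum_l M_l}\sum_{l\ge 2}\sum_i \mathbb{P}\big(Z^l_i\le Z^1_1\big)\ +\ (\text{threshold-fluctuation term}).
\]
I would then split the inner sum according to how far layer $l$ is from layer $1$: for the $O(\log(\kappa LN^2)/\kappa)$ layers with $\chi^{l-1}$ not too small, bound $\mathbb{P}(Z^l_i\le Z^1_1)\le 1$ trivially; for the remaining ``far'' layers, $m_l\gg m_1$ by Step~1, so a lower-tail (small-ball) estimate for $Z^l_i$ combined with an upper-tail estimate for $Z^1_1$ makes $\mathbb{P}(Z^l_i\le Z^1_1)$ negligible. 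Using $M_l=\zeta_l N^2$ this yields $\mathbb{P}(s^\star>s)\lesssim \tfrac{1}{sL}\big(1+\log(\kappa LN^2)/\kappa\big)$; integrating and optimizing the cut-off gives the claimed leading terms, with the concentration error of the random threshold $t_s$ absorbed into the $\mathcal{O}(\kappa^{-2}(LN^2)^{-1/2})$ term.

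The main obstacle is the tail control in Step~3: the scores $Z^l_i$ are products of (approximately) Gaussian weights, a back-propagated gradient, and an activation value, so both the upper tail of $Z^1_1$ and --- more delicately --- the lower tail of $Z^l_i$ must be estimated uniformly across layers; the argument is cleanest for activations whose induced score law has a light lower tail, and the exponent $8$ in $\kappa=|\log\chi|/8$ reflects the slack lost when passing from the second moments $m_l$ to these tail bounds and the accompanying union bounds. A secondary technical point is showing that the empirical order statistic $t_s$ concentrates around its population value sharply enough to produce the stated $\mathcal{O}(\cdot)$ error rather than a larger fluctuation.
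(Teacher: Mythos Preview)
Your Steps~1 and~2 match the paper: under Approximations~\ref{approximation:infinite_width}--\ref{approximation:gradient_independence} (together with the degeneracy Approximation~\ref{approximation:degeneracy}, which you do not mention but which the paper invokes precisely for the simplification you sketch), one obtains $\mathbb{E}\big[(\partial\mathcal{L}/\partial W^l_{ij})^2\big]=A\,\chi^{L-l}$, whence the ordered phase is ill-conditioned and the EOC well-conditioned.

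Step~3 is where your argument diverges from the paper and where the gaps lie. First, the displayed Markov--rank inequality is internally inconsistent: you multiply by $M_1$ (a union bound over layer~$1$) yet compare against the \emph{single} score $Z^1_1$ rather than $\max_i Z^1_i$. With that reading, the ``near'' layers contribute $\tfrac{M_1\cdot(\text{near count})\cdot N^2}{s\,LN^2}=\tfrac{N^2(\text{near count})}{sL}$, too weak by a factor $N^2$. If instead one works directly with $\max_i Z^1_i$ (no $M_1$ prefactor) the arithmetic does give $\mathbb{P}(s^\star>s)\lesssim \tfrac{1}{sL}(1+\text{near count})$, but then integrating $\min(1,C/s)$ over $s\in(0,1)$ yields $C(1+\log(1/C))$, i.e.\ an extra factor $\log L$ compared with the claimed $\tfrac{1}{L}\big(1+\log(\kappa LN^2)/\kappa\big)$. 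So even the repaired rank argument does not recover the stated leading term.

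The paper avoids both issues by a different reduction. It conditions on the high-probability event
\[
B=\Big\{\forall (i,j):\ \big|\tfrac{\partial\mathcal{L}}{\partial W^1_{ij}}\big|\le \chi^{\frac{(1-\epsilon/4)L-1}{2}}\Big\}\cap\Big\{\forall\, l>1+\epsilon L,\ (i,j):\ \big|\tfrac{\partial\mathcal{L}}{\partial W^l_{ij}}\big|\ge \chi^{\frac{(1-\epsilon/2)L-1}{2}}\Big\},
\]
obtained from a Markov upper tail in layer~$1$ and a density-at-zero lower-tail bound in the far layers. On $B$ the gradient factors are controlled \emph{deterministically}, and the event $\{\max_{i,j}|W^1_{ij}\,\partial\mathcal{L}/\partial W^1_{ij}|<t_\epsilon^{(k_x)}\}$ reduces to $\{\max_{i,j}|W^1_{ij}|<\chi^{-\epsilon L/8}(t')_\epsilon^{(k_x)}\}$, where $(t')_\epsilon^{(k_x)}$ is the $k_x$th order statistic of the \emph{weights alone} over layers $l>1+\epsilon L$. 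This is now a pure Gaussian order-statistic problem, handled via the quantile machinery of the MBP analysis (Proposition~\ref{prop:expectation_scr_large_depth}) and the asymptotic $Q_{1-x}\sim\sqrt{-2\log x}$. The exponent $8$ in $\kappa$ is exactly the gap $\tfrac12\big(\tfrac12-\tfrac14\big)$ between the two gradient thresholds above, not a generic slack from tail/union bounds. Finally the free parameter is set to $\epsilon=\log(\kappa LN^2)/(\kappa L)$, which produces the leading term without an additional logarithm.

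In short: your tail-comparison intuition is correct, but the rank/Markov route as written loses either an $N^2$ or, once repaired, a $\log L$; the missing ingredient is the paper's conditioning step that strips out the gradients and reduces the question to Gaussian-weight order statistics.
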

The proof of Theorem \ref{thm:sensitivity_based_pruning} relies on the behaviour of the gradient norm at initialization. On the ordered phase, the gradient norm vanishes exponentially quickly as it back-propagates, thus resulting in an ill-conditioned network. We use another approximation for the sake of simplification of the proof (Approximation \ref{approximation:degeneracy} in the Supplementary) but the result holds without this approximation although the resulting constants would be a bit different.
 \begin{figure}
   \centering
   \subfigure[Edge of Chaos]{%
   \label{fig:magnitude_weight_tanh_eoc}
   \includegraphics[width=0.3\linewidth]{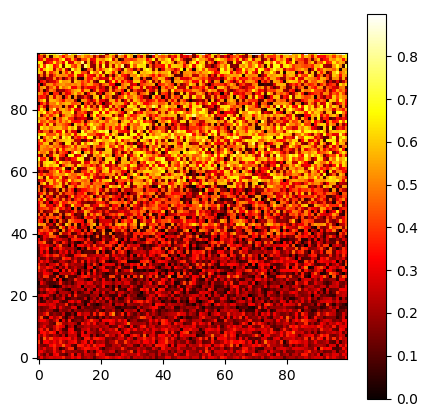}
   }%
   \subfigure[Chaotic phase]{%
   \label{fig:magnitude_weight_tanh_chaotic}
   \includegraphics[width=0.3\linewidth]{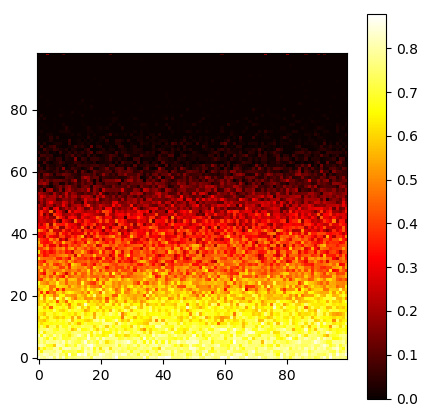}
   }%
 \caption{\small{Percentage of weights kept after SBP applied to a randomly initialized FFNN with depth 100 and width 100 for $70\%$ sparsity on MNIST. Each pixel $(i,j)$ corresponds to a neuron and shows the proportion of connections to neuron $(i,j)$ that have not been pruned. The EOC (a) allows us to preserve a uniform spread of the weights, whereas the Chaotic phase (b), due to exploding gradients, prunes entire layers.}}
 \label{fig:eoc_chaotic_tanh_pruning}
 \end{figure}
Theorem \ref{thm:sensitivity_based_pruning} shows that the upper bound decreases the farther $\chi(\sigma_b, \sigma_w)$ is from $1$, i.e. the farther the initialization is from the EOC. For constant width FFNN with $L=100$, $N=100$ and $\kappa = 0.2$, the theoretical upper bound is $\mathbb{E}[s_{cr}]\lessapprox 27\%$ while we obtain $\mathbb{E}[s_{cr}] \approx 22\%$ based on 10 simulations. A similar result can be obtained when the NN is initialized on the chaotic phase; in this case too, the NN is ill-conditioned. To illustrate these results, Figure \ref{fig:eoc_chaotic_tanh_pruning} shows the impact of the initialization with sparsity $s=70\%$. 
The dark area in Figure \ref{fig:magnitude_weight_tanh_chaotic} corresponds to layers that are fully pruned in the chaotic phase due to exploding gradients. Using an EOC initialization, Figure \ref{fig:magnitude_weight_tanh_eoc} shows that pruned weights are well distributed in the NN, ensuring that no layer is fully pruned.

\subsection{Training Pruned Networks Using the Rescaling Trick}
We have shown previously that an initialization on the EOC is crucial for SBP. However, we have not yet addressed the key problem of training the resulting pruned NN. This can be very challenging in practice \citep{Li_diffuculty}, especially for deep NN. 

Consider as an example a FFNN architecture. After pruning, we have for an input $x$
\begin{align}\label{equation:pruned_version}
\hat{y}^l_i(x) = \textstyle{\sum}_{j=1}^{N_{l-1}} W^l_{ij} \delta^l_{ij} \phi(\hat{y}^{l-1}_j(x)) + B^l_i, \quad \mbox{for } l\geq 2,
\end{align}
where $\delta$ is the pruning mask. While the original NN initialized on the EOC was satisfying $c^{l+1}(x,x') = f(c^l(x,x'))$ for $f'(1)=\chi(\sigma_b,\sigma_w)=1$, the pruned architecture leads to  $\hat{c}^{l+1}(x,x') = f_{\text{pruned}}(\hat{c}^l(x,x'))$ with $f_{\text{pruned}}'(1)\neq1$, hence  \emph{pruning destroys the EOC}.  Consequently, the pruned NN will be difficult to train \citep{samuel,hayou} especially if it is deep. Hence, we propose to bring the pruned NN back on the EOC. This approach consists of rescaling the weights obtained after SBP in each layer by factors that depend on the pruned architecture itself.


\begin{prop}[Rescaling Trick]\label{prop:resclaing_trick}
Consider a NN of type (\ref{equation:ffnn_net}) or (\ref{equation:convolutional_net}) (FFNN or CNN) initialized on the EOC. Then, after pruning, the pruned NN is not initialized on the EOC anymore. However, the rescaled pruned NN 
\begin{equation}
\begin{aligned}
y^l(x) =\mathcal{F}(\rho^{l} \circ \delta^l \circ W^l, y^{l-1}(x)) + B^l, \quad \mbox{for } l\geq1,
\end{aligned}
\end{equation}
where 
\begin{equation}\label{eq:scalingfactors}
    \rho^l_{ij} =  (\mathbb{E}[N_{l-1}(W^l_{i1})^2 \delta^l_{i1}])^{-\tfrac{1}{2}}~\text{for FFNN },~~\rho^l_{i,j,\beta} = (\mathbb{E}[n_{l-1}(W^l_{i,1,\beta})^2 \delta^l_{i,1,\beta}])^{-\tfrac{1}{2}}~\text{for CNN},
\end{equation}
is initialized on the EOC. (The scaling is constant across $j$).
\end{prop}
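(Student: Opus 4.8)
The plan is to treat the EOC as a pure forward-propagation property, exactly as recalled just before the statement: writing $q^l$ for the variance of the pre-activations $y^l_i$ and $c^l$ for the correlation, one has $c^{l+1}=f(c^l)$ where the map $f$ is completely determined by the variance fixed point $q^\ast$ and by the \emph{effective weight variance} of a layer, i.e.\ the constant $\sigma_w^2$ that appears as $\sum_{j=1}^{N_{l-1}}\E[(W^l_{ij})^2]=N_{l-1}\cdot\sigma_w^2/N_{l-1}$ in the recursion $q^{l}=\bigl(\sum_j\E[(W^l_{ij})^2]\bigr)\,\E_{z\sim\mathcal N(0,q^{l-1})}[\phi(z)^2]+\sigma_b^2$; the EOC condition is $f'(1)=\chi(\sigma_b,\sigma_w)=1$. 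So the entire argument is bookkeeping of what replaces this effective weight variance after masking and after rescaling. I would carry out the FFNN case in detail; the CNN case in \eqref{eq:scalingfactors} follows from the identical computation with the channel count $n_{l-1}$ and a fixed filter offset $\beta$ in place of $N_{l-1}$.

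\textbf{Step 1 (pruning leaves the EOC).} Apply Approximations~\ref{approximation:infinite_width}--\ref{approximation:gradient_independence} to the pruned network \eqref{equation:pruned_version}. In the infinite-width limit the data-dependent threshold $t_s$ concentrates to a deterministic constant, so conditionally on it the surviving weights $(\delta^l_{ij}W^l_{ij})_j$ become exchangeable with common second moment $m^l:=\E[\delta^l_{i1}(W^l_{i1})^2]$, and $\hat y^l_i$ is again asymptotically Gaussian with $\hat q^l=\bigl(N_{l-1}m^l\bigr)\,\E_{z\sim\mathcal N(0,\hat q^{l-1})}[\phi(z)^2]+\sigma_b^2$. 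Since $\delta^l_{ij}\in\{0,1\}$ we have $m^l\le\sigma_w^2/N_{l-1}$, with strict inequality whenever layer $l$ is pruned at all, so the effective weight variance $N_{l-1}m^l$ is strictly below $\sigma_w^2$; feeding a strictly smaller effective variance into $f$ (which is increasing in that parameter) pushes $f'(1)$ strictly below $1$, i.e.\ the pruned NN lands in the ordered phase. This is the precise sense in which ``pruning destroys the EOC''.

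\textbf{Step 2 (rescaling restores it).} For the rescaled network the effective weight at position $(i,j)$ is $\rho^l_{ij}\delta^l_{ij}W^l_{ij}$; by exchangeability across $j$ its second moment equals $(\rho^l_{ij})^2 m^l$, the same for every $j$, so the layer's effective weight variance is $N_{l-1}(\rho^l_{ij})^2 m^l$. Substituting $\rho^l_{ij}=(\E[N_{l-1}(W^l_{i1})^2\delta^l_{i1}])^{-1/2}=(N_{l-1}m^l)^{-1/2}$ makes this coefficient equal to the one appearing in the unpruned (EOC) recursion, so the rescaled variance recursion, its fixed point $q^\ast$, and hence the whole correlation map $f$ and $\chi=f'(1)=1$ coincide with those of the original network: the rescaled pruned NN is on the EOC. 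The fact that $\rho^l$ depends on $j$ only through $m^l$, which is $j$-independent, is exactly the exchangeability statement, which is why the parenthetical ``the scaling is constant across $j$'' appears; the CNN identity in \eqref{eq:scalingfactors} is the same computation with $\beta$ fixed.

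The main obstacle is Step~1's reduction to a CLT for $\hat y^l_i$: unlike the unpruned case the surviving weights of a layer are not independent, since they share the global order statistic $t_s$ and each $\delta^l_{ij}$ is correlated with $W^l_{ij}$ through $\partial\mathcal L/\partial W^l_{ij}$, itself a back-propagated quantity. Making this rigorous requires (a) concentration of $t_s$, so that conditionally the mask is a measurable function of each individual weight and the $(\delta^l_{ij}W^l_{ij})_j$ become conditionally i.i.d.; (b) the extra approximation flagged after Theorem~\ref{thm:sensitivity_based_pruning} (Approximation~\ref{approximation:degeneracy} in the Supplementary) to control the joint law of a weight and its gradient; and (c) uniformity over the output index $i$, so that $\sum_j\delta^l_{ij}(W^l_{ij})^2$ concentrates around $N_{l-1}m^l$ simultaneously for all $i$ and a single deterministic $\rho^l$ rescales every neuron correctly. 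Granting these, Steps~1--2 are routine second-moment computations.
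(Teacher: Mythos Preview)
Your proof is correct and takes essentially the same route as the paper: track the forward covariance recursion of the masked layer, read off the effective weight variance $\alpha_l=N_{l-1}\E[(W^l_{i1})^2\delta^l_{i1}]$, observe $\alpha_l<\sigma_w^2$ so the pruned network sits in the ordered phase, and check that multiplying by $\rho^l=\alpha_l^{-1/2}$ restores the unpruned recursion (hence its fixed point $q$ and $\chi=1$). The one place you diverge is your obstacle~(b): the paper does \emph{not} appeal to Approximation~\ref{approximation:degeneracy} here but relies on Approximation~\ref{approximation:gradient_independence} alone---gradient independence gives $W^l_{ij}\perp \partial\mathcal L/\partial W^l_{ij}$, from which the factorization $\E[(W^l_{ij})^2\delta^l_{ij}\,\phi(\hat y^{l-1}_j(x))\phi(\hat y^{l-1}_j(x'))]=\E[(W^l_{ij})^2\delta^l_{ij}]\,\E[\phi\phi]$ is asserted directly---so your concentration-of-$t_s$ and degeneracy machinery is more than the paper actually deploys.
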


%

The scaling factors in \eqref{eq:scalingfactors} are easily approximated using the weights kept after pruning. Algorithm 1 (see Appendix \ref{app:algo}) details a practical implementation of this rescaling technique for FFNN. We illustrate experimentally the benefits of this approach in Section \ref{section:experiments}.

\section{Sensitivity-Based Pruning for Stable Residual Networks}
Resnets and their variants \citep{He_resnetL,huang_densenet} are currently the best performing models on various classification tasks (CIFAR10, CIFAR100, ImageNet etc \citep{kolesnikov2019large}). Thus, understanding Resnet pruning at initialization is of crucial interest. \cite{yang2017} showed that Resnets naturally `live' on the EOC. Using this result, we show that Resnets are actually better suited to SBP than FFNN and CNN. However, Resnets suffer from an exploding gradient problem \citep{yang2017} which might affect the performance of SBP. We address this issue by introducing a new Resnet parameterization. 
Let a standard Resnet architecture be given by 
\begin{equation}\label{equation:Resnet_dynamics}
\begin{aligned}
y^1(x) &= \mathcal{F}(W^1, x),~~y^l(x) &= y^{l-1}(x) + \mathcal{F}(W^l, y^{l-1}), \quad \mbox{for } l\geq 2,
\end{aligned}
\end{equation}
where $\mathcal{F}$ defines the blocks of the Resnet. Hereafter, we assume that $\mathcal{F}$ is either of the form (\ref{equation:ffnn_net}) or (\ref{equation:convolutional_net}) (FFNN or CNN).

The next theorem shows that Resnets are well-conditioned independently from the initialization and are thus well suited for pruning at initialization. 
\begin{thm}[Resnet are Well-Conditioned]\label{thm:resnet_well_conditioned}
Consider a Resnet with either Fully Connected or Convolutional layers and ReLU activation function. Then for all $\sigma_w>0$,  the Resnet is well-conditioned. Moreover, for all $l \in \{1, ..., L\}$, we have $m^l = \Theta((1 + \frac{\sigma_w^2}{2})^L)$. 
\end{thm}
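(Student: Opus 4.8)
The plan is to propagate two scalar mean-field quantities through the residual network and then combine them. Working under Approximations \ref{approximation:infinite_width} and \ref{approximation:gradient_independence}, let $q^l := \mathbb{E}[(y^l_i)^2]$ be the forward second moment and $\tilde q^l := \mathbb{E}[(\partial\mathcal{L}/\partial y^l_i)^2]$ the backward one; by the mean-field symmetry neither depends on the neuron/channel index $i$. First I would establish the forward recursion: expanding $y^l_i = y^{l-1}_i + \mathcal{F}(W^l,y^{l-1})_i$, the cross term drops because $W^l$ is centred and independent of $y^{l-1}$, and for a ReLU block of type (\ref{equation:ffnn_net}) or (\ref{equation:convolutional_net}) the i.i.d.\ weights of variance $\sigma_w^2/v_l$ give $\mathbb{E}[\mathcal{F}(W^l,y^{l-1})_i^2] = \sigma_w^2\,\mathbb{E}[\phi(y^{l-1}_j)^2] = \tfrac{\sigma_w^2}{2}q^{l-1}$, using $\mathbb{E}[\phi(Z)^2] = \tfrac12\mathbb{E}[Z^2]$ for centred Gaussian $Z$. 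Hence $q^l = (1+\tfrac{\sigma_w^2}{2})q^{l-1}$, so $q^l = \Theta\big((1+\tfrac{\sigma_w^2}{2})^l\big)$ with constants depending on $x$ and $\sigma_w$ only, not on $L$.

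Next I would derive the backward recursion. Differentiating $y^l_i = y^{l-1}_i + \sum_j W^l_{ij}\phi(y^{l-1}_j)$ gives $\partial\mathcal{L}/\partial y^{l-1}_j = \partial\mathcal{L}/\partial y^l_j + \phi'(y^{l-1}_j)\sum_i W^l_{ij}\,\partial\mathcal{L}/\partial y^l_i$ (with the obvious analogue summing over channels and filter offsets for CNNs). Squaring and taking expectations, the cross term vanishes by Approximation \ref{approximation:gradient_independence} (the back-propagation copy of $W^l$ is centred and independent), and the quadratic term evaluates to $\tfrac{\sigma_w^2}{2}\tilde q^l$ after using $\mathbb{E}[\phi'(Z)^2] = \mathbb{P}(Z>0) = \tfrac12$ and the fact that, for the architectures considered, the number of terms in the $i$-sum equals $v_l$ (e.g.\ $N_l = N_{l-1}$ under constant width, and the matching channel count for CNNs). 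Therefore $\tilde q^{l-1} = (1+\tfrac{\sigma_w^2}{2})\tilde q^l$, i.e.\ $\tilde q^l = \Theta\big((1+\tfrac{\sigma_w^2}{2})^{L-l}\big)$ with constants independent of $L$.

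Finally I would combine the two. Since $\partial\mathcal{L}/\partial W^l_{ij} = (\partial\mathcal{L}/\partial y^l_i)\,\phi(y^{l-1}_j)$ and $W^l_{ij}$ is independent of both factors (of the forward one by construction, of the backward one by Approximation \ref{approximation:gradient_independence}), we get $m_l = \mathbb{E}[(W^l_{ij})^2]\,\tilde q^l\,\mathbb{E}[\phi(y^{l-1}_j)^2] = \tfrac{\sigma_w^2}{2 v_l}\,\tilde q^l\,q^{l-1}$. Substituting the two estimates, the exponents telescope: $m_l = \Theta\big((1+\tfrac{\sigma_w^2}{2})^{L-l}(1+\tfrac{\sigma_w^2}{2})^{l-1}\big) = \Theta\big((1+\tfrac{\sigma_w^2}{2})^{L}\big)$, uniformly in $l$. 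This is precisely the ``moreover'' statement, and dividing by $m_L$ gives constants $0<A\le m_l/m_L\le B$ that do not depend on $L$, so the Resnet is well-conditioned in the sense of Definition \ref{def:cond} for every $\sigma_w>0$. For CNNs the same telescoping holds after bookkeeping with $v_l = n_{l-1}(2k+1)$, $M_l = n_{l-1}n_l(2k+1)$, and the spatial sum defining $\partial\mathcal{L}/\partial W^l_{i,j,\beta}$; constant factors, even ones involving $N$, cancel in the ratio $m_l/m_L$.

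I expect the main obstacle to be the uniformity of the $\Theta$-constants rather than the recursions themselves: one must control the boundary terms — the first block $y^1 = \mathcal{F}(W^1,x)$ has no skip connection, and $\tilde q^L$ is governed by the loss and the read-out layer rather than by the recursion — and check that the constants absorbing these effects stay bounded away from $0$ and $\infty$ \emph{independently of $L$}; this is exactly what distinguishes ``well-conditioned'' in the sense of Definition \ref{def:cond} from the trivial statement that $m_l/m_L$ is finite for each fixed $L$. A secondary point is that the clean cancellation of the $i$-count against $v_l$ uses $N_l/N_{l-1}=1$, so for genuinely varying widths the argument should either assume constant width (as the paper does for CNNs) or carry the width ratios along. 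As in Theorem \ref{thm:sensitivity_based_pruning}, I would also invoke a mild additional regularity simplification on the output-layer gradient so that the mean-field factorisations above become exact.
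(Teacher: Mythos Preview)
Your proposal follows the same high-level strategy as the paper --- derive the exact geometric forward recursion $q^l=(1+\tfrac{\sigma_w^2}{2})q^{l-1}$, derive a backward recursion for $\tilde q^l$, and telescope --- but it takes a genuinely simpler route because you work with a \emph{single input} throughout. Your backward step uses $\mathbb{E}[\phi'(Z)^2]=\tfrac12$, so $\tilde q^{l-1}=(1+\tfrac{\sigma_w^2}{2})\tilde q^l$ is an exact geometric recursion and the cancellation with $q^{l-1}$ is clean.

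The paper, by contrast, treats the batch gradient $\partial\mathcal{L}/\partial W^l_{ij}=|\mathcal D|^{-1}\sum_{x\in\mathcal D}(\partial\mathcal L/\partial y^l_i(x))\,\phi(y^{l-1}_j(x))$, whose square produces cross terms $t^l_{x,x'}=\tilde q^l(x,x')\sqrt{q^l(x)q^l(x')}\,f(c^{l-1}(x,x'))$ for $x\neq x'$. For these the backward multiplier is $1+\tfrac{\sigma_w^2}{2}f'(c^l(x,x'))$ with $f'(c^l)<1$, so your telescoping no longer holds exactly; the paper bounds the ratio $\prod_{l=k}^{L-1}\bigl(1+\tfrac{\sigma_w^2}{2}f'(c^l)\bigr)/(1+\tfrac{\sigma_w^2}{2})^{L-k}$ via the ResNet correlation asymptotics $f'(c^l(x,x'))=1-l^{-1}+o(l^{-1})$ (coming from $1-c^l\sim l^{-2}$, cf.\ Proposition~\ref{prop:resnet_live_on_eoc}). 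For the convolutional case the paper goes further still, packaging the spatial gradient covariances $\tilde q^l_{\alpha,\alpha'}$ into a vector $K_l$ and controlling the circulant recursion matrix spectrally, rather than reducing to a scalar recursion as you suggest.

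What your simplification buys is a much cleaner argument that is fully correct when $|\mathcal D|=1$ (and it gives the diagonal part of $m_l$ in general). What it omits is the control of the off-diagonal batch terms, which is the actual extra content of the paper's proof; your closing remark about ``a mild additional regularity simplification on the output-layer gradient'' does not cover this, since the issue is the $x\neq x'$ coupling at \emph{every} layer, not a boundary effect. If you want your write-up to match the paper's generality, either state $|\mathcal D|=1$ explicitly or add the cross-term analysis via $f$ and $c^l$.
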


The above theorem proves that Resnets are always well-conditioned. However, taking a closer look at $m^l$, which represents the variance of the pruning criterion (Definition \ref{def:cond}), we see that it grows exponentially in the number of layers $L$. Therefore, this could lead to a `higher variance of pruned networks' and hence high variance test accuracy. To this end, we propose a Resnet parameterization which we call Stable Resnet. Stable Resnets prevent the second moment from growing exponentially as shown below.

\begin{prop}[Stable Resnet]\label{prop:stable_resnet}
Consider the following Resnet parameterization
\begin{equation}\label{equation:Resnet_scaled}
y^l(x) = y^{l-1}(x) + \tfrac{1}{\sqrt{L}}\mathcal{F}(W^l, y^{l-1}), \quad \mbox{for } l\geq 2,
\end{equation}
then the NN is well-conditioned for all $\sigma_w>0$. Moreover, for all $l \leq L$ we have $m^l = \Theta(L^{-1})$.
\end{prop}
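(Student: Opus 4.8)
The plan is to follow the proof of Theorem~\ref{thm:resnet_well_conditioned}, but to carry the factor $1/\sqrt{L}$ attached to each residual block through both the forward and the backward recursions. Write $q^l := \mathbb{E}[(y^l_1(x))^2]$ for the forward second moment; by the iid initialization (and the periodic boundary conditions in the CNN case) this is independent of the neuron/channel index, and since the residual increments have mean-zero weights $y^l$ stays centered and, under Approximation~\ref{approximation:infinite_width}, asymptotically Gaussian, so that $\mathbb{E}[\phi(y^{l-1}_1)^2]=q^{l-1}/2$ for ReLU. Substituting (\ref{equation:Resnet_scaled}) into the second moment, the cross term vanishes because $W^l$ is independent of $y^{l-1}$, and the squared increment contributes $\tfrac1L\cdot\tfrac{\sigma_w^2}{2}q^{l-1}$ (the weight-variance normalization $v_l$ cancels the sum over the fan-in, identically for fully connected blocks and for convolutional blocks once the kernel sum is absorbed). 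Hence
\begin{equation*}
q^l = q^{l-1}\Bigl(1+\tfrac{\sigma_w^2}{2L}\Bigr),\qquad 2\le l\le L,
\end{equation*}
so $q^l = q^1(1+\tfrac{\sigma_w^2}{2L})^{l-1}$, and because $1\le (1+\tfrac{\sigma_w^2}{2L})^{L}\le e^{\sigma_w^2/2}$ for every $L$, the $q^l$ are bounded above and below by positive constants independent of $L$.

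Next I would run the identical computation on the back-propagated gradient. With $\tilde q^l:=\mathbb{E}[\|\partial\mathcal{L}/\partial y^l\|^2]$, differentiating (\ref{equation:Resnet_scaled}) gives $\partial\mathcal{L}/\partial y^{l-1} = \partial\mathcal{L}/\partial y^{l} + \tfrac1{\sqrt L}(\partial\mathcal{F}/\partial y^{l-1})^{\top}\partial\mathcal{L}/\partial y^{l}$; squaring, expanding, and using Approximation~\ref{approximation:gradient_independence} (the Jacobian weights are treated as fresh copies independent of $\partial\mathcal{L}/\partial y^l$) together with $\mathbb{E}[\phi'(y)^2]=1/2$ for ReLU, the cross term again drops and
\begin{equation*}
\tilde q^{l-1} = \tilde q^{l}\Bigl(1+\tfrac{\sigma_w^2}{2L}\Bigr),
\end{equation*}
whence $\tilde q^l = \tilde q^L(1+\tfrac{\sigma_w^2}{2L})^{L-l}$, which by the same inequality is bounded above and below by positive constants independent of $L$.

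Finally I would assemble $m^l$. Since $\partial\mathcal{L}/\partial W^l_{i} = \tfrac1{\sqrt L}(\partial\mathcal{L}/\partial y^l_i)\,\phi(y^{l-1}_{\cdot})$, the same two approximations let $m^l = \mathbb{E}[|W^l_1\partial\mathcal{L}/\partial W^l_1|^2]$ (Definition~\ref{def:cond}) factor as $\tfrac1L\,\mathbb{E}[(W^l_1)^2]\,\mathbb{E}[(\partial\mathcal{L}/\partial y^l_1)^2]\,\mathbb{E}[\phi(y^{l-1}_1)^2] = \tfrac{c}{L}\,\tilde q^l\,q^{l-1}$, with $c>0$ depending only on $\sigma_w$ and the (fixed) widths. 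By the two recursions, $\tilde q^l q^{l-1} = \tilde q^L q^1(1+\tfrac{\sigma_w^2}{2L})^{L-1}$ is \emph{independent of $l$} and lies in $[\,\tilde q^L q^1,\, e^{\sigma_w^2/2}\tilde q^L q^1\,]$; therefore $m^l = \Theta(L^{-1})$ uniformly over $l$, and in particular $m^l/m^L$ is bounded above and below by constants independent of $L$, i.e. the network is well-conditioned in the sense of Definition~\ref{def:cond}. The embedding layer $l=1$ is handled exactly as in Theorem~\ref{thm:resnet_well_conditioned}.

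As in Theorem~\ref{thm:resnet_well_conditioned}, the real work is not this algebra but the rigorous control of the two recursions: showing that the Gaussian mean-field description of $y^{l-1}$ and the gradient-independence heuristic introduce only $o(1)$ corrections that do not compound over the $L$ layers, and, for convolutional blocks, handling the residual spatial correlations between neurons of the same channel (which enter because the kernel weights are shared across locations). The one genuinely new point here is checking that these error terms survive the $1/\sqrt L$ rescaling; this is immediate, since the rescaling makes each per-layer perturbation $O(1/L)$ rather than $O(1)$, so the product $\prod_{l}(1+O(1/L))$ of corrections remains bounded.
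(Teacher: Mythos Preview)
Your proposal follows essentially the same route as the paper: carry the $1/\sqrt{L}$ through the forward recursion $q^l=(1+\sigma_w^2/(2L))q^{l-1}$ and the backward recursion $\tilde q^{l-1}=(1+\sigma_w^2/(2L))\tilde q^l$, then use $(1+\sigma_w^2/(2L))^L\to e^{\sigma_w^2/2}$ to keep everything bounded while the explicit $1/L$ in front of $m^l$ remains. The one place the paper does more work is that it computes the gradient over a batch $\mathcal{D}$, so $\mathbb{E}[(\partial\mathcal{L}/\partial W^l_{ij})^2]=\tfrac{1}{L|\mathcal{D}|^2}\sum_{x,x'}t^l_{x,x'}$ contains off-diagonal terms $t^l_{x,x'}=\tilde q^l(x,x')\sqrt{q^l(x)q^l(x')}f(c^{l-1}(x,x'))$ for $x\ne x'$; bounding the ratio $t^k_{x,x'}/t^L_{x,x'}$ then requires the correlation asymptotics $f'(c^l)=1-l^{-1}+o(l^{-1})$, exactly as in Theorem~\ref{thm:resnet_well_conditioned}. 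Your single-input factorization $m^l=\tfrac{c}{L}\tilde q^l q^{l-1}$ is correct for $|\mathcal{D}|=1$ and captures the mechanism, but sidesteps this correlation bookkeeping that the paper carries out.
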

\begin{figure*}
  \centering
  \label{fig:resnet_scaled}
   \includegraphics[width=0.27\linewidth]{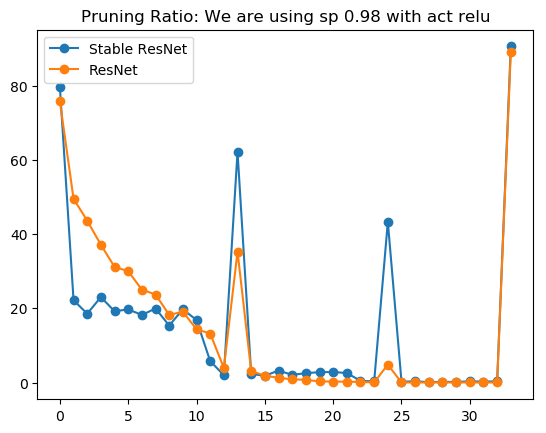}
   \includegraphics[width=0.27\linewidth]{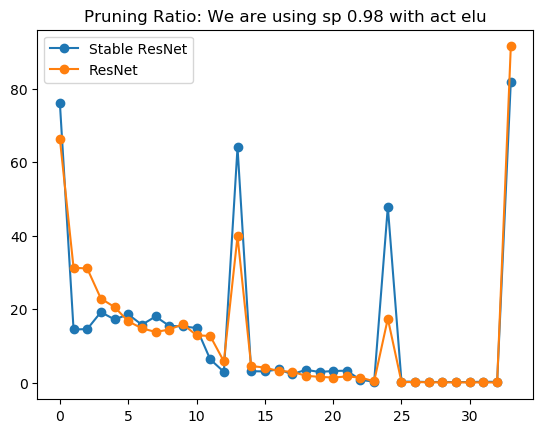}
   \includegraphics[width=0.27\linewidth]{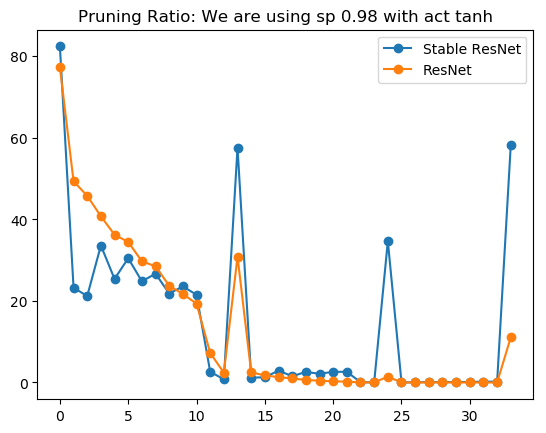}
\caption{\small{Percentage of non-pruned weights per layer in a ResNet32 for our Stable ResNet32 and standard Resnet32 with Kaiming initialization on CIFAR10. With Stable Resnet, we prune less aggressively weights in the deeper layers  than for standard Resnet.}}
\label{fig:stable_resnet_pruning_ratios}
\end{figure*}
In Proposition \ref{prop:stable_resnet}, $L$ is not the number of layers but the number of blocks. For example, ResNet32 has 15 blocks and 32 layers, hence $L=15$. Figure \ref{fig:stable_resnet_pruning_ratios} shows the percentage of weights in each layer kept after pruning ResNet32 and Stable ResNet32 at initialization. The jumps correspond to limits between sections in ResNet32 and are caused by max-pooling. Within each section, Stable Resnet tends to have a more uniform distribution of percentages of weights kept after pruning compared to standard Resnet. In Section \ref{section:experiments} we show that this leads to better performance of Stable Resnet compared to standard Resnet. Further theoretical and experimental results for Stable Resnets are presented in \citep{hayou2020stable}.

In the next proposition, we establish that, unlike FFNN or CNN, we do not need to rescale the pruned Resnet for it to be trainable as it lives naturally on the EOC before and after pruning.
\begin{prop}[Resnet live on the EOC even after pruning]\label{prop:resnet_live_on_eoc}
Consider a Residual NN with blocks of type FFNN or CNN. Then, after pruning, the pruned Residual NN is initialized on the EOC.
\end{prop}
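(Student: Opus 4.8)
The plan is to show that the correlation recursion for a residual network is essentially the identity map near $c=1$, both before and after pruning, so that the EOC condition $\chi = f'(1) = 1$ holds automatically. First I would recall the correlation recursion for the unpruned Resnet. Writing $q^l(x) = \mathbb{E}[(y^l_i(x))^2]$ for the variance and $c^l(x,x')$ for the normalized correlation between $y^l(x)$ and $y^l(x')$, the residual structure $y^l = y^{l-1} + \mathcal{F}(W^l, y^{l-1})$ combined with Approximations \ref{approximation:infinite_width} and \ref{approximation:gradient_independence} gives, for ReLU blocks, a recursion of the form $q^{l} = q^{l-1}(1 + \tfrac{\sigma_w^2}{2})$ and $q^l c^l = q^{l-1} c^{l-1} + q^{l-1}\,\sigma_w^2\, g(c^{l-1})$ where $g$ is the usual ReLU correlation kernel (the "arcsin"-type function) with $g(1) = 1/2$ and $g'(1) = 1/2$. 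Dividing, the correlation map is $c^l = f_{\mathrm{res}}(c^{l-1})$ with
\begin{equation*}
f_{\mathrm{res}}(c) = \frac{c + \sigma_w^2 g(c)}{1 + \sigma_w^2/2}.
\end{equation*}
Then $f_{\mathrm{res}}(1) = (1 + \sigma_w^2/2)/(1 + \sigma_w^2/2) = 1$, so $c=1$ is a fixed point, and $\chi = f_{\mathrm{res}}'(1) = (1 + \sigma_w^2 g'(1))/(1+\sigma_w^2/2) = (1 + \sigma_w^2/2)/(1 + \sigma_w^2/2) = 1$. This is exactly the statement that the Resnet lives on the EOC, recovering \cite{yang2017}.

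Next I would repeat the computation for the pruned block. After pruning, the $l$-th block becomes $\mathcal{F}(\delta^l \circ W^l, \cdot)$; the key point is that masking does not change the structure of the variance/correlation recursion, it only rescales the effective per-coordinate variance. Concretely, in the infinite-width limit the mask $\delta^l$ acts (by a law-of-large-numbers argument over the incoming connections, as in the proof of Proposition \ref{prop:resclaing_trick}) as multiplication of the contribution of the block by the retained fraction $p_l := \mathbb{E}[\delta^l_{i1}] \in (0,1]$ — or more precisely by $\mathbb{E}[v_l (W^l_{i1})^2 \delta^l_{i1}]/\sigma_w^2$, which I will call $\tilde\sigma_{w,l}^2 \le \sigma_w^2$. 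Hence the pruned recursion is $\hat q^l = \hat q^{l-1}(1 + \tilde\sigma_{w,l}^2/2)$ and $\hat q^l \hat c^l = \hat q^{l-1}\hat c^{l-1} + \hat q^{l-1}\tilde\sigma_{w,l}^2 g(\hat c^{l-1})$, so the pruned correlation map is
\begin{equation*}
f_{\mathrm{pruned}}^{(l)}(c) = \frac{c + \tilde\sigma_{w,l}^2\, g(c)}{1 + \tilde\sigma_{w,l}^2/2}.
\end{equation*}
The decisive observation is that this has the \emph{same algebraic form} as $f_{\mathrm{res}}$ with $\sigma_w^2$ replaced by $\tilde\sigma_{w,l}^2$, and the EOC property $f(1)=1$, $f'(1)=1$ was established above for \emph{every} value of the weight-variance parameter (using only $g(1)=g'(1)=1/2$). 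Therefore $f_{\mathrm{pruned}}^{(l)}(1) = 1$ and $(f_{\mathrm{pruned}}^{(l)})'(1) = 1$ for each $l$, so $\hat c^l \equiv 1$ is preserved and the slope at $1$ is $1$ at every layer; composing over layers, the pruned Resnet is on the EOC. For CNN blocks the only change is that $g$ is replaced by the corresponding convolutional correlation kernel, which still satisfies $g(1)=g'(1)=1/2$ under the periodic-boundary assumption, so the argument goes through verbatim. I would also remark that this is robust to the pruning being data-dependent: the mask $\delta^l$ is correlated with $W^l$, but all that enters is the nonnegative scalar $\tilde\sigma_{w,l}^2$, and any value in $(0,\sigma_w^2]$ gives EOC.

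The main obstacle is not the algebra — which is short — but justifying the reduction of the pruned block's effect to the single scalar $\tilde\sigma_{w,l}^2$ in the mean-field limit. This requires arguing that, conditionally on the mask, the sum $\sum_j W^l_{ij}\delta^l_{ij}\phi(\hat y^{l-1}_j)$ is still approximately Gaussian with variance governed by $\mathbb{E}[v_l(W^l_{i1})^2\delta^l_{i1}]\cdot \mathbb{E}[\phi(\hat y^{l-1}_1)^2]$, i.e. that the mask does not introduce problematic correlations across $j$ or destroy the CLT heuristic — this is precisely the kind of statement the paper has already been granting via Approximations \ref{approximation:infinite_width}–\ref{approximation:gradient_independence} and used in Proposition \ref{prop:resclaing_trick}, so I would invoke it in the same spirit, with a remark that the retained-fraction $p_l$ stays bounded away from $0$ in any layer that is not fully pruned (which holds below $s_{cr}$, and for Resnets $\mathbb{E}[s_{cr}]$ is large by Theorem \ref{thm:resnet_well_conditioned}'s well-conditioning). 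Once that reduction is accepted, the EOC conclusion is immediate from the form-invariance of $f$ noted above.
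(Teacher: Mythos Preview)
Your argument is correct for the FFNN case and captures the essential mechanism: the residual connection makes the correlation map a convex combination of the identity and the block kernel, so both the fixed point $c=1$ and the slope $f'(1)=1$ are automatic for \emph{any} effective block variance $\tilde\sigma_{w,l}^2>0$. That is exactly why pruning cannot push a Resnet off the EOC, and your reduction to the scalar $\tilde\sigma_{w,l}^2$ is the same one the paper makes (and, as you note, is already used in Proposition~\ref{prop:resclaing_trick}).

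The paper, however, takes a different and stronger route. Rather than just checking $f'(1)=1$, it proves the quantitative decay $1-\hat c^l(x,x')\sim \kappa/l^2$ for FFNN blocks (and $1-\hat c^l(x,x')\gtrsim l^{-2}$ for CNN blocks). This is done by writing the pruned recursion as $\hat c^l = \frac{1}{1+\lambda}\hat c^{l-1} + \frac{\lambda}{1+\lambda}f(\hat c^{l-1})$ with $\lambda=\alpha/2$, $\alpha=\mathbb{E}[N_{l-1}(W^l_{11})^2\delta^l_{11}]$, then feeding in the Taylor expansion $f(x)=x+\beta(1-x)^{3/2}+O((1-x)^{5/2})$ near $1$ and analysing $\zeta_l=1-\hat c^l$ to get $\zeta_l^{-1/2}\sim \tfrac{\eta}{2}l$. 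So your approach buys simplicity and makes the form-invariance transparent; the paper's approach buys an explicit sub-exponential rate, which is the operational signature distinguishing EOC from the ordered/chaotic phases (where the decay is exponential).

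One place where your write-up is too quick is the CNN case. The correlation there is multi-indexed, $c^l_{\alpha,\alpha'}(x,x')$, and after pruning the effective variances $\alpha_\beta=\mathbb{E}[n_{l-1}(W^l_{i,1,\beta})^2\delta^l_{i,1,\beta}]$ may differ across $\beta\in\mathrm{ker}$, so there is no single scalar map $f^{(l)}_{\mathrm{pruned}}$ to differentiate at $1$. The paper handles this by passing to an average correlation $\hat c^l(x,x')=\frac{\sum_{\alpha,\alpha'}\mathbb{E}[y^l_{1,\alpha}(x)y^l_{1,\alpha'}(x')]}{\sum_{\alpha,\alpha'}\sqrt{q^l_\alpha(x)q^l_{\alpha'}(x')}}$, relating it to a simpler quantity via a self-averaging lemma, and then using a power-sum inequality to control the $(1-c)^{3/2}$ terms. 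If you keep your slope-at-$1$ strategy for CNN blocks you would need to linearise the vector recursion and argue that the resulting matrix has spectral radius $1$ at the all-ones fixed point; that is doable, but it is not ``verbatim'' the scalar argument.
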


\section{Experiments}\label{section:experiments}
\vspace{-0.2cm}
In this section, we illustrate empirically the theoretical results obtained in the previous sections. We validate the results on MNIST, CIFAR10, CIFAR100 and Tiny ImageNet.
 \begin{figure}
   \centering
   \subfigure[EOC Init $\&$ Rescaling]{%
   \label{fig:tanh_eoc_scaled}
   \includegraphics[width=0.25\linewidth]{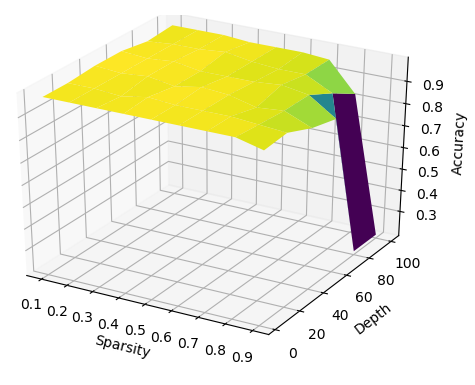}
   }%
     \subfigure[EOC Init]{%
   \label{fig:tanh_eoc}
   \includegraphics[width=0.25\linewidth]{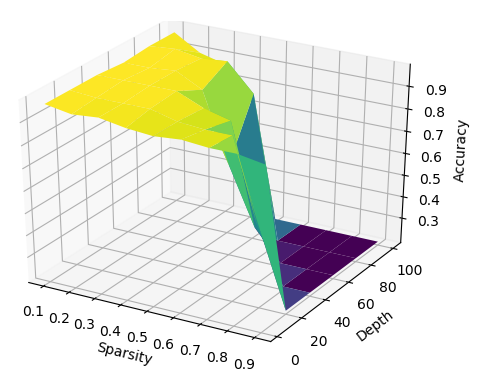}
   }%
   \subfigure[Ordered phase Init]{%
   \label{fig:tanh_ordered}
   \includegraphics[width=0.25\linewidth]{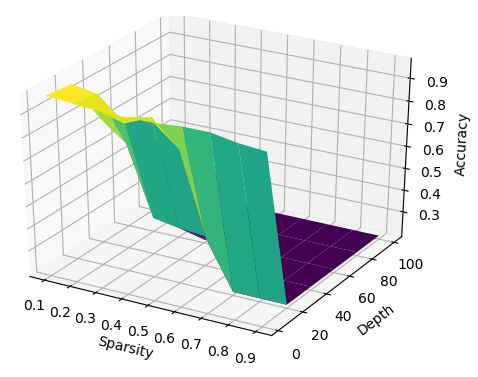}
   }%
 \caption{\small{Accuracy on MNIST with different initialization schemes including EOC with rescaling, EOC without rescaling, Ordered phase, with varying depth and sparsity. This shows that rescaling to be on the EOC allows us to train not only much deeper but also sparser models.}}
 \label{fig:ffnn_pruning_tanh}
 \end{figure}
\vspace{-0.2cm}
\subsection{Initialization and rescaling}
\vspace{-0.1cm}
According to Theorem \ref{thm:sensitivity_based_pruning}, an EOC initialization is necessary for the network to be well-conditioned. We train FFNN with tanh activation on MNIST, varying depth $L \in \{2, 20,40,60,80,100\}$ and sparsity $s \in \{10\%, 20\%,..,90\%\}$. We use SGD with batchsize 100 and learning rate $10^{-3}$, which we found to be optimal using a grid search with an exponential scale of 10. Figure \ref{fig:ffnn_pruning_tanh} shows the test accuracy after $10$k iterations for 3 different initialization schemes: \textit{Rescaled EOC}, \textit{EOC}, \textit{Ordered}. On the Ordered phase, the model is untrainable when we choose sparsity $s>40\%$ and depth $L>60$ as one layer being fully pruned. For an EOC initialization, the set  $(s,L)$ for which NN are trainable becomes larger. However, the model is still untrainable for highly sparse deep networks as the sparse NN is no longer initialized on the EOC (see Proposition \ref{prop:resclaing_trick}). As predicted by Proposition \ref{prop:resclaing_trick}, after application of the rescaling trick to bring back the pruned NN on the EOC, the pruned NN can be trained appropriately.

\begin{table}[htb]
\caption{Classification accuracies for CIFAR10 and CIFAR100 after pruning}
\label{table:test_accuracies_resnet}
\vskip 0.15in
\begin{center}
\resizebox{\columnwidth}{!}{%
\begin{small}
\begin{sc}
\begin{tabular}{lllllll}
\toprule
    \multicolumn{1}{c}{} & \multicolumn{3}{c}{CIFAR10} & \multicolumn{3}{c}{CIFAR100}\\
    \cmidrule(r){2-4}
    \cmidrule(r){5-7}
    Sparsity        & $90\%$         &  $95\%$   & $98\%$       &  $90\%$ & $95\%$ & $98\%$   \\
    \midrule
    {\bf ResNet32} (no pruning)         & $94.80$  & -  & -  &  74.64 & -  &  - \\
    OBD \cite{lecun_pruning}    &  93.74 &  93.58 & 93.49 & 73.83 & 71.98 & 67.79 \\
    \cmidrule(r){1-1}
    \cmidrule(r){2-4}
    \cmidrule(r){5-7}
    Random Pruning     & 89.95$\pm$0.23  &  89.68$\pm$0.15   & 86.13$\pm$0.25   &  63.13$\pm $2.94  & 64.55$\pm $0.32   &  19.83$\pm $3.21 \\
    MBP     & 90.21$\pm$0.55  &  88.35$\pm$0.75   & 86.83$\pm$0.27   &  67.07$\pm$0.31  & 64.92$\pm $0.77   &  59.53$\pm $2.19 \\
    SNIP \cite{lee_snip} & $92.26 \pm 0.32$  &  $91.18 \pm 0.17$   & 87.78 $\pm$ 0.16   &  $69.31 \pm 0.52$  & $65.63 \pm 0.15$   &  $55.70 \pm 1.13$ \\
    
    GraSP \cite{wangGRASP}   & 92.20$\pm$0.31 &  91.39$\pm$0.25 &{\bf 88.70$\pm$0.42} &$69.24\pm0.24$ &$66.50\pm0.11$ &$58.43\pm0.43$\\
    
    GraSP-SR   & 92.30$\pm$0.19 & 91.16$\pm$0.13 &  87.8 $\pm$ 0.32 & $69.12\pm0.15$ &$65.49\pm0.21$ &$58.63\pm0.23$\\
    
    SynFlow \cite{tanaka2020pruning} & 92.01$\pm$0.22 & {\bf 91.67$\pm$0.17} &  88.10 $\pm$ 0.25 & $69.03\pm0.20$ &$65.23\pm0.31$ &$58.73\pm0.30$\\
    
    SBP-SR (Stable ResNet)         & {\bf 92.56 $\pm$ 0.06}  &  $91.21 \pm 0.30$   & $88.25 \pm 0.35$   &  {\bf 69.51 $\pm$ 0.21}  & {\bf 66.72 $\pm$ 0.12}   &  {\bf 59.51 $\pm$ 0.15} \\
    \midrule
    {\bf ResNet50} (no pruning)         & 94.90  & -  & -  &  74.9 & -  &  - \\
    Random Pruning     & 85.11$\pm$4.51  &  88.76$\pm$0.21   & 85.32$\pm$0.47   &  65.67$\pm$0.57  & 60.23$\pm$2.21   &  28.32$\pm$10.35 \\
    MBP     & $90.11 \pm 0.32$  &  $89.06 \pm 0.09$   & $87.32 \pm 0.16$  &  $68.51 \pm 0.21$  & $63.32 \pm 1.32$   &  $55.21 \pm 0.35$ \\
    SNIP & $91.95 \pm 0.13$  &  $92.12 \pm 0.34$   & $89.26 \pm 0.23$  &  $70.43 \pm 0.43$  & $67.85 \pm 1.02$   &  $60.38 \pm 0.78$ \\
    GraSP     &  {\bf 92.10 $\pm$ 0.21}  &  $91.74 \pm 0.35 $   & {\bf 89.97$\pm$ 0.25}  &  70.53$\pm$0.32  & 67.84$\pm$0.25   &  63.88$\pm$0.45 \\
    SynFlow     & 92.05 $\pm$ 0.20  &  $91.83 \pm 0.23 $   & 89.61$\pm$ 0.17  &  70.43$\pm$0.30  & 67.95$\pm$0.22   &  63.95$\pm$0.11 \\
    SBP-SR        & $92.05 \pm 0.06$  &  {\bf 92.74$\pm$ 0.32}   & $89.57 \pm 0.21$   &  {\bf 71.79 $\pm$ 0.13}  & {\bf 68.98 $\pm$ 0.15}   &  {\bf 64.45 $\pm$ 0.34} \\
    \midrule
    {\bf ResNet104} (no pruning)         & $94.92$  & -  & -  &  75.24 & -  &  - \\
    Random Pruning     & $ 89.80\pm $0.33  &  87.86$\pm $1.22   & 85.52$\pm$2.12   &  66.73$\pm$1.32  & 64.98$ \pm $0.11   &  30.31$\pm $4.51 \\
    MBP     & $ 90.05\pm1.23 $  &  88.95$\pm $0.65   & 87.83$\pm $1.21   &  69.57$\pm$0.35  & 64.31$\pm$0.78   &  60.21$\pm $2.41 \\
    SNIP & $93.25 \pm 0.53$  &  $92.98 \pm 0.12$   & $91.58 \pm 0.19$   &  $ 71.94\pm 0.22$  & 68.73$\pm$0.09   &  $63.31 \pm 0.41$ \\
    GraSP     & $ 93.08 \pm 0.17 $  &  92.93 $\pm$ 0.09   & 91.19$\pm$0.35   &  73.33$\pm$0.21  & $ 70.95 \pm 1.12 $   &  66.91$\pm $0.33 \\
    SynFlow     & $ 93.43 \pm 0.10 $  &  92.85 $\pm$ 0.18   & 91.03$\pm$0.25   &  72.85$\pm$0.20  & $ 70.33 \pm 0.15 $   &  67.02$\pm $0.10 \\
    SBP-SR    & {\bf 94.69 $\pm$ 0.13}  &  {\bf 93.88 $\pm$ 0.17}   & {\bf 92.08 $\pm$ 0.14}   &  {\bf 74.17 $\pm$ 0.11}  & {\bf 71.84 $\pm$ 0.13}   &  {\bf 67.73 $\pm$ 0.28} \\

\bottomrule
\end{tabular}
\end{sc}
\end{small}
}
\end{center}
\vskip -0.1in
\end{table}

\subsection{Resnet and Stable Resnet}

Although Resnets are adapted to SBP (i.e. they are always well-conditioned for all $\sigma_w>0$), Theorem \ref{thm:resnet_well_conditioned} shows that the magnitude of the pruning criterion grows exponentially w.r.t. the depth $L$. To resolve this problem we introduced Stable Resnet. We call our pruning algorithm for ResNet SBP-SR (SBP with Stable Resnet). Theoretically, we expect SBP-SR to perform better than other methods for deep Resnets according to Proposition \ref{prop:stable_resnet}. Table \ref{table:test_accuracies_resnet} shows test accuracies for ResNet32, ResNet50 and ResNet104 with varying sparsities $s \in \{ 90\%, 95\%, 98\%\}$ on CIFAR10 and CIFAR100. For all our experiments, we use a setup similar to \citep{wangGRASP}, i.e. we use SGD for $160$ and $250$ epochs for CIFAR10 and CIFAR100, respectively.  We use an initial learning rate of $0.1$ and decay it by $0.1$ at $1/2$ and $3/4$ of the number of total epoch. In addition, we run all our experiments 3 times to obtain more stable and reliable test accuracies. As in \citep{wangGRASP}, we adopt Resnet architectures where we doubled the number of filters in each convolutional layer. As a baseline, we include pruning results with the classical OBD pruning algorithm \citep{lecun_pruning} for ResNet32 (train $\rightarrow$ prune $\rightarrow$ repeat). We compare our results against other algorithms that prune at initialization, such as SNIP \citep{lee_snip}, which is a SBP algorithm, GraSP \citep{wangGRASP} which is a Hessian based pruning algorithm, and SynFlow \citep{tanaka2020pruning}, which is an iterative data-agnostic pruning algorithm. As we increase the depth, SBP-SR starts to outperform other algorithms that prune at initialization (SBP-SR outperforms all other algorithms with ResNet104 on CIFAR10 and CIFAR100).
Furthermore, using GraSP on Stable Resnet did not improve the result of GraSP on standard Resnet, as our proposed Stable Resnet analysis only applies to gradient based pruning. The analysis of Hessian based pruning could lead to similar techniques for improving trainability, which we leave for future work. 
\begin{table}[htp]
 \caption{\small{Classification accuracies on Tiny ImageNet for Resnet with varying depths}}
 \label{table:tiny_imagenet}
 \vskip 0.15in
 \begin{center}
 \begin{small}
 \begin{sc}
 \resizebox{9cm}{!}{%
 \begin{tabular}{lcccr}
 \toprule
  & algorithm & $85\%$ & $90\%$ & $95\%$ \\
 \midrule
 \multirow{4}{*}{ResNet32} &   SBP-SR & \textbf{57.25 $\pm$ 0.09} & \textbf{55.67 $\pm$ 0.21} & 50.63$\pm$0.21 \\
                       & SNIP   & 56.92$\pm$ 0.33 & 54.99$\pm$0.37 & 49.48$\pm$0.48 \\
                       & GraSP  & \textbf{57.25$\pm$0.11} & 55.53$\pm$0.11 & 51.34$\pm$0.29\\
                       & SynFlow  & 56.75$\pm$0.09 & 55.60$\pm$0.07 & \textbf{51.50$\pm$0.21}\\
 \midrule
 \multirow{4}{*}{ResNet50} &   SBP-SR & \textbf{59.8$\pm$0.18}  & \textbf{57.74$\pm$0.06} & \textbf{53.97$\pm$0.27} \\
                       & SNIP   & 58.91$\pm$0.23 & 56.15$\pm$0.31 & 51.19$\pm$0.47 \\
                      & GraSP  & 58.46$\pm$0.29   & 57.48$\pm$0.35   & 52.5$\pm$0.41 \\
                      & SynFlow  & 59.31$\pm$0.17 & \textbf{57.67$\pm$0.15} & 53.14$\pm$0.31\\
                       \midrule
 \multirow{4}{*}{ResNet104} &   SBP-SR & \textbf{62.84$\pm$0.13}  & \textbf{61.96$\pm$0.11} & \textbf{57.9$\pm$0.31} \\
                       & SNIP   & 59.94$\pm$0.34 & 58.14$\pm$0.28 & 54.9$\pm$0.42 \\
                      & GraSP  & 61.1$\pm$0.41& 60.14$\pm$0.38   & 56.36$\pm$0.51 \\
                      & SynFlow  & 61.71$\pm$0.08 & 60.81$\pm$0.14 & 55.91$\pm$0.43\\
 \bottomrule
 \end{tabular}
 }
 \end{sc}
 \end{small}
 \end{center}
 \vskip -0.1in
 \end{table}
 
 To confirm these results, we also test SBP-SR against other pruning algorithms on Tiny ImageNet. We train the models for 300 training epochs to make sure all algorithms converge. Table \ref{table:tiny_imagenet} shows test accuracies for SBP-SR, SNIP, GraSP, and SynFlow for $s \in \{85\%, 90\%, 95\%\}$. Although SynFlow competes or outperforms GraSP in many cases, SBP-SR has a clear advantage over SynFlow and other algorithms, especially for deep networks as illustrated on ResNet104.
 
 Additional results with ImageNet dataset are provided in Appendix \ref{app:ImageNet}. 
\subsection{ReScaling Trick and CNNs}
The theoretical analysis of Section \ref{sec:Gaussian} is valid for Vanilla CNN i.e. CNN without pooling layers. With pooling layers, the theory of signal propagation applies to sections between successive pooling layers; each of those section can be seen as Vanilla CNN. This applies to standard CNN architectures such as VGG. 
As a \textit{toy example}, we show in Table \ref{table:toy_example_cnn} the test accuracy of a pruned V-CNN with sparsity $s=50\%$ on MNIST dataset. Similar to FFNN results in Figure \ref{fig:ffnn_pruning_tanh}, the combination of the EOC Init and the ReScaling trick allows for pruning deep V-CNN (depth 100) while ensuring their trainability.\\

\begin{table}[htp]
 \caption{\small{Test accuracy on MNIST with V-CNN for different depths with sparsity $50\%$ using SBP(SNIP)}}
 \label{table:toy_example_cnn}
 \begin{center}
 \begin{small}
 \begin{sc}
 \resizebox{8.3cm}{!}{%
 \begin{tabular}{lcccr}
 \toprule
   & $L=10$ & $L=50$ & $L=100$ \\
 \midrule
  Ordered phase Init  & $98.12\pm$0.13 & 10.00$\pm$0.0 & 10.00$\pm$0.0 \\
  EOC Init &  98.20$\pm$0.17 & 98.75$\pm$0.11 & 10.00$\pm$0.0 \\
 EOC + ReScaling & 98.18$\pm$0.21 & 98.90$\pm$0.07 & 99.15$\pm$0.08\\
 \bottomrule
 \end{tabular}
 }
 \end{sc}
 \end{small}
 \end{center}
 \end{table}

However, V-CNN is a toy example that is generally not used in practice. Standard CNN architectures such as VGG are popular among practitioners since they achieve SOTA accuracy on many tasks. Table \ref{table:vgg_cifar10} shows test accuracies for SNIP, SynFlow, and our EOC+ReScaling trick for VGG16 on CIFAR10. Our results are close to the results presented by \cite{frankle2020missing}. These three algorithms perform similarly. From a theoretical point of view, our ReScaling trick applies to vanilla CNNs without pooling layers, hence, adding pooling layers might cause a deterioration. However, we know that the signal propagation theory applies to vanilla blocks inside VGG (i.e. the sequence of convolutional layers between two successive pooling layers). The larger those vanilla blocks are, the better our ReScaling trick performs. We leverage this observation by training a modified version of VGG, called 3xVGG16, which has the same number of pooling layers as VGG16, and 3 times the number of convolutional layers inside each vanilla block. Numerical results in Table \ref{table:vgg_cifar10} show that the EOC initialization with the ReScaling trick outperforms other algorithms, which confirms our hypothesis. However, the architecture 3xVGG16 is not a standard architecture and it does not seem to improve much the test accuracy of VGG16. An adaptation of the ReScaling trick to standard VGG architectures would be of great value and is left for future work.

\begin{table}[htp]
 \caption{\small{Classification accuracy on CIFAR10 for VGG16 and 3xVGG16 with varying sparsities}}
 \label{table:vgg_cifar10}
 \begin{center}
 \begin{small}
 \begin{sc}
 \resizebox{9cm}{!}{%
 \begin{tabular}{lcccr}
 \toprule
  & algorithm & $85\%$ & $90\%$ & $95\%$\\
 \midrule
 \multirow{3}{*}{VGG16} &  SNIP  &  93.09$\pm$0.11 & 92.97$\pm$0.08 & 92.61$\pm$0.10 \\
 &  SynFlow  &  93.21$\pm$0.13 & 93.05$\pm$0.11 & 92.19$\pm$0.12 \\
 &EOC + ReScaling& 93.15$\pm$0.12 & 92.90$\pm$0.15 & 92.70$\pm$0.06\\
 \midrule
 \multirow{3}{*}{3xVGG16} &  SNIP  & 93.30$\pm$0.10 & 93.12$\pm$0.20 & 92.85$\pm$0.15\\
 &  SynFlow  &  92.95$\pm$0.13 & 92.91$\pm$0.21  & 92.70$\pm$0.20\\
 &EOC + ReScaling& \textbf{93.97$\pm$0.17} & \textbf{93.75$\pm$0.15} & \textbf{93.40$\pm$0.16}\\
 \bottomrule
 \end{tabular}
 }
 \end{sc}
 \end{small}
 \end{center}
 \end{table}

\paragraph{Summary of numerical results.} We summarize in Table \ref{table:summary_results} our numerical results. The letter `C' refers to `Competition' between algorithms in that setting, and indicates no clear winner is found, while the dash means no experiment has been run with this setting. We observe that our algorithm SBP-SR consistently outperforms other algorithms in a variety of settings.
\begin{table}[htp]
 \caption{\small{Which algorithm performs better? (according to our results)}}
 \label{table:summary_results}
 \vskip 0.15in
 \begin{center}
 \begin{small}
 \begin{sc}
 \resizebox{10cm}{!}{%
 \begin{tabular}{lccccr}
 \toprule
 Dataset & Architecture & $85\%$ & $90\%$ & $95\%$ & $98\%$ \\
 \midrule
 \multirow{5}{*}{CIFAR10} &   ResNet32 & - & C & C &GrasP \\
                      & ResNet50 & - & C & SBP-SR & GraSP \\
                      & ResNet104 & - & SBP-SR & SBP-SR & SBP-SR\\
                      & VGG16  & C & C & C & -\\
                      & 3xVGG16  & EOC+ReSc & EOC+ReSc & EOC+ReSc & -\\
 \midrule
 \multirow{3}{*}{CIFAR100} &   ResNet32 & - & SBP-SR & SBP-SR & SBP-SR \\
                      & ResNet50 & - & SBP-SR & SBP-SR & SBP-SR \\
                      & ResNet104 & - & SBP-SR & SBP-SR & SBP-SR\\
 \midrule
 \multirow{4}{*}{Tiny ImageNet} &ResNet32 & C & C & SynFlow & - \\
                      & ResNet50 & SBP-SR & C & SBP-SR & - \\
                      & ResNet104 & SBP-SR & SBP-SR & SBP-SR & -\\
 \bottomrule
 \end{tabular}
}
 \end{sc}
 \end{small}
 \end{center}
 \vskip -0.1in
 \end{table}
\section{Conclusion}
In this paper, we have formulated principled guidelines for SBP at initialization. For FFNN and CNN, we have shown that an initialization on the EOC is necessary followed by the application of a simple rescaling trick to train the pruned network. For Resnets, the situation is markedly different. There is no need for a specific initialization but Resnets in their original form suffer from an exploding gradient problem. We propose an alternative Resnet parameterization called Stable Resnet, which allows for more stable pruning. Our theoretical results have been validated by extensive experiments on MNIST, CIFAR10, CIFAR100, Tiny ImageNet and ImageNet. Compared to other available one-shot pruning algorithms, we achieve state-of the-art results in many scenarios.

\newpage

\bibliography{iclr2021_conference}

\begin{thebibliography}{}

\bibitem[\protect\citeauthoryear{Alvarez and Salzmann}{Alvarez and
  Salzmann}{2017}]{alvarez_pruning}
Alvarez, J.~M. and M.~Salzmann (2017).
\newblock Compression-aware training of deep networks.
\newblock In {\em 31st Conference in Neural Information Processing Systems},
  pp.\  856--867.

\bibitem[\protect\citeauthoryear{Arora, Du, Hu, Li, Salakhutdinov, and
  Wang}{Arora et~al.}{2019}]{arora2019exact}
Arora, S., S.~Du, W.~Hu, Z.~Li, R.~Salakhutdinov, and R.~Wang (2019).
\newblock On exact computation with an infinitely wide neural net.
\newblock In {\em 33rd Conference on Neural Information Processing Systems}.

\bibitem[\protect\citeauthoryear{Carreira-Perpiñán and
  Idelbayev}{Carreira-Perpiñán and Idelbayev}{2018}]{Carreira_2018_CVPR}
Carreira-Perpiñán, M. and Y.~Idelbayev (2018, June).
\newblock Learning-compression algorithms for neural net pruning.
\newblock In {\em The IEEE Conference on Computer Vision and Pattern
  Recognition (CVPR)}.

\bibitem[\protect\citeauthoryear{Dong, Chen, and Pan}{Dong
  et~al.}{2017}]{dong_pruning}
Dong, X., S.~Chen, and S.~Pan (2017).
\newblock Learning to prune deep neural networks via layer-wise optimal brain
  surgeon.
\newblock In {\em 31st Conference on Neural Information Processing Systems},
  pp.\  4860–4874.

\bibitem[\protect\citeauthoryear{Du, Zhai, Poczos, and Singh}{Du
  et~al.}{2019}]{du_overparameterized}
Du, S., X.~Zhai, B.~Poczos, and A.~Singh (2019).
\newblock Gradient descent provably optimizes over-parameterized neural
  networks.
\newblock In {\em 7th International Conference on Learning Representations}.

\bibitem[\protect\citeauthoryear{Frankle and Carbin}{Frankle and
  Carbin}{2019}]{frankleLOTTERY}
Frankle, J. and M.~Carbin (2019).
\newblock The lottery ticket hypothesis: Finding sparse, trainable neural
  networks.
\newblock In {\em 7th International Conference on Learning Representations}.

\bibitem[\protect\citeauthoryear{Frankle, Dziugaite, Roy, and Carbin}{Frankle
  et~al.}{2020}]{frankle2020missing}
Frankle, J., G.~Dziugaite, D.~Roy, and M.~Carbin (2020).
\newblock Pruning neural networks at initialization: Why are we missing the
  mark?
\newblock {\em arXiv preprint arXiv:2009.08576\/}.

\bibitem[\protect\citeauthoryear{Hardy, Littlewood, and Pólya}{Hardy
  et~al.}{1952}]{hardy_ineq}
Hardy, G., J.~Littlewood, and G.~Pólya (1952).
\newblock {\em Inequalities}, Volume~2.
\newblock Cambridge Mathematical Library.

\bibitem[\protect\citeauthoryear{Hassibi, Stork, and Gregory}{Hassibi
  et~al.}{1993}]{hassibi_pruning}
Hassibi, B., D.~Stork, and W.~Gregory (1993).
\newblock Optimal brain surgeon and general network pruning.
\newblock In {\em IEEE International Conference on Neural Networks}, pp.\  293
  -- 299 vol.1.

\bibitem[\protect\citeauthoryear{Hayou, Clerico, He, Deligiannidis, Doucet, and
  Rousseau}{Hayou et~al.}{2021}]{hayou2020stable}
Hayou, S., E.~Clerico, B.~He, G.~Deligiannidis, A.~Doucet, and J.~Rousseau
  (2021).
\newblock Stable resnet.
\newblock In {\em 24th International Conference on Artificial Intelligence and
  Statistics}.

\bibitem[\protect\citeauthoryear{Hayou, Doucet, and Rousseau}{Hayou
  et~al.}{2019}]{hayou}
Hayou, S., A.~Doucet, and J.~Rousseau (2019).
\newblock On the impact of the activation function on deep neural networks
  training.
\newblock In {\em 36th International Conference on Machine Learning}.

\bibitem[\protect\citeauthoryear{Hayou, Doucet, and Rousseau}{Hayou
  et~al.}{2020}]{hayou2020_ntk}
Hayou, S., A.~Doucet, and J.~Rousseau (2020).
\newblock Mean-field behaviour of neural tangent kernel for deep neural
  networks.
\newblock {\em arXiv preprint arXiv:1905.13654\/}.

\bibitem[\protect\citeauthoryear{He, Zhang, Ren, and Sun}{He
  et~al.}{2015}]{He_resnetL}
He, K., X.~Zhang, S.~Ren, and J.~Sun (2015).
\newblock Deep residual learning for image recognition.
\newblock In {\em IEEE Conference on Computer Vision and Pattern Recognition
  (CVPR)}, pp.\  770--778.

\bibitem[\protect\citeauthoryear{Huang, Liu, Maaten, and Weinberger}{Huang
  et~al.}{2017}]{huang_densenet}
Huang, G., Z.~Liu, L.~Maaten, and K.~Weinberger (2017).
\newblock Densely connected convolutional networks.
\newblock In {\em IEEE Conference on Computer Vision and Pattern Recognition
  (CVPR)}, pp.\  2261--2269.

\bibitem[\protect\citeauthoryear{Jacot, Gabriel, and Hongler}{Jacot
  et~al.}{2018}]{jacot}
Jacot, A., F.~Gabriel, and C.~Hongler (2018).
\newblock Neural tangent kernel: Convergence and generalization in neural
  networks.
\newblock In {\em 32nd Conference on Neural Information Processing Systems}.

\bibitem[\protect\citeauthoryear{Kolesnikov, Beyer, Zhai, Puigcerver, Yung,
  Gelly, and Houlsby}{Kolesnikov et~al.}{2019}]{kolesnikov2019large}
Kolesnikov, A., L.~Beyer, X.~Zhai, J.~Puigcerver, J.~Yung, S.~Gelly, and
  N.~Houlsby (2019).
\newblock Large scale learning of general visual representations for transfer.
\newblock {\em arXiv preprint arXiv:1912.11370\/}.

\bibitem[\protect\citeauthoryear{LeCun, Denker, and Solla}{LeCun
  et~al.}{1990}]{lecun_pruning}
LeCun, Y., J.~Denker, and S.~Solla (1990).
\newblock Optimal brain damage.
\newblock In {\em Advances in Neural Information Processing Sstems}, pp.\
  598--605.

\bibitem[\protect\citeauthoryear{Lee, Bahri, Novak, Schoenholz, Pennington, and
  Sohl-Dickstein}{Lee et~al.}{2018}]{lee_gaussian_process}
Lee, J., Y.~Bahri, R.~Novak, S.~Schoenholz, J.~Pennington, and
  J.~Sohl-Dickstein (2018).
\newblock Deep neural networks as {G}aussian processes.
\newblock In {\em 6th International Conference on Learning Representations}.

\bibitem[\protect\citeauthoryear{Lee, Xiao, Schoenholz, Bahri, Novak,
  Sohl-Dickstein, and Pennington}{Lee et~al.}{2019}]{lee_linear_model}
Lee, J., L.~Xiao, S.~Schoenholz, Y.~Bahri, R.~Novak, J.~Sohl-Dickstein, and
  J.~Pennington (2019).
\newblock Wide neural networks of any depth evolve as linear models under
  gradient descent.
\newblock In {\em 33rd Conference on Neural Information Processing Systems}.

\bibitem[\protect\citeauthoryear{Lee, Ajanthan, Gould, and Torr}{Lee
  et~al.}{2020}]{lee_signal2020}
Lee, N., T.~Ajanthan, S.~Gould, and P.~H.~S. Torr (2020).
\newblock A signal propagation perspective for pruning neural networks at
  initialization.
\newblock In {\em 8th International Conference on Learning Representations}.

\bibitem[\protect\citeauthoryear{Lee, Ajanthan, and Torr}{Lee
  et~al.}{2018}]{lee_snip}
Lee, N., T.~Ajanthan, and P.~H. Torr (2018).
\newblock Snip: Single-shot network pruning based on connection sensitivity.
\newblock In {\em 6th International Conference on Learning Representations}.

\bibitem[\protect\citeauthoryear{Li, Kadav, Durdanovic, Samet, and Graf}{Li
  et~al.}{2018}]{Li_diffuculty}
Li, H., A.~Kadav, I.~Durdanovic, H.~Samet, and H.~Graf (2018).
\newblock Pruning filters for efficient convnets.
\newblock In {\em 6th International Conference on Learning Representations}.

\bibitem[\protect\citeauthoryear{Li, Gu, Mayer, Gool, and Timofte}{Li
  et~al.}{2020}]{li2020group}
Li, Y., S.~Gu, C.~Mayer, L.~V. Gool, and R.~Timofte (2020).
\newblock Group sparsity: The hinge between filter pruning and decomposition
  for network compression.
\newblock In {\em Proceedings of the IEEE/CVF Conference on Computer Vision and
  Pattern Recognition}, pp.\  8018--8027.

\bibitem[\protect\citeauthoryear{Li, Gu, Zhang, Van~Gool, and Timofte}{Li
  et~al.}{2020}]{li2020dhp}
Li, Y., S.~Gu, K.~Zhang, L.~Van~Gool, and R.~Timofte (2020).
\newblock Dhp: Differentiable meta pruning via hypernetworks.
\newblock {\em arXiv preprint arXiv:2003.13683\/}.

\bibitem[\protect\citeauthoryear{Lillicrap, Cownden, Tweed, and
  Akerman}{Lillicrap et~al.}{2016}]{timothy_feedback}
Lillicrap, T., D.~Cownden, D.~Tweed, and C.~Akerman (2016).
\newblock Random synaptic feedback weights support error backpropagation for
  deep learning.
\newblock {\em Nature Communications\/}~{\em 7\/}(13276).

\bibitem[\protect\citeauthoryear{Liu, Mu, Zhang, Guo, Yang, Cheng, and Sun}{Liu
  et~al.}{2019}]{liu2019metapruning}
Liu, Z., H.~Mu, X.~Zhang, Z.~Guo, X.~Yang, K.-T. Cheng, and J.~Sun (2019).
\newblock Metapruning: Meta learning for automatic neural network channel
  pruning.
\newblock In {\em Proceedings of the IEEE International Conference on Computer
  Vision}, pp.\  3296--3305.

\bibitem[\protect\citeauthoryear{Louizos, Welling, and Kingma}{Louizos
  et~al.}{2018}]{louizos2018pruning}
Louizos, C., M.~Welling, and D.~Kingma (2018).
\newblock Learning sparse neural networks through $l_0$ regularization.
\newblock In {\em 6th International Conference on Learning Representations}.

\bibitem[\protect\citeauthoryear{Matthews, Hron, Rowland, Turner, and
  Ghahramani}{Matthews et~al.}{2018}]{matthews}
Matthews, A., J.~Hron, M.~Rowland, R.~Turner, and Z.~Ghahramani (2018).
\newblock Gaussian process behaviour in wide deep neural networks.
\newblock In {\em 6th International Conference on Learning Representations}.

\bibitem[\protect\citeauthoryear{Mozer and Smolensky}{Mozer and
  Smolensky}{1989}]{mozer_pruning}
Mozer, M. and P.~Smolensky (1989).
\newblock Skeletonization: A technique for trimming the fat from a network via
  relevance assessment.
\newblock In {\em Advances in Neural Information Processing Systems}, pp.\
  107--115.

\bibitem[\protect\citeauthoryear{Neal}{Neal}{1995}]{neal}
Neal, R. (1995).
\newblock {\em Bayesian Learning for Neural Networks}, Volume 118.
\newblock Springer Science \& Business Media.

\bibitem[\protect\citeauthoryear{Neyshabur, Li, Bhojanapalli, LeCun, and
  Srebro}{Neyshabur et~al.}{2019}]{neyshabur_overparam}
Neyshabur, B., Z.~Li, S.~Bhojanapalli, Y.~LeCun, and N.~Srebro (2019).
\newblock The role of over-parametrization in generalization of neural
  networks.
\newblock In {\em 7th International Conference on Learning Representations}.

\bibitem[\protect\citeauthoryear{Nguyen and Hein}{Nguyen and
  Hein}{2018}]{nguyen}
Nguyen, Q. and M.~Hein (2018).
\newblock Optimization landscape and expressivity of deep {CNNs}.
\newblock In {\em 35th International Conference on Machine Learning}.

\bibitem[\protect\citeauthoryear{Pe{\v{c}}ari{\'c}, Proschan, and
  Tong}{Pe{\v{c}}ari{\'c} et~al.}{1992}]{pevcaricconvex}
Pe{\v{c}}ari{\'c}, J., F.~Proschan, and Y.~Tong (1992).
\newblock {\em Convex Functions, Partial Orderings, and Statistical
  Applications}.
\newblock Academic Press.

\bibitem[\protect\citeauthoryear{Poole, Lahiri, Raghu, Sohl-Dickstein, and
  Ganguli}{Poole et~al.}{2016}]{poole}
Poole, B., S.~Lahiri, M.~Raghu, J.~Sohl-Dickstein, and S.~Ganguli (2016).
\newblock Exponential expressivity in deep neural networks through transient
  chaos.
\newblock In {\em 30th Conference on Neural Information Processing Systems}.

\bibitem[\protect\citeauthoryear{Puri and Ralescu}{Puri and
  Ralescu}{1986}]{puri}
Puri, M. and S.~Ralescu (1986).
\newblock Limit theorems for random central order statistics.
\newblock {\em Lecture Notes-Monograph Series Vol. 8, Adaptive Statistical
  Procedures and Related Topics\/}.

\bibitem[\protect\citeauthoryear{Schoenholz, Gilmer, Ganguli, and
  Sohl-Dickstein}{Schoenholz et~al.}{2017}]{samuel}
Schoenholz, S., J.~Gilmer, S.~Ganguli, and J.~Sohl-Dickstein (2017).
\newblock Deep information propagation.
\newblock In {\em 5th International Conference on Learning Representations}.

\bibitem[\protect\citeauthoryear{Tanaka, Kunin, Yamins, and Ganguli}{Tanaka
  et~al.}{2020}]{tanaka2020pruning}
Tanaka, H., D.~Kunin, D.~L. Yamins, and S.~Ganguli (2020).
\newblock Pruning neural networks without any data by iteratively conserving
  synaptic flow.
\newblock In {\em 34th Conference on Neural Information Processing Systems}.

\bibitem[\protect\citeauthoryear{Van~Handel}{Van~Handel}{2016}]{vanhandel_2016}
Van~Handel, R. (2016).
\newblock {\em Probability in High Dimension}.
\newblock Princeton University.
\newblock APC 550 Lecture Notes.

\bibitem[\protect\citeauthoryear{Von~Mises}{Von~Mises}{1936}]{vonmises}
Von~Mises, R. (1936).
\newblock La distribution de la plus grande de $n$ valeurs.
\newblock {\em Selected Papers Volumen II, American Mathematical Society\/},
  271–294.

\bibitem[\protect\citeauthoryear{Wang, Zhang, and Grosse}{Wang
  et~al.}{2020}]{wangGRASP}
Wang, C., G.~Zhang, and R.~Grosse (2020).
\newblock Picking winning tickets before training by preserving gradient flow.
\newblock In {\em 8th International Conference on Learning Representations}.

\bibitem[\protect\citeauthoryear{Xiao, Bahri, Sohl-Dickstein, S.~Schoenholz,
  and Pennington}{Xiao et~al.}{2018}]{xiao_cnnmeanfield}
Xiao, L., Y.~Bahri, J.~Sohl-Dickstein, S.~S.~Schoenholz, and P.~Pennington
  (2018).
\newblock Dynamical isometry and a mean field theory of cnns: How to train
  10,000-layer vanilla convolutional neural networks.
\newblock In {\em 35th International Conference on Machine Learning}.

\bibitem[\protect\citeauthoryear{Yang}{Yang}{2019}]{yang_scaling_limits}
Yang, G. (2019).
\newblock Scaling limits of wide neural networks with weight sharing: Gaussian
  process behavior, gradient independence, and neural tangent kernel
  derivation.
\newblock {\em arXiv preprint arXiv:1902.04760\/}.

\bibitem[\protect\citeauthoryear{Yang, Pennington, Rao, Sohl-Dickstein, and
  Schoenholz}{Yang et~al.}{2019}]{yang2018a}
Yang, G., J.~Pennington, V.~Rao, J.~Sohl-Dickstein, and S.~S. Schoenholz
  (2019).
\newblock A mean field theory of batch normalization.
\newblock In {\em 7th International Conference on Learning Representations}.

\bibitem[\protect\citeauthoryear{Yang and Schoenholz}{Yang and
  Schoenholz}{2017}]{yang2017}
Yang, G. and S.~Schoenholz (2017).
\newblock Mean field residual networks: On the edge of chaos.
\newblock In {\em 31st Conference in Neural Information Processing Systems},
  Volume~30, pp.\  2869--2869.

\bibitem[\protect\citeauthoryear{Zhang, Bengio, Hardt, Recht, and
  Vinyals}{Zhang et~al.}{2016}]{zhang_overparam}
Zhang, C., S.~Bengio, M.~Hardt, B.~Recht, and O.~Vinyals (2016).
\newblock Understanding deep learning requires rethinking generalization.
\newblock In {\em 5th International Conference on Learning Representations}.

\end{thebibliography}
\bibliographystyle{iclr2021_conference}
\newpage
\appendix
\section{Discussion about Approximations 1 and 2}\label{appendix:mean_field_assumptions}
\subsection{Approximation 1: Infinite Width Approximation}\label{subsection:approx1}
{\bf FeedForward Neural Network}\\
Consider a randomly initialized FFNN of depth $L$, widths $(N_l)_{1\leq l \leq L}$, weights $W^l_{ij} \stackrel{iid}\sim \mathcal{N}(0, \frac{\sigma_w^2}{N_{l-1}})$ and bias $B^l_i \stackrel{iid}\sim \mathcal{N}(0,\sigma^2_b)$, where $\mathcal{N}(\mu, \sigma^{2})$ denotes the normal distribution of mean $\mu$ and variance $\sigma^{2}$. For some input $x \in \mathbb{R}^{d}$, the propagation of this input through the network is given by
\begin{align}
y^1_i(x) &= \sum_{j=1}^{d} W^1_{ij} x_j + B^1_i, \\
y^l_i(x) &= \sum_{j=1}^{N_{l-1}} W^l_{ij} \phi(y^{l-1}_j(x)) + B^l_i, \quad \mbox{for } l\geq 2.
\end{align}
Where $\phi:\mathbb{R} \rightarrow \mathbb{R}$ is the activation function. When we take the limit $N_{l-1} \rightarrow \infty$, the Central Limit Theorem implies that $y_i^{l} (x)$ is a Gaussian variable for any input $x$. This approximation by infinite width solution results in an error of order $\mathcal{O}(1/\sqrt{N_{l-1}})$ (standard Monte Carlo error). More generally, an approximation of the random process $y_i^l(.)$  by a Gaussian process was first proposed by \cite{neal} in the single layer case and has been recently extended to the multiple layer case by \cite{lee_gaussian_process} and \cite{matthews}. We recall here the expressions of the limiting Gaussian process kernels.
For any input $x \in \mathbb R^d$, $\mathbb E[y^l_i(x)] = 0$ so that for any inputs $x,x'\in \mathbb R^d$
\begin{align*}
\kappa^l(x,x') &= \mathbb{E}[y^l_i(x)y^l_i(x')]\\
&= \sigma^2_b + \sigma^2_w \mathbb{E}[\phi(y^{l-1}_i(x))\phi(y^{l-1}_i(x'))]\\
&= \sigma^2_b + \sigma^2_w F_{\phi} (\kappa^{l-1}(x,x), \kappa^{l-1}(x,x'), \kappa^{l-1}(x',x')),
\end{align*}
where $F_{\phi}$ is a function that only depends on $\phi$. This provides a simple recursive formula for the computation of the kernel $\kappa^l$; see, e.g., \cite{lee_gaussian_process} for more details.

{\bf Convolutional Neural Networks}\\
Similar to the FFNN case, the infinite width approximation with 1D CNN (introduced in the main paper) yields a recursion for the kernel. However, the infinite width here means infinite number of channels, and results in an error $\mathcal{O}(1/\sqrt{n_{l-1}})$. The kernel in this case depends on the choice of the neurons in the channel and is given by 
$$
\kappa^l_{\alpha,\alpha'}(x,x') = 
\mathbb{E}[y^l_{i,\alpha}(x)y^l_{i,\alpha'}(x')] =   \sigma_b^2 + \frac{\sigma_w^2}{2k + 1} \sum_{\beta \in ker} \mathbb{E}[\phi(y^{l-1}_{1,\alpha + \beta}(x)) \phi(y^{l-1}_{1,\alpha' + \beta}(x'))]
$$
so that 
$$
\kappa^l_{\alpha,\alpha'}(x,x')
=   \sigma_b^2 + \frac{\sigma_w^2}{2k + 1} \sum_{\beta \in ker} F_{\phi} (\kappa^{l-1}_{\alpha+\beta,\alpha'+\beta}(x,x), \kappa^{l-1}_{\alpha+\beta,\alpha'+\beta}(x,x'), \kappa^{l-1}_{\alpha+\beta,\alpha'+\beta}(x',x')).
$$
The convolutional kernel $\kappa^l_{\alpha,\alpha'}$ has the `self-averaging' property; i.e. it is an average over the kernels corresponding to different combination of neurons in the previous layer. However, it is easy to simplify the analysis in this case by studying the average kernel per channel defined by $\hat{\kappa}^l = \frac{1}{N^2} \sum_{\alpha,\alpha'} \kappa^l_{\alpha,\alpha'}$. Indeed, by summing terms in the previous equation and using the fact that we use circular padding, we obtain
$$
\hat{\kappa}^l(x,x') = \sigma_b^2 + \sigma_w^2 \frac{1}{N^2} \sum_{\alpha,\alpha'} F_\phi(\kappa^{l-1}_{\alpha,\alpha'}(x,x), \kappa^{l-1}_{\alpha,\alpha'}(x,x'), \kappa^{l-1}_{\alpha,\alpha'}(x',x')).
$$
This expression is similar in nature to that of FFNN. We will use this observation in the proofs.\\

Note that our analysis only requires the approximation that, in the infinite width limit, for any two inputs $x,x'$, the variables $y^l_i(x)$ and $y^l_i(x')$ are Gaussian with covariance $\kappa^l(x,x')$ for FFNN, and $y^l_{i,\alpha}(x)$ and $y^l_{i,\alpha'}(x')$ are Gaussian with covariance $\kappa^l_{\alpha,\alpha'}(x,x')$ for CNN. We do not need the much stronger approximation that the process $y^l_i(x)$ ($y^l_{i,\alpha}(x)$ for CNN) is a Gaussian process.

{\bf Residual Neural Networks}\\
The infinite width limit approximation for ResNet yields similar results with an additional residual terms. It is straighforward to see that, in the case a ResNet with FFNN-type layers, we have that
\begin{align*}
\kappa^l(x,x') = \kappa^{l-1}(x,x') + \sigma^2_b + \sigma^2_w F_{\phi} (\kappa^{l-1}(x,x), \kappa^{l-1}(x,x'), \kappa^{l-1}(x',x')),
\end{align*}
whereas for ResNet with CNN-type layers, we have that 
\begin{equation*}
\begin{split}
\kappa^l_{\alpha,\alpha'}(x,x')
&=  \kappa^{l-1}_{\alpha,\alpha'}(x,x') +  \sigma_b^2 \\
&+ \frac{\sigma_w^2}{2k + 1} \sum_{\beta \in ker} F_{\phi} (\kappa^{l-1}_{\alpha+\beta,\alpha'+\beta}(x,x), \kappa^{l-1}_{\alpha+\beta,\alpha'+\beta}(x,x'), \kappa^{l-1}_{\alpha+\beta,\alpha'+\beta}(x',x')).
\end{split}
\end{equation*}

\subsection{Approximation 2: Gradient Independence}
For gradient back-propagation, an essential assumption in prior literature in Mean-Field analysis of DNNs is that of the gradient independence which is similar in nature to the practice of feedback alignment \citep{timothy_feedback}. This approximation allows for derivation of recursive formulas for gradient back-propagation, and it has been extensively used in literature and verified empirically; see references below. 

{\bf Gradient Covariance back-propagation:} this approximation was used to derive analytical formulas for gradient covariance back-propagation in \citep{hayou, samuel, yang2017, lee_gaussian_process, poole, xiao_cnnmeanfield, yang_scaling_limits}. It was shown empirically through simulations that it is an excellent approximation for FFNN in \cite{samuel}, for Resnets in \cite{yang2017} and for CNN in \cite{xiao_cnnmeanfield}. 

{\bf Neural Tangent Kernel (NTK): } this approximation was implicitly used by \cite{jacot} to derive the recursive formula of the infinite width Neural Tangent Kernel (See \cite{jacot}, Appendix A.1). Authors have found that this approximation yields excellent match with exact NTK. It was also exploited later in \citep{arora2019exact,hayou2020_ntk} to derive the infinite NTK for different architectures. The difference between the infinite width NTK $\Theta$ and the empirical (exact) NTK $\hat{\Theta}$ was studied in \cite{lee_linear_model} where authors have shown that $\|\Theta - \hat{\Theta}\|_F = \mathcal{O}(N^{-1})$ where $N$ is the width of the NN. 

More precisely, we use the approximation that, for wide neural networks, the weights used for forward propagation are independent from those used for back-propagation. When used for the computation of gradient covariance and Neural Tangent Kernel, this approximation was proven to give the exact computation for standard architectures such as FFNN, CNN and ResNets, without BatchNorm in \cite{yang_scaling_limits} (section D.5). Even with BatchNorm, in \cite{yang2018a}, authors have found that the Gradient Independence approximation matches empirical results.

This approximation can be alternatively formulated as an assumption instead of an approximation as in \cite{yang2017}.\\
{\bf Assumption 1 (Gradient Independence):} The gradients are computed using an i.i.d. version of the weights used for forward propagation.

\section{Preliminary results} 
Let $x$ be an input in $\mathbb{R}^d$. In its general form, a neural network of depth $L$ is given by the following set of forward propagation equations 
\begin{equation}\label{equation:general_neuralnet}
\begin{aligned}
y^l(x) =\mathcal{F}_l(W^l, y^{l-1}(x)) + B^l, \quad 1\leq l \leq L,
\end{aligned}
\end{equation}
where $y^l(x)$ is the vector of pre-activations and $W^l$ and $B^l$ are respectively the weights and bias of the $l^{th}$ layer. $F_l$ is a mapping that defines the nature of the layer. The weights and bias are initialized with $W^l\overset{\text{iid}}{\sim}\mathcal{N}(0,\frac{\sigma_w^2}{v_l})$ where $v_l$ is a scaling factor used to control the variance of $y^l$, and $B^l\overset{\text{iid}}{\sim}\mathcal{N}(0,\sigma_b^2)$. Hereafter, we denote by $M_l$ the number of weights in the $l^{th}$ layer, $\phi$ the activation function and $[n:m]$ the set of integers $\{n,n+1, ..., m\}$ for $n\leq m$. Two examples of such architectures are:

\begin{itemize}
    \item {\bf Fully-connected FeedForward Neural Network (FFNN)} \\
    For a fully connected feedforward neural network of depth $L$ and widths $(N_l)_{l \leq L}$, the forward propagation of the input through the network is given by 
\begin{equation}
\begin{aligned}
y^1_i(x) &= \sum_{j=1}^{d} W^1_{ij} x_j + B^1_i,\\
y^l_i(x) &= \sum_{j=1}^{N_{l-1}} W^l_{ij} \phi(y^{l-1}_j(x)) + B^l_i, \quad \mbox{for } l\geq 2.
\end{aligned}
\end{equation}
Here, we have $v_l = N_{l-1}$ and $M_l = N_{l-1} N_l$.
    \item {\bf Convolutional Neural Network (CNN/ConvNet)} \\
    For a 1D convolutional neural network of depth $L$, number of channels $(n_l)_{l\leq L}$ and number of neurons per channel $(N_l)_{l\leq L}$. we have
\begin{equation}
\begin{aligned}
y^1_{i,\alpha}(x) &= \sum_{j=1}^{n_{l-1}} \sum_{\beta \in ker_l} W^1_{i,j,\beta}  x_{j,\alpha+\beta} + b^1_i,\\
y^l_{i,\alpha}(x) &= \sum_{j=1}^{n_{l-1}} \sum_{\beta \in ker_l} W^l_{i,j,\beta} \phi(y^{l-1}_{j,\alpha+\beta}(x)) + b^l_i,\>\>\mbox{for } l\geq 2,
\end{aligned}
\end{equation}
where $i \in [1:n_l]$ is the channel index, $\alpha \in [0:N_{l}-1]$ is the neuron location, $ker_l = [-k_l: k_l]$ is the filter range and $2k_l +1$ is the filter size. To simplify the analysis, we assume hereafter that $N_l = N$ and $k_l=k$ for all $l$. Here, we have $v_l = n_{l-1} (2k+1)$ and $M_l = n_{l-1} n_l (2k+1)$. We assume periodic boundary conditions, so $y^l_{i, \alpha}=y^l_{i, \alpha+N}=y^l_{i, \alpha-N}$. Generalization to multidimensional convolutions is straighforward. \\
\end{itemize}

{\bf Notation:} Hereafter, for FFNN layers, we denote by $q^l(x)$ the variance of $y^l_1(x)$ (the choice of the index $1$ is not crucial since, by the mean-field approximation, the random variables $(y^l_i(x))_{i \in [1:N_l]}$ are iid Gaussian variables). We denote by $q^l(x,x')$ the covariance between $y^l_1(x)$ and $y^l_1(x')$, and $c^l_1(x,x')$ the corresponding correlation. For gradient back-propagation, for some loss function $\mathcal{L}$, we denote by $\Tilde{q}^l(x,x')$ the gradient covariance defined by  $\Tilde{q}^l(x,x')= \mathbb{E}\left[ \frac{\partial \mathcal{L}}{\partial y^l_1}(x) \frac{\partial \mathcal{L}}{\partial y^l_1}(x')\right]$. Similarly, $\Tilde{q}^l(x)$ denotes the gradient variance at point $x$.\\
For CNN layers, we use similar notation accross channels. More precisely, we denote by $q^l_{\alpha}(x)$ the variance of $y^l_{1,\alpha,}(x)$ (the choice of the index $1$ is not crucial here either since, by the mean-field approximation, the random variables $(y^l_{i,\alpha}(x))_{i \in [1:N_l]}$ are iid Gaussian variables). We denote by $q^l_{\alpha, \alpha'}(x,x')$ the covariance between $y^l_{1,\alpha}(x)$ and $y^l_{1,\alpha'}(x')$, and $c^l_{\alpha,\alpha'}(x,x')$ the corresponding correlation. \\
As in the FFNN case, we define the gradient covariance by $\Tilde{q}^l_{\alpha,\alpha'}(x,x')= \mathbb{E}\left[ \frac{\partial \mathcal{L}}{\partial y^l_{1,\alpha}}(x) \frac{\partial \mathcal{L}}{\partial y^l_{1,\alpha'}}(x')\right]$.

\subsection{Warmup : Some results from the Mean-Field theory of DNNs}\label{subsection:mean_field_warmup}
We start by recalling some results from the mean-field theory of deep NNs.
\subsubsection{Covariance propagation}
{\bf Covariance propagation for FFNN:}\\
In Section \ref{subsection:approx1}, we presented the recursive formula for covariance propagation in a FFNN, which we derive using the Central Limit Theorem. More precisely, for two inputs $x,x' \in \mathbb{R}^d$, we have
$$
q^l(x,x') = \sigma^2_b + \sigma^2_w \mathbb{E}[\phi(y^{l-1}_i(x))\phi(y^{l-1}_i(x'))].
$$
This can be rewritten as
$$
q^l(x,x') = \sigma^2_b + \sigma^2_w \mathbb{E}\left[\phi\left(\sqrt{q^l(x)} Z_1\right)\phi\left(\sqrt{q^l(x')}(c^{l-1} Z_1 + \sqrt{1 - (c^{l-1})^2} Z_2\right)\right],
$$
where $c^{l-1} := c^{l-1}(x,x')$.\\
With a ReLU activation function, we have
$$
q^l(x,x') = \sigma^2_b + \frac{\sigma^2_w}{2} \sqrt{q^l(x)}\sqrt{q^l(x')} f(c^{l-1}),
$$
where $f$ is the ReLU correlation function given by (\cite{hayou})
\begin{align*}
    f(c) = \frac{1}{\pi}(c \arcsin{c} + \sqrt{1- c^2}) + \frac{1}{2}c.
\end{align*}
{\bf Covariance propagation for CNN:}\\
Similar to the FFNN case, it is straightforward to derive recusive formula for the covariance. However, in this case, the independence is across channels and not neurons. Simple calculus yields
$$
q^l_{\alpha,\alpha'}(x,x') = 
\mathbb{E}[y^l_{i,\alpha}(x)y^l_{i,\alpha'}(x')] =   \sigma_b^2 + \frac{\sigma_w^2}{2k + 1} \sum_{\beta \in ker} \mathbb{E}[\phi(y^{l-1}_{1,\alpha + \beta}(x)) \phi(y^{l-1}_{1,\alpha' + \beta}(x'))]
$$
Using a ReLU activation function, this becomes
$$
q^l_{\alpha,\alpha'}(x,x')=   \sigma_b^2 + \frac{\sigma_w^2}{2k + 1} \sum_{\beta \in ker} \sqrt{q^l_{\alpha+\beta}(x)}\sqrt{q^l_{\alpha'+\beta}(x')}f(c^{l-1}_{\alpha+\beta,\alpha'+\beta}(x,x')).
$$
{\bf Covariance propagation for ResNet with ReLU :}\\
This case is similar to the non residual case. However, an added residual term shows up in the recursive formula. For ResNet with FFNN layers, we have 

$$
q^l(x,x') =  q^{l-1}(x,x') +  \sigma^2_b + \frac{\sigma^2_w}{2} \sqrt{q^l(x)}\sqrt{q^l(x')} f(c^{l-1})
$$
and for ResNet with CNN layers, we have
$$
q^l_{\alpha,\alpha'}(x,x') = q^{l-1}_{\alpha,\alpha'}(x,x') +   \sigma_b^2 + \frac{\sigma_w^2}{2k + 1} \sum_{\beta \in ker} \sqrt{q^l_{\alpha+\beta}(x)}\sqrt{q^l_{\alpha'+\beta}(x')}f(c^{l-1}_{\alpha+\beta,\alpha'+\beta}(x,x')).
$$
\subsubsection{Gradient Covariance back-propagation}\label{section:gradient_backprop}
{\bf Gradiant Covariance back-propagation for FFNN:}\\
Let $\mathcal{L}$ be the loss function. Let $x$ be an input. The back-propagation of the gradient is given by the set of equations
\begin{align*}
    \frac{\partial \mathcal{L}}{\partial y^l_i} &= \phi'(y^l_i) \sum_{j=1}^{N_{l+1}} \frac{\partial \mathcal{L}}{\partial y^{l+1}_j} W^{l+1}_{ji}.
\end{align*}
Using the approximation that the weights used for forward propagation are independent from those used in backpropagation, we have as in \cite{samuel}
\begin{equation*}
    \Tilde{q}^l(x) = \Tilde{q}^{l+1}(x) \frac{N_{l+1}}{N_l} \chi(q^l(x)),
\end{equation*}
where $\chi(q^l(x)) = \sigma_w^2 \mathbb{E}[\phi(\sqrt{q^l(x)}Z)^2]$.

{\bf Gradient Covariance back-propagation for CNN:}\\
Similar to the FFNN case, we have that
$$
\frac{\partial{\mathcal{L}}}{\partial W^l_{i,j, \beta}} = \sum_{\alpha} \frac{\partial{\mathcal{L}}}{\partial y^l_{i,\alpha}} \phi(y^{l-1}_{j, \alpha+\beta}) 
$$
and
$$
\frac{\partial{\mathcal{L}}}{\partial y^l_{i, \alpha}} = \sum_{j = 1}^n \sum_{\beta \in ker} \frac{\partial{\mathcal{L}}}{\partial y^{l+1}_{j,\alpha-\beta}} W^{l+1}_{i,j,\beta} \phi'(y^l_{i,\alpha}).
$$

Using the approximation of Gradient independence and averaging over the number of channels (using CLT) we have that 
$$
\mathbb{E}[\frac{\partial{\mathcal{L}}}{\partial y^l_{i, \alpha}}^2] = \frac{\sigma_w^2 \mathbb{E}[\phi'(\sqrt{q^l_{\alpha}(x)}Z)^2]}{2k+1} \sum_{\beta \in ker} \mathbb{E}[\frac{\partial{\mathcal{L}}}{\partial y^{l+1}_{i, \alpha-\beta}}^2].
$$
We can get similar recursion to that of the FFNN case by summing over $\alpha$ and using the periodic boundary condition, this yields
$$
\sum_{\alpha} \mathbb{E}[\frac{\partial{\mathcal{L}}}{\partial y^l_{i, \alpha}}^2] = \chi(q^l_{\alpha}(x)) \sum_{\alpha} \mathbb{E}[\frac{\partial{\mathcal{L}}}{\partial y^{l+1}_{i, \alpha}}^2].
$$

\subsubsection{Edge of Chaos (EOC)}
Let $x \in \mathbb{R}^d$ be an input. The convergence of $q^l(x)$ as $l$ increases has been studied by \cite{samuel} and \cite{hayou}. In particular, under weak regularity conditions, it is proven that $q^l(x)$ converges to a point $q(\sigma_b, \sigma_w)>0$ independent of $x$ as $l \rightarrow \infty$. The asymptotic behaviour of the correlations $c^l(x,x')$ between $y^l(x)$ and $y^l(x')$ for any two inputs $x$ and $x'$ is also driven by $(\sigma_b, \sigma_w)$: the dynamics of $c^l$ is controlled by a function $f$ i.e. $c^{l+1} = f(c^l)$ called the correlation function. The authors define the EOC as the set of parameters $(\sigma_b, \sigma_w)$ such that $\sigma_w^2 \mathbb{E}[\phi'(\sqrt{q(\sigma_b, \sigma_w)}Z)^2] =1$ where $Z \sim \mathcal{N}(0, 1)$. Similarly the Ordered, resp. Chaotic, phase is defined by $\sigma_w^2 \mathbb{E}[\phi'(\sqrt{q(\sigma_b, \sigma_w)}Z)^2] < 1$, resp. $\sigma_w^2 \mathbb{E}[\phi'(\sqrt{q(\sigma_b, \sigma_w)}Z)^2] > 1$. On the Ordered phase, the gradient will vanish as it backpropagates through the network, and the correlation $c^l(x,x')$ converges exponentially to $1$. Hence the output function becomes constant (hence the name 'Ordered phase'). On the Chaotic phase, the gradient explodes and the correlation converges exponentially to some limiting value $c<1$ which results in the output function being discontinuous everywhere (hence the 'Chaotic' phase name). On the EOC, the second moment of the gradient remains constant throughout the backpropagation and the correlation converges to 1 at a sub-exponential rate, which allows deeper information propagation. Hereafter, {\bf $f$ will always refer to the correlation function}.

\subsubsection{Some results from the Mean-Field theory of Deep FFNNs}
Let $\epsilon \in (0,1)$ and $B_\epsilon = \{ (x,x') \in  (\mathbb{R}^d)^2 : c^1(x,x') < 1 - \epsilon\}$ (For now $B_\epsilon$ is defined only for FFNN).

Using Approximation 1, the following results have been derived by \cite{samuel} and \cite{hayou}: 

\begin{itemize}

    \item There exist $q, \lambda>0$ such that $\sup_{x \in \mathbb{R}^d} |q^l(x)-q| \leq e^{-\lambda l}$.
    \item On the Ordered phase, there exists $\gamma >0$ such that $\sup_{x, x' \in \mathbb{R}^d} |c^{l}(x,x')-1| \leq e^{-\gamma l}$.
    \item On the Chaotic phase, For all $\epsilon \in (0,1)$ there exist $\gamma >0$ and $c <1$ such that $\sup_{(x,x') \in B_\epsilon} |c^{l}(x,x')-c| \leq e^{-\gamma l}$.
    \item For ReLU network on the EOC, we have $$f(x) \underset{x \rightarrow 1-}{=} x + \frac{2 \sqrt{2}}{3 \pi} (1-x)^{3/2} + O((1-x)^{5/2}).$$
    \item In general, we have 
    \begin{align}
        \label{eq:funtionf}
        f(x) = \frac{\sigma_b^2 + \sigma_w^2 \mathbb{E}[\phi(\sqrt{q}Z_1) \phi(\sqrt{q}Z(x))]}{q},
    \end{align}where $Z(x) = xZ_1 + \sqrt{1-x^2} Z_2$ and $Z_1, Z_2$ are iid standard Gaussian variables.
    \item On the EOC, we have $f'(1)=1$
    \item On the Ordered, resp. Chaotic, phase we have that  $f'(1)<1$, resp. $f'(1)>1$.
    \item For non-linear activation functions, $f$ is strictly convex and $f(1) = 1$. 
    \item $f$ is increasing on $[-1,1]$.
    \item On the Ordered phase and EOC, $f$ has one fixed point which is $1$. On the chaotic phase, $f$ has two fixed points: $1$ which is unstable, and $c \in (0,1)$ which is a stable fixed point.
    \item On the Ordered/Chaotic phase, the correlation between gradients computed with different inputs converges exponentially to 0 as we back-progapagate the gradients.\\
\end{itemize}

Similar results exist for CNN. \cite{xiao_cnnmeanfield} show that, similarly to the FFNN case, there exists $q$ such that $q^l_{\alpha}(x)$ converges exponentially to $q$ for all $x,\alpha$, and studied the limiting behaviour of correlation between neurons at the same channel $c^l_{\alpha,\alpha'}(x,x)$  (same input $x$). These correlations describe how features are correlated for the same input. However, they do not capture the behaviour of these features for different inputs (i.e. $c^l_{\alpha, \alpha'}(x,x')$ where $x\neq x'$). We establish this result in the next section.

\subsection{Correlation behaviour in CNN in the limit of large depth}

\begin{lemma2}[Asymptotic behaviour of the correlation in CNN with smooth activation functions]\label{lemma:correlation_convergence_convnet}
We consider a 1D CNN. Let $(\sigma_b, \sigma_w) \in (\mathbb{R}^+)^2$ and $x\neq x'$ be two inputs $\in \mathbb{R}^d$. 
If $(\sigma_b, \sigma_w)$ are either on the Ordered or Chaotic phase, then there exists $\beta>0$ such that
    $$
    \sup_{\alpha,\alpha'} |c^l_{\alpha,\alpha'}(x,x') - c| = \mathcal{O}(e^{-\beta l}),
    $$
    where $c=1$ if $(\sigma_b, \sigma_w)$ is in the Ordered phase, and $c \in (0,1)$ if $(\sigma_b, \sigma_w)$ is in the Chaotic phase.

\end{lemma2}

\begin{proof}

Let $x\neq x'$ be two inputs and $\alpha, \alpha'$ two nodes in the same channel $i$. From Section \ref{subsection:mean_field_warmup}, we have that
$$
q^l_{\alpha,\alpha'}(x,x') = 
\mathbb{E}[y^l_{i,\alpha}(x)y^l_{i,\alpha'}(x')] =   \frac{\sigma_w^2}{2k + 1} \sum_{\beta \in ker} \mathbb{E}[\phi(y^{l-1}_{1,\alpha + \beta}(x)) \phi(y^{l-1}_{1,\alpha' + \beta}(x'))] + \sigma_b^2.
$$
This yields
$$
c^l_{\alpha,\alpha'}(x,x') =  \frac{1}{2k + 1} \sum_{\beta \in ker} f(c^{l-1}_{\alpha+\beta, \alpha'+\beta}(x,x')),
$$
where $f$ is the correlation function.\\ We prove the result in the Ordered phase, the proof in the Chaotic phase is similar. Let $(\sigma_b, \sigma_w)$ be in the Ordered phase and $c^l_m = \min_{\alpha,\alpha'} c^l_{\alpha, \alpha'}(x,x')$. Using the fact that $f$ is non-decreasing (section \ref{subsection:mean_field_warmup}), we have that $c^l_{\alpha,\alpha'}(x,x') \geq \frac{1}{2k + 1} \sum_{\beta \in ker} c^{l-1}_{\alpha+\beta, \alpha'+\beta}(x,x')) \geq f(c^{l-1}_m)$. Taking the min again over $\alpha, \alpha'$, we have $c^l_m \geq f(c^{l-1}_m)$, therefore $c^l_m$ is non-decreasing and converges to a stable fixed point of $f$. By the convexity of $f$, the limit is $1$ (in the Chaotic phase, $f$ has two fixed point, a stable point $c_1 < 1$ and $c_2 = 1 $ unstable). Moreover, the convergence is exponential using the fact that $0<f'(1) < 1$. We conclude using the fact that $\sup_{\alpha,\alpha'} |c^l_{\alpha,\alpha'}(x,x') - 1| = 1 - c^l_m$.
\end{proof}


\section{Proofs for Section 2 : SBP for FFNN/CNN and the Rescaling Trick}\label{proofSection2}
In this section, we prove Theorem \ref{thm:sensitivity_based_pruning} and Proposition \ref{prop:resclaing_trick}. Before proving Theorem \ref{thm:sensitivity_based_pruning}, we state the degeneracy approximation.
\begin{approximation}[Degeneracy on the Ordered phase]\label{approximation:degeneracy}
On the Ordered phase, the correlation $c^l$ and the variance $q^l$ converge exponentially quickly to their limiting values $1$ and $q$ respectively. The degeneracy approximation for FFNN states that
\begin{itemize}
    \item $\forall x \neq x', c^l(x,x') \approx 1$
    \item $\forall x,  q^l(x) \approx q$
\end{itemize}
For CNN,
\begin{itemize}
    \item $\forall x \neq x', \alpha, \alpha', c^l_{\alpha, \alpha'}(x,x') \approx 1$
    \item $\forall x,  q^l_{\alpha}(x) \approx q$
\end{itemize}
\end{approximation}

The degeneracy approximation is essential in the proof of Theorem \ref{thm:sensitivity_based_pruning} as it allows us to avoid many unnecessary complications. However, the results hold without this approximation although the constants are different.
\begin{thm2}[Initialization is crucial for SBP]\label{thm:sensitivity_based_pruning}
We consider a FFNN (\ref{equation:ffnn_net}) or a CNN (\ref{equation:convolutional_net}). Assume $(\sigma_w, \sigma_b)$ are chosen on the ordered phase, i.e. $\chi(\sigma_b, \sigma_w) < 1$, then the NN is ill-conditioned. Moreover, we have 
    \begin{equation*}
     \mathbb{E}[s_{cr}] \leq  \frac{1}{L} \left(1 + \frac{\log(\kappa L N^2)}{\kappa}\right) + \mathcal{O}\left(\frac{1}{\kappa^2 \sqrt{LN^2}}\right),
 \end{equation*}
 where $\kappa =|\log \chi(\sigma_b, \sigma_w)|/8$. If $(\sigma_w, \sigma_b)$ are on the EOC, i.e. $\chi(\sigma_b, \sigma_w)= 1$,  then the NN is well-conditioned. In this case, $\kappa = 0$ and the above upper bound no longer holds.
\end{thm2}

\begin{proof}
We prove the result using Approximation \ref{approximation:degeneracy}.
\begin{enumerate}
    \item {\bf Case 1 : Fully connected Feedforward Neural Networks}\\
To simplify the notation, we assume that $N_l =N $ and $M_l=N^2$ (i.e. $\alpha_l = 1$ and $\zeta_l = 1$) for all $l$. We prove the result for the Ordered phase, the proof for the Chaotic phase is similar. \\
Let $L_0 \gg 1$, $\epsilon \in (0,1-\frac{1}{L_0})$, $L\geq L_0$ and $x \in (\frac{1}{L} + \epsilon, 1)$. With sparsity $x$, we keep $k_x = \floor{(1-x) L N^2}$ weights. We have
$$\mathbb{P}(s_{cr} \leq x)  \geq \mathbb{P}(\max_{i,j} |W^{1}_{ij}|\big| \frac{\partial \mathcal{L}}{\partial W^{1}_{ij}}\big| < t^{(k_x)})$$
where $t^{(k_x)}$ is the $k_x^{th}$ order statistic of the sequence $\{|W^l_{ij}| \big| \frac{\partial \mathcal{L}}{\partial W^{l}_{ij}}\big| , l>0, (i,j) \in [1: N]^2\}$.\\

We have 
\begin{align*}
    \frac{\partial{\mathcal{L}}}{\partial W^l_{ij}} &= \frac{1}{|\mathcal{D}|} \sum_{x \in \mathcal{D}} \frac{\partial{\mathcal{L}}}{\partial y^l_{i}(x)} \frac{\partial y^l_{i}(x)} {\partial W^l_{ij}}\\
    &= \frac{1}{|\mathcal{D}|} \sum_{x \in \mathcal{D}} \frac{\partial{\mathcal{L}}}{\partial y^l_{i}(x)} \phi(y^{l-1}_j(x)).
\end{align*}

On the Ordered phase, the variance $q^l(x)$ and the correlation $c^l(x,x')$ converge exponentially to their limiting values $q$, $1$ (Section \ref{subsection:mean_field_warmup}). Under the degeneracy Approximation \ref{approximation:degeneracy}, we have
\begin{itemize}
    \item $\forall x \neq x', c^l(x,x') \approx 1$
    \item $\forall x,  q^l(x) \approx q$
\end{itemize}
Let $\Tilde{q}^l(x) = \mathbb{E}[\frac{\partial \mathcal{L}}{\partial y^l_i(x)}^2]$ (the choice of $i$ is not important since $(y^l_i(x))_i$ are iid ). Using these approximations, we have that $y^l_i(x) = y^l_i(x')$ almost surely for all $x, x'$. Thus
$$
\mathbb{E}\big[ \frac{\partial{\mathcal{L}}}{\partial W^l_{ij}}^2 \big] = \mathbb{E}[\phi(\sqrt{q}Z)^2] \Tilde{q}^l(x),
$$
where $x$ is an input. The choice of $x$ is not important in our approximation.\\
From Section \ref{section:gradient_backprop}, we have
\begin{equation*}
    \Tilde{q}^l(x) = \Tilde{q}^{l+1}(x) \frac{N_{l+1}}{N_l} \chi.
\end{equation*}
Then we obtain
$$
\Tilde{q}^l(x) =\frac{N_L}{N_l} \Tilde{q}^{L}(x) \chi^{L-l}= \Tilde{q}^{L}(x) \chi^{L-l},
$$
where $\chi = \sigma_w^2 \mathbb{E}[\phi(\sqrt{q}Z)^2]$ as we have assumed $N_l=N$. Using this result, we have 
$$
\mathbb{E}\big[ \frac{\partial{\mathcal{L}}}{\partial W^l_{ij}}^2 \big] = A~\chi^{L-l},
$$
where $A=\mathbb{E}[\phi(\sqrt{q}Z)^2]\Tilde{q}^L_{x}$ for an input $x$. Recall that by definition, one has $\chi < 1$ on the Ordered phase.\\

In the general case, i.e. without the degeneracy approximation on $c^l$ and $q^l$, we can prove that 
$$
\mathbb{E}\big[ \frac{\partial{\mathcal{L}}}{\partial W^l_{ij}}^2 \big] = \Theta(\chi^{L-l})
$$
which suffices for the rest of the proof. However, the proof of this result requires many unnecessary complications that do not add any intuitive value to the proof.\\

In the general case where the widths are different, $\Tilde{q}^l$ will also scale as $\chi^{L-l}$ up to a different constant.\\
Now we want to lower bound the probability $$\mathbb{P}(\max_{i,j} |W^{1}_{ij}|\big| \frac{\partial \mathcal{L}}{\partial W^{1}_{ij}}\big| < t^{(k_x)}).$$\\
Let $t_{\epsilon}^{(k_x)}$ be the $k_x^{\text{th}}$ order statistic of the sequence $\{|W^l_{ij}| \big| \frac{\partial \mathcal{L}}{\partial W^{l}_{ij}}\big| , l>1 + \epsilon L, (i,j) \in [1: N]^2\}$. It is clear that $t^{(k_x)} > t_{\epsilon}^{(k_x)}$, therefore 
$$\mathbb{P}(\max_{i,j} |W^{1}_{ij}|\big| \frac{\partial \mathcal{L}}{\partial W^{1}_{ij}}\big| < t^{(k_x)}) \geq \mathbb{P}(\max_{i,j} |W^{1}_{ij}|\big| \frac{\partial \mathcal{L}}{\partial W^{1}_{ij}}\big| < t_{\epsilon}^{(k_x)}).$$\\
Using Markov's inequality, we have that 
\begin{equation}\label{equa:markov_ineq}
  \mathbb{P}(\big| \frac{\partial \mathcal{L}}{\partial W^{1}_{ij}}\big| \geq \alpha ) \leq \frac{\mathbb{E}\big[ \big| \frac{\partial \mathcal{L}}{\partial W^{1}_{ij}}\big|^2\big]}{\alpha^2}.  
\end{equation}
Note that $ \text{Var}(\chi^{\frac{l-L}{2}}\big| \frac{\partial \mathcal{L}}{\partial W^{l}_{ij}}\big|) = A$.  In general, the random variables $ \chi^{\frac{l-L}{2}} \big| \frac{\partial \mathcal{L}}{\partial W^{l}_{ij}}\big|$ have a density $f^l_{ij}$ for all $ l>1 + \epsilon L, (i,j) \in[1: N]^2$, such that $f^l_{ij}(0)\neq0$. Therefore, there exists a constant $\lambda$ such that for $x$ small enough,
\begin{align*}
\mathbb{P}( \chi^{\frac{l-L}{2}} \big| \frac{\partial \mathcal{L}}{\partial W^{l}_{ij}}\big| \geq x) \geq 1 - \lambda x.
\end{align*}
By selecting $x = \chi^{\frac{(1 - \epsilon/2) L -1}{2}}$, we obtain
$$
\chi^{\frac{l - L}{2 }} \times x \leq \chi^{\frac{(1 + \epsilon L) - L}{2 }} \chi^{\frac{(1 - \epsilon/2) L -1}{2}} = \chi^{\epsilon L /2}.
$$
Therefore, for $L$ large enough, and all $l > 1 + \epsilon L$, $(i,j) \in [1: N_{l}]\times [1: N_{l-1}]  = [1:N]^2$, we have 
$$
\mathbb{P}( \big| \frac{\partial \mathcal{L}}{\partial W^{l}_{ij}}\big| 
\geq \chi^{\frac{(1 - \epsilon/2) L -1}{2}}) \geq 1 - \lambda~\chi^{\frac{l - (\epsilon L/2  +1)}{2}} \geq 1 - \lambda~\chi^{\epsilon L/2}.
$$
Now choosing $\alpha = \chi^{\frac{(1 - \epsilon/4) L -1}{2}}$ in inequality (\ref{equa:markov_ineq}) yields
$$
\mathbb{P}(\big| \frac{\partial \mathcal{L}}{\partial W^{1}_{ij}}\big| \geq \chi^{\frac{(1 - \epsilon/4) L -1}{2}} ) \geq 1 - A~\chi^{\epsilon L /4}.
$$

Since we do not know the exact distribution of the gradients, the trick is to bound them using the previous concentration inequalities. We define the event $B := \{ \forall (i,j) \in [1: N]\times [1:d], \big| \frac{\partial \mathcal{L}}{\partial W^{1}_{ij}}\big| \leq \chi^{\frac{(1 - \epsilon/4) L -1}{2}} \} \cap \{ \forall l>1 + \epsilon L, (i,j) \in [1: N]^2, \big| \frac{\partial \mathcal{L}}{\partial W^{l}_{ij}}\big| \geq \chi^{\frac{(1 - \epsilon/2) L -1}{2}} \}$.

We have 
\begin{align*}
\mathbb{P}(\max_{i,j} |W^{1}_{ij}|\big| \frac{\partial \mathcal{L}}{\partial W^{1}_{ij}}\big| < t_{\epsilon}^{(k_x)}) &\geq \mathbb{P}(\max_{i,j} |W^{1}_{ij}|\big| \frac{\partial \mathcal{L}}{\partial W^{1}_{ij}}\big| < t_{\epsilon}^{(k_x)} \big | B) \mathbb{P}(B).
\end{align*}
But, by conditioning on the event $B$, we also have 
\begin{align*}
\mathbb{P}(\max_{i,j} |W^{1}_{ij}|\big| \frac{\partial \mathcal{L}}{\partial W^{1}_{ij}}\big| < t_{\epsilon}^{(k_x)} \big | B) \geq \mathbb{P}(\max_{i,j} |W^{1}_{ij}| < \chi^{-\epsilon L/8} (t')_{\epsilon}^{(k_x)}),
\end{align*}
where $(t')_{\epsilon}^{(k_x)}$ is the $k_x^{\text{th}}$ order statistic of the sequence $\{|W^l_{ij}|  , l>1 + \epsilon L, (i,j) \in [1: N]^2\}$.

Now, as in the proof of Proposition \ref{prop:expectation_scr_large_depth} in Appendix \ref{section:MBP} (MBP section), define $x_{\zeta, \gamma_L} = \min \{ y \in (0,1) : \forall x>y, \gamma_L Q_{x} > Q_{1 - (1-x)^{\gamma_L^{2 - \zeta}}} \}$, where $\gamma_L = \chi^{-\epsilon L /8}$. Since $\lim_{\zeta \rightarrow 2} x_{\zeta, \gamma_L} = 0$, then there exists $\zeta_\epsilon < 2$ such that $x_{\zeta_\epsilon, \gamma_L} = \epsilon + \frac{1}{L}$.

As $L$ grows, $(t')_{\epsilon}^{(k_x)}$ converges to the quantile of order $\frac{x - \epsilon}{1 - \epsilon}$. Therefore, using classic Berry-Essen bounds on the cumulative distribution function, it is easy to obtain
\begin{align*}
\mathbb{P}(\max_{i,j} |W^{1}_{ij}| < \chi^{-\epsilon L/8} (t')_{\epsilon}^{(k_x)}) &\geq \mathbb{P}(\max_{i,j} |W^{1}_{ij}| <  Q_{1 - (1-\frac{x-\epsilon}{1-\epsilon})^{\gamma_L^{2 - \zeta_\epsilon}}}) + \mathcal{O}(\frac{1}{\sqrt{L N^2}})\\
&\geq 1 - N^2 (\frac{x-\epsilon}{1-\epsilon})^{\gamma_L^{2 - \zeta_\epsilon}} + \mathcal{O}(\frac{1}{\sqrt{L N^2}}).
\end{align*}

Using the above concentration inequalities on the gradient, we obtain
$$
\mathbb{P}(B) \geq (1 - A~\chi^{\epsilon L /4})^{N^2} (1 - \lambda~\chi^{\epsilon L /2})^{L N^2}.
$$
Therefore there exists a constant $\eta>0$ independent of $\epsilon$ such that 
$$\mathbb{P}(B) \geq 1 - \eta L N^2\chi^{ \epsilon L/4}.$$
Hence, we obtain
$$
\mathbb{P}(s_{cr} \geq x) \leq N^2 (\frac{x-\epsilon}{1-\epsilon})^{\gamma_L^{2 - \zeta_\epsilon}} + \eta L N^2 \chi^{\epsilon L/4} + \mathcal{O}(\frac{1}{\sqrt{L N^2}}).
$$
Integration of the previous inequality yields
\begin{align*}
\mathbb{E}[s_{cr}] &\leq \epsilon + \frac{1}{L} + \frac{N^2}{1 + \gamma_L^{2-\zeta_\epsilon}} + \eta L N^2 \chi^{ \epsilon L/4} + \mathcal{O}(\frac{1}{\sqrt{L N^2}}).\\
\end{align*}
Now let $\kappa = \frac{|\log(\chi)|}{8}$ and set $\epsilon = \frac{\log(\kappa L N^2)}{\kappa L}$. By the definition of $x_{\zeta_{\epsilon}, \gamma_L}$, we have 
$$
\gamma_L Q_{x_{\zeta_\epsilon,\gamma_L}} =Q_{1 - (1 - x_{\zeta_\epsilon,\gamma_L})^{\gamma_L^{2 - \zeta_{\epsilon}}}.} 
$$
For the left hand side, we have 
$$
\gamma_L  Q_{x_{\zeta_\epsilon,\gamma_L}} \sim \alpha \gamma_L \frac{\log(\kappa L N^2)}{\kappa L}
$$
where $\alpha>0$ is the derivative at $0$ of the function $x \rightarrow Q_{x}$ (the derivative at zero of the cdf of $|\mathcal{N}(0,1)|$ is positive). Since $\gamma_L = \kappa L N^2$, we have 
$$
\gamma_L  Q_{x_{\zeta_\epsilon,\gamma_L}} \sim \alpha N^2 \log(\kappa L N^2)
$$
Which diverges as $L$ goes to infinity. In particular this proves that the right hand side diverges and therefore we have that $(1 - x_{\zeta_\epsilon,\gamma_L})^{\gamma_L^{2 - \zeta_{\epsilon}}}$ converges to $0$ as $L$ goes to infinity.\\
Using the asymptotic equivalent of the right hand side as $L \rightarrow \infty$, we have
$Q_{1 - (1 - x_{\zeta_\epsilon,\gamma_L})^{\gamma_L^{2 - \zeta_{\epsilon}}}} \sim \sqrt{ -2 \log((1 - x_{\zeta_\epsilon,\gamma_L})^{\gamma_L^{2 - \zeta_{\epsilon}}})} = \gamma_L^{1 - \zeta_{\epsilon}/2} \sqrt{ -2 \log(1 - x_{\zeta_\epsilon,\gamma_L}}). 
$
Therefore, we obtain
$$
Q_{1 - (1 - x_{\zeta_\epsilon,\gamma_L})^{\gamma_L^{2 - \zeta_{\epsilon}}}} \sim  \gamma_L^{1 - \zeta_{\epsilon}/2} \sqrt{ \frac{2\log(\kappa L N^2)}{\kappa L}}.
$$

Combining this result to the fact that
$
\gamma_L  Q_{x_{\zeta_\epsilon,\gamma_L}} \sim \alpha \gamma_L \frac{\log(\kappa L N^2)}{\kappa L}
$
we obtain $$
\gamma_L^{-\zeta_{\epsilon}} \sim \beta \frac{\log(\kappa L N^2)}{\kappa L},
$$
where $\beta$ is a positive constant. This yields
\begin{align*}
\mathbb{E}[s_{cr}] &\leq \frac{\log(\kappa L N^2)}{\kappa L} + \frac{1}{L} + \frac{\mu}{\kappa L N^2 \log(\kappa L N^2)}(1 + o(1)) + \eta \frac{1}{\kappa^2 L N^2} + \mathcal{O}(\frac{1}{\sqrt{LN^2}})\\
&= \frac{1}{L}(1 + \frac{\log(\kappa L N^2)}{\kappa}) + \mathcal{O}(\frac{1}{\kappa^2 \sqrt{LN^2}}),
\end{align*}
where $\kappa =\frac{|\log(\chi)|}{8}$ and $\mu$ is a constant.

\item  {\bf Case 2 : Convolutional Neural Networks}\\
The proof for CNNs is similar to that of FFNN once we prove that
$$
\mathbb{E}\big[ \frac{\partial{\mathcal{L}}}{\partial W^l_{i,j, \beta}}^2 \big] = A~\chi^{L-l}
$$
where $A$ is a constant. 
We have that 

$$
\frac{\partial{\mathcal{L}}}{\partial W^l_{i,j, \beta}} = \sum_{\alpha} \frac{\partial{\mathcal{L}}}{\partial y^l_{i,\alpha}} \phi(y^{l-1}_{j, \alpha+\beta}) 
$$
and
$$
\frac{\partial{\mathcal{L}}}{\partial y^l_{i, \alpha}} = \sum_{j = 1}^n \sum_{\beta \in ker} \frac{\partial{\mathcal{L}}}{\partial y^{l+1}_{j,\alpha-\beta}} W^{l+1}_{i,j,\beta} \phi'(y^l_{i,\alpha}).
$$

Using the approximation of Gradient independence and averaging over the number of channels (using CLT) we have that 
$$
\mathbb{E}[\frac{\partial{\mathcal{L}}}{\partial y^l_{i, \alpha}}^2] = \frac{\sigma_w^2 \mathbb{E}[\phi'(\sqrt{q}Z)^2]}{2k+1} \sum_{\beta \in ker} \mathbb{E}[\frac{\partial{\mathcal{L}}}{\partial y^{l+1}_{i, \alpha-\beta}}^2].
$$
Summing over $\alpha$ and using the periodic boundary condition, this yields
$$
\sum_{\alpha} \mathbb{E}[\frac{\partial{\mathcal{L}}}{\partial y^l_{i, \alpha}}^2] = \chi \sum_{\alpha} \mathbb{E}[\frac{\partial{\mathcal{L}}}{\partial y^{l+1}_{i, \alpha}}^2].
$$

Here also, on the Ordered phase, the variance $q^l$ and the correlation $c^l$ converge exponentially to their limiting values $q$ and $1$ respectively. As for FFNN, we use the degeneracy approximation that states

\begin{itemize}
    \item $\forall x \neq x', \alpha,\alpha', c^l_{\alpha, \alpha'}(x,x') \approx 1,$
    \item $\forall x,  q^l_{\alpha}(x) \approx q$.
\end{itemize}

Using these approximations, we have 

$$
\mathbb{E}\big[ \frac{\partial{\mathcal{L}}}{\partial W^l_{i,j,\beta}}^2 \big] = \mathbb{E}[\phi(\sqrt{q}Z)^2] \Tilde{q}^l(x),
$$
where $\Tilde{q}^l(x) = \sum_{\alpha} \mathbb{E}[\frac{\partial{\mathcal{L}}}{\partial y^l_{i, \alpha}(x)}^2] $ for an input $x$. The choice of $x$ is not important in our approximation.\\

From the analysis above, we have 
$$
\Tilde{q}^l(x) = \Tilde{q}^{L}(x)\chi^{L-l},
$$
so we conclude that 

$$
\mathbb{E}\big[ \frac{\partial{\mathcal{L}}}{\partial W^l_{i,j,\beta}}^2 \big] = A~ \chi^{L-l}
$$
where $A = \mathbb{E}[\phi(\sqrt{q}Z)^2]  \Tilde{q}^L(x)$.
\end{enumerate}

With EOC initialization, classic results from \cite{samuel, hayou} show that the second moment of the gradient is the same for all layers. It follows that $m_l$ is the same for all layers up to some constants that do not depend on $L, l$. This concludes the proof.
\end{proof}

After pruning, the network is usually `deep' in the Ordered phase in the sense that $\chi = f'(1) \ll 1$. To re-place it on the Edge of Chaos, we use the Rescaling Trick.
\begin{prop2}[Rescaling Trick]\label{prop:resclaing_trick}
Consider a NN of the form (\ref{equation:ffnn_net}) or (\ref{equation:convolutional_net}) (FFNN or CNN) initialized on the EOC. Then, after pruning, the sparse network is not initialized on the EOC. However, the rescaled sparse network 
\begin{equation}
\begin{aligned}
y^l(x) =\mathcal{F}(\rho^{l} \circ \delta^l \circ W^l, y^{l-1}(x)) + B^l, \quad \mbox{for } l\geq1,
\end{aligned}
\end{equation}
where 
\begin{itemize}
    \item $\rho^l_{ij} = \frac{1}{\sqrt{\mathbb{E}[N_{l-1}(W^l_{i1})^2 \delta^l_{i1}]}}$ for FFNN of the form (\ref{equation:ffnn_net}),
    \item $\rho^l_{i,j,\beta} = \frac{1}{\sqrt{\mathbb{E}[n_{l-1}(W^l_{i,1,\beta})^2 \delta^l_{i,1,\beta}]}}$ for CNN of the form (\ref{equation:convolutional_net}),
\end{itemize}
is initialized on the EOC.
\end{prop2}

\begin{proof}
For two inputs $x, x'$, the forward propagation of the covariance is given by\\
\begin{align*}
\hat{q}^l(x,x') &= \mathbb{E}[y^l_i(x)y^l_i(x')]\\
&= \mathbb{E}[\sum_{j, k}^{N_{l-1}} W^l_{ij} W^l_{ik} \delta^l_{ij} \delta^l_{ik}\phi(\hat{y}^{l-1}_j(x)) \phi(\hat{y}^{l-1}_j(x'))] + \sigma_b^2.
\end{align*}

We have
\begin{align*}
    \frac{\partial{\mathcal{L}}}{\partial W^l_{ij}} &= \frac{1}{|\mathcal{D}|} \sum_{x \in \mathcal{D}} \frac{\partial{\mathcal{L}}}{\partial y^l_{i}(x)} \frac{\partial y^l_{i}(x)} {\partial W^l_{ij}}\\
    &= \frac{1}{|\mathcal{D}|} \sum_{x \in \mathcal{D}} \frac{\partial{\mathcal{L}}}{\partial y^l_{i}(x)} \phi(y^{l-1}_j(x)).
\end{align*}
Under the assumption that the weights used for forward propagation are independent from the weights used for back-propagation, $W^l_{ij}$ and $\frac{\partial{\mathcal{L}}}{\partial y^l_{i}(x)}$ are independent for all $x \in \mathcal{D}$. We also have that $W^l_{ij}$ and $\phi(y^{l-1}_j(x))$ are independent for all $x \in \mathcal{D}$. Therefore, $W^l_{ij}$ and $\frac{\partial{\mathcal{L}}}{\partial W^l_{ij}}$ are independent for all $l, i, j$. This yields
\begin{align*}
\hat{q}^l(x,x') &= \sigma_w^2 \alpha_l \mathbb{E}[\phi(\hat{y}^{l-1}_1(x))\phi(\hat{y}^{l-1}_1(x'))] + \sigma_b^2,
\end{align*}
where $\alpha_l = \mathbb{E}[N_{l-1}(W^l_{11})^2 \delta^l_{11}]$ (the choice of $i, j$ does not matter because they are iid). Unless we do not prune any weights from the $l^{th}$ layer, we have that $\alpha_l < 1$. \\
These dynamics are the same as a FFNN with the variance of the weights given by $\hat{\sigma}_w^2 = \sigma_w^2 \alpha_l$. Since the EOC equation is given by $\sigma_w^2 \mathbb{E}[\phi'(\sqrt{q}Z)^2] = 1$, with the new variance, it is clear that $\hat{\sigma}_w^2 \mathbb{E}[\phi'(\sqrt{\hat{q}}Z)^2] \neq 1$ in general. Hence, the network is no longer on the EOC and this could be problematic for training.\\
With the rescaling, this becomes 
\begin{align*}
\hat{q}^l(x,x') &= \sigma_w^2 \rho_l^2  \alpha_l \mathbb{E}[\phi(\Tilde{y}^{l-1}_1(x))\phi(\Tilde{y}^{l-1}_1(x'))] + \sigma_b^2\\
&= \sigma_w^2 \mathbb{E}[\phi(\Tilde{y}^{l-1}_1(x))\phi(\Tilde{y}^{l-1}_1(x'))] + \sigma_b^2.
\end{align*}
Therefore, the new variance after re-scaling is $\Tilde{\sigma}_w^2 = \sigma_w^2$, and the limiting variance $\Tilde{q} = q$ remains also unchanged since the dynamics are the same. Therefore $\Tilde{\sigma}^2_w \mathbb{E}[\phi'(\sqrt{\Tilde{q}}Z)^2] = \sigma^2_w \mathbb{E}[\phi'(\sqrt{q}Z)^2] = 1  $. Thus, the re-scaled network is initialized on the EOC. The proof is similar for CNNs.

\end{proof}

\section{Proof for section 3 : SBP for Stable Residual Networks}\label{proofSection3}
\begin{thm2}[Resnet is well-conditioned]\label{thm:resnet_well_conditioned}
Consider a Resnet with either Fully Connected or Convolutional layers and ReLU activation function. Then for all $\sigma_w>0$,  the Resnet is well-conditioned. Moreover, for all $l \in \{1, ..., L\}, m^l = \Theta((1 + \frac{\sigma_w^2}{2})^L)$. 
\end{thm2}
\begin{proof}
Let us start with the case of a Resnet with Fully Connected layers. we have that

\begin{align*}
    \frac{\partial{\mathcal{L}}}{\partial W^l_{ij}} &= \frac{1}{|\mathcal{D}|} \sum_{x \in \mathcal{D}} \frac{\partial{\mathcal{L}}}{\partial y^l_{i}(x)} \frac{\partial y^l_{i}(x)} {\partial W^l_{ij}}\\
    &= \frac{1}{|\mathcal{D}|} \sum_{x \in \mathcal{D}} \frac{\partial{\mathcal{L}}}{\partial y^l_{i}(x)} \phi(y^{l-1}_j(x))
\end{align*}

and the backpropagation of the gradient is given by the set of equations
\begin{align*}
    \frac{\partial \mathcal{L}}{\partial y^l_i} &= \frac{\partial \mathcal{L}}{\partial y^{l+1}_i} +  \phi'(y^l_i) \sum_{j=1}^{N_{l+1}} \frac{\partial \mathcal{L}}{\partial y^{l+1}_j} W^{l+1}_{ji}.
\end{align*}

Recall that $q^l(x) = \mathbb{E}[y^l_i(x)^2]$ and $\Tilde{q}^l(x,x') = \mathbb{E}[\frac{\partial \mathcal{L}}{\partial y^l_i(x)} \frac{\partial \mathcal{L}}{\partial y^l_i(x')}]$ for some inputs $x,x'$.  We have that
$$
q^l(x) =  \mathbb{E}[y^{l-1}_i(x)^2] + \sigma_w^2 \mathbb{E}[\phi(y^{l-1}_1)^2] = (1 + \frac{\sigma_w^2}{2}) q^{l-1}(x),
$$
and
\begin{equation*}
    \Tilde{q}^l(x,x') = (1 + \sigma_w^2 \mathbb{E}[\phi'(y^l_i(x))\phi'(y^l_i(x'))])\Tilde{q}^{l+1}(x,x').
\end{equation*}
We also have 
\begin{align*}
\mathbb{E}[\frac{\partial{\mathcal{L}}}{\partial W^l_{ij}}^2] &= \frac{1}{|\mathcal{D}|^2} \sum_{x,x'} t^l_{x,x'},
\end{align*}
where $t^l_{x,x'} = \Tilde{q}^l(x,x') \sqrt{q^l(x) q^l(x')} f(c^{l-1}(x,x'))$ and $f$ is defined in the preliminary results (Eq \ref{eq:funtionf}).\\
Let $k \in \{1,2, ..., L\}$ be fixed. We compare the terms $t^l_{x,x'}$ for $l=k$ and $l=L$. The ratio between the two terms is given by (after simplification)

$$
\frac{t^k_{x,x'}}{t^L_{x,x'}} = \frac{\prod_{l=k}^{L-1} (1 + \frac{\sigma_w^2}{2}f'(c^l(x,x')))}{(1 + \frac{\sigma_w^2}{2})^{L-k}} \frac{f(c^{k-1}(x,x'))}{f(c^{L-1}(x,x'))}.
$$
We have that $f'(c^l(x,x))= f'(1)=1$. A Taylor expansion of $f$ near $1$ yields $f'(c^l(x,x')) = 1 - l^{-1}+ o(l^{-1})$ and $f(c^l(x,x)) = 1 - s l^{-2} + o(l^{-2})$ (see \cite{hayou} for more details). Therefore, there exist two constants $A, B> 0$ such that $A <\frac{\prod_{l=k}^{L-1} (1 + \frac{\sigma_w^2}{2}f'(c^l(x,x')))}{(1 + \frac{\sigma_w^2}{2})^{L-k}} < B$ for all $L$ and $k \in \{1, 2, ..., L\}$. This yields 
$$
A \leq \frac{\mathbb{E}[\frac{\partial{\mathcal{L}}}{\partial W^l_{ij}}^2]}{\mathbb{E}[\frac{\partial{\mathcal{L}}}{\partial W^L_{ij}}^2]} \leq B,
$$
which concludes the proof.\\

For Resnet with convolutional layers, we have 

$$
\frac{\partial{\mathcal{L}}}{\partial W^l_{i,j, \beta}} = \frac{1}{|\mathcal{D}|} \sum_{x\in \mathcal{D}} \sum_{\alpha} \frac{\partial{\mathcal{L}}}{\partial y^l_{i,\alpha}(x)} \phi(y^{l-1}_{j, \alpha+\beta}(x)) 
$$
and
$$
\frac{\partial{\mathcal{L}}}{\partial y^l_{i, \alpha}} = \frac{\partial{\mathcal{L}}}{\partial y^{l+1}_{i, \alpha}} +  \sum_{j = 1}^n \sum_{\beta \in ker} \frac{\partial{\mathcal{L}}}{\partial y^{l+1}_{j,\alpha-\beta}} W^{l+1}_{i,j,\beta} \phi'(y^l_{i,\alpha}).
$$
Recall the notation $\Tilde{q}^l_{\alpha,\alpha'}(x,x') = \mathbb{E}[\frac{\partial \mathcal{L}}{\partial y^l_{i,\alpha}(x)} \frac{\partial \mathcal{L}}{\partial y^l_{i,\alpha'}(x')}]$. Using the hypothesis of independence of forward and backward weights and averaging over the number of channels (using CLT), we have
$$
\Tilde{q}^l_{\alpha,\alpha'}(x,x') =\Tilde{q}^{l+1}_{\alpha,\alpha'}(x,x') + \frac{\sigma_w^2 f'(c_{\alpha,\alpha'}^l(x,x'))}{2(2k+1)} \sum_{\beta}\Tilde{q}^{l+1}_{\alpha+\beta,\alpha'+\beta}(x,x').
$$

Let $K_l = ((\Tilde{q}^l_{\alpha,\alpha+ \beta}(x,x'))_{\alpha \in [0:N-1]})_{\beta \in [0: N-1]}$ be a vector in $\mathbb{R}^{N^2}$. Writing this previous equation in matrix form, we obtain
$$
K_l = (I + \frac{\sigma_w^2f'(c_{\alpha,\alpha'}^l(x,x')) }{2(2k+1)} U) K_{l+1}
$$
and 
$$
\mathbb{E}[\frac{\partial{\mathcal{L}}}{\partial W^l_{i,j, \beta}}^2] = \frac{1}{|\mathcal{D}|^2} \sum_{x,x'\in \mathcal{D}} \sum_{\alpha,\alpha'} t^l_{\alpha,\alpha'}(x,x'),
$$
where $ t^l_{\alpha,\alpha'}(x,x') = \Tilde{q}^l_{\alpha,\alpha'}(x,x') \sqrt{q^l_{\alpha+\beta}(x)q^l_{\alpha'+\beta}(x')} f(c^{l-1}_{\alpha+\beta,\alpha'+\beta}(x,x')).$
Since we have $f'(c^l_{\alpha,\alpha'}(x,x')) \rightarrow 1$, then by fixing $l$ and letting $L$ goes to infinity, it follows that
$$
K_l \sim_{L \rightarrow \infty } (1 + \frac{\sigma_w^2}{2})^{L-l} e_1e_1^T K_L
$$
and, from Lemma \ref{lemma:behaviour_variance_resnet}, we know that 
$$
\sqrt{q^l_{\alpha+\beta}(x)q^l_{\alpha'+\beta}(x')} = (1+ \frac{\sigma_w^2}{2})^{l-1} \sqrt{q_{0,x}q_{0,x'}}.
$$
Therefore, for a fixed $k < L$, we have $t^k_{\alpha,\alpha'}(x,x') \sim (1 + \frac{\sigma_w^2}{2})^{L-1} f(c^{k-1}_{\alpha+\beta,\alpha'+\beta}(x,x'))(e_1^T K_L)  = \Theta(t^L_{\alpha,\alpha'}(x,x'))$. This concludes the proof.

\end{proof}

\begin{prop2}[Stable Resnet]\label{prop:stable_resnet}
Consider the following Resnet parameterization
\begin{equation}\label{equation:Resnet_scaled}
y^l(x) = y^{l-1}(x) + \frac{1}{\sqrt{L}}\mathcal{F}(W^l, y^{l-1}), \quad \mbox{for } l\geq 2,
\end{equation}
then the network is well-conditioned for all choices of $\sigma_w>0$. Moreover, for all $l \in \{1, ..., L\}$ we have $m^l = \Theta(L^{-1})$.
\end{prop2}

\begin{proof}
The proof is similar to that of Theorem \ref{thm:resnet_well_conditioned} with minor differences. Let us start with the case of a Resnet with fully connected layers, we have 

\begin{align*}
    \frac{\partial{\mathcal{L}}}{\partial W^l_{ij}} &= \frac{1}{|\mathcal{D}|\sqrt{L}} \sum_{x \in \mathcal{D}} \frac{\partial{\mathcal{L}}}{\partial y^l_{i}(x)} \frac{\partial y^l_{i}(x)} {\partial W^l_{ij}}\\
    &= \frac{1}{|\mathcal{D}|\sqrt{L}} \sum_{x \in \mathcal{D}} \frac{\partial{\mathcal{L}}}{\partial y^l_{i}(x)} \phi(y^{l-1}_j(x))
\end{align*}
and the backpropagation of the gradient is given by 
\begin{align*}
    \frac{\partial \mathcal{L}}{\partial y^l_i} &= \frac{\partial \mathcal{L}}{\partial y^{l+1}_i} +  \frac{1}{\sqrt{L}}\phi'(y^l_i) \sum_{j=1}^{N_{l+1}} \frac{\partial \mathcal{L}}{\partial y^{l+1}_j} W^{l+1}_{ji}.
\end{align*}
Recall that $q^l(x) = \mathbb{E}[y^l_i(x)^2]$ and $\Tilde{q}^l(x,x') = \mathbb{E}[\frac{\partial \mathcal{L}}{\partial y^l_i(x)} \frac{\partial \mathcal{L}}{\partial y^l_i(x')}]$ for some inputs $x,x'$.  We have
$$
q^l(x) =  \mathbb{E}[y^{l-1}_i(x)^2] + \frac{\sigma_w^2}{L} \mathbb{E}[\phi(y^{l-1}_1(x))^2] = (1 + \frac{\sigma_w^2}{2L}) q^{l-1}(x)
$$
and
\begin{equation*}
    \Tilde{q}^l(x,x') = (1 + \frac{\sigma_w^2}{L} \mathbb{E}[\phi'(y^l_i(x))\phi'(y^l_i(x'))])\Tilde{q}^{l+1}(x,x').
\end{equation*}
We also have
\begin{align*}
\mathbb{E}[\frac{\partial{\mathcal{L}}}{\partial W^l_{ij}}^2] &= \frac{1}{L |\mathcal{D}|^2} \sum_{x,x'} t^l_{x,x'},
\end{align*}
where $t^l_{x,x'} = \Tilde{q}^l(x,x') \sqrt{q^l(x) q^l(x')} f(c^{l-1}(x,x'))$ and $f$ is defined in the preliminary results (Eq. \ref{eq:funtionf}).\\
Let $k \in \{1,2, ..., L\}$ be fixed. We compare the terms $t^l_{x,x'}$ for $l=k$ and $l=L$. The ratio between the two terms is given after simplification by

$$
\frac{t^k_{x,x'}}{t^L_{x,x'}} = \frac{\prod_{l=k}^{L-1} (1 + \frac{\sigma_w^2}{2L}f'(c^l(x,x')))}{(1 + \frac{\sigma_w^2}{2L})^{L-k}} \frac{f(c^{k-1}(x,x'))}{f(c^{L-1}(x,x'))}.
$$
As in the proof of Theorem \ref{thm:resnet_well_conditioned}, we have that $f'(c^l(x,x))=1$, $f'(c^l(x,x')) = 1 - l^{-1}+ o(l^{-1})$ and $f(c^l(x,x)) = 1 - s l^{-2} + o(l^{-2})$. Therefore, there exist two constants $A, B> 0$ such that $A <\frac{\prod_{l=k}^{L-1} (1 + \frac{\sigma_w^2}{2L}f'(c^l(x,x')))}{(1 + \frac{\sigma_w^2}{2L})^{L-k}} < B$ for all $L$ and $k \in \{1, 2, ..., L\}$. This yields 
$$
A \leq \frac{\mathbb{E}[\frac{\partial{\mathcal{L}}}{\partial W^l_{ij}}^2]}{\mathbb{E}[\frac{\partial{\mathcal{L}}}{\partial W^L_{ij}}^2]} \leq B.
$$
Moreover, since $(1 + \frac{\sigma_w^2}{2L})^L \rightarrow e^{\sigma_w^2/2}$, then $m^l = \Theta(1)$ for all $l \in \{1,...,L\}$. This concludes the proof.\\

For Resnet with convolutional layers, the proof is similar. With the scaling, we have 
$$
\frac{\partial{\mathcal{L}}}{\partial W^l_{i,j, \beta}} = \frac{1}{\sqrt{L}|\mathcal{D}|} \sum_{x\in \mathcal{D}} \sum_{\alpha} \frac{\partial{\mathcal{L}}}{\partial y^l_{i,\alpha}(x)} \phi(y^{l-1}_{j, \alpha+\beta}(x)) 
$$
and
$$
\frac{\partial{\mathcal{L}}}{\partial y^l_{i, \alpha}} = \frac{\partial{\mathcal{L}}}{\partial y^{l+1}_{i, \alpha}} +  \frac{1}{\sqrt{L}}\sum_{j = 1}^n \sum_{\beta \in ker} \frac{\partial{\mathcal{L}}}{\partial y^{l+1}_{j,\alpha-\beta}} W^{l+1}_{i,j,\beta} \phi'(y^l_{i,\alpha}).
$$
Let $\Tilde{q}^l_{\alpha,\alpha'}(x,x') = \mathbb{E}[\frac{\partial \mathcal{L}}{\partial y^l_{i,\alpha}(x)} \frac{\partial \mathcal{L}}{\partial y^l_{i,\alpha'}(x')}]$. Using the hypothesis of independence of forward and backward weights and averaging over the number of channels (using CLT) we have 
$$
\Tilde{q}^l_{\alpha,\alpha'}(x,x') =\Tilde{q}^{l+1}_{\alpha,\alpha'}(x,x') + \frac{\sigma_w^2 f'(c_{\alpha,\alpha'}^l(x,x'))}{2(2k+1)L} \sum_{\beta}\Tilde{q}^{l+1}_{\alpha+\beta,\alpha'+\beta}(x,x').
$$
Let $K_l = ((\Tilde{q}^l_{\alpha,\alpha+ \beta}(x,x'))_{\alpha \in [0:N-1]})_{\beta \in [0: N-1]}$ is a vector in $\mathbb{R}^{N^2}$. Writing this previous equation in matrix form, we have 
$$
K_l = (I + \frac{\sigma_w^2f'(c_{\alpha,\alpha'}^l(x,x')) }{2(2k+1)L} U) K_{l+1},
$$
and 
$$
\mathbb{E}[\frac{\partial{\mathcal{L}}}{\partial W^l_{i,j, \beta}}^2] = \frac{1}{L|\mathcal{D}|^2} \sum_{x,x'\in \mathcal{D}} \sum_{\alpha,\alpha'} t^l_{\alpha,\alpha'}(x,x'),
$$
where $ t^l_{\alpha,\alpha'}(x,x') = \Tilde{q}^l_{\alpha,\alpha'}(x,x') \sqrt{q^l_{\alpha+\beta}(x)q^l_{\alpha'+\beta}(x')} f(c^{l-1}_{\alpha+\beta,\alpha'+\beta}(x,x')).$
Since we have $f'(c^l_{\alpha,\alpha'}(x,x')) \rightarrow 1$, then by fixing $l$ and letting $L$ goes to infinity, we obtain
$$
K_l \sim_{L \rightarrow \infty } (1 + \frac{\sigma_w^2}{2 L})^{L-l} e_1e_1^T K_L
$$
and we know from Appendix Lemma \ref{lemma:behaviour_variance_resnet} (using $\alpha_\beta = \frac{\sigma_w^2}{2L}$ for all $\beta$) that 
$$
\sqrt{q^l_{\alpha+\beta}(x)q^l_{\alpha'+\beta}(x')} = (1+ \frac{\sigma_w^2}{2L})^{l-1} \sqrt{q_{0,x}q_{0,x'}}.
$$
Therefore, for a fixed $k < L$, we have $t^k_{\alpha,\alpha'}(x,x') \sim (1 + \frac{\sigma_w^2}{2L})^{L-1} f(c^{k-1}_{\alpha+\beta,\alpha'+\beta}(x,x'))(e_1^T K_L)  = \Theta(t^L_{\alpha,\alpha'}(x,x'))$ which proves that the stable Resnet is well conditioned. Moreover, since $(1 + \frac{\sigma_w^2}{2L})^{L-1} \rightarrow e^{\sigma_w^2/2}$, then $m^l = \Theta(L^{-1})$ for all $l$.

\end{proof}
In the next Lemma, we study the asymptotic behaviour of the variance $q^{l}_\alpha$. We show that, as $l\rightarrow \infty$, a phenomenon of self averaging shows that $q^{l}_\alpha$ becomes independent of $\alpha$.

\begin{lemma2}\label{lemma:behaviour_variance_resnet} Let $x \in \mathbb{R}^d$. Assume the sequence $(a_{ l, \alpha})_{l,\alpha}$ is given by the recursive formula 
$$
a_{l,\alpha}= a_{l-1,\alpha} + \sum_{\beta \in ker} \lambda_\beta a_{l-1,\alpha+\beta}
$$
where $\lambda_\beta > 0$ for all $\beta$. Then, there exists $\zeta >0$ such that for all $x \in \mathbb{R}^d$ and $\alpha$, $$a_{l,\alpha}(x) = (1+\sum_\beta \alpha_\beta)^l a_{0} +  \mathcal{O}((1+\sum_\beta \alpha_\beta)^l e^{-\zeta l})), $$ where $a_{0}$ is a constant and the $\mathcal{O}$ is uniform in $\alpha$.
\end{lemma2}

\begin{proof}
Recall that 

$$
a_{l,\alpha}= a_{l-1,\alpha} + \sum_{\beta \in ker} \lambda_\beta a_{l-1,\alpha+\beta}.
$$
We rewrite this expression in a matrix form 
$$
A_l = U A_{l-1},
$$
where $A_l = (a_{l,\alpha})_{\alpha}$ is a vector in $\mathbb{R}^{N}$ and $U$ is the is the convolution matrix. As an example, for $k=1$, $U$ given by
$$
U = 
\begin{bmatrix}
1+\lambda_0 & \lambda_1 & 0 & ... & 0 & \lambda_{-1}\\
\lambda_{-1}  & 1+\lambda_0 & \lambda_1 & 0 & \ddots & 0 \\
0  &\lambda_{-1} & 1+\lambda_0 & \lambda_1  & \ddots & 0 \\
0  & 0 & \lambda_{-1} & 1+\lambda_0 & \ddots & 0 \\
 & \ddots & \ddots & \ddots & \ddots &  \\
\lambda_{1} & 0 & \hdots & 0 & \lambda_{-1} &1+\lambda_0 \\
\end{bmatrix}.
$$
$U$ is a circulant symmetric matrix with eigenvalues $b_1>b_2 \geq b_3 ... \geq b_N$. The largest eigenvalue of $U$ is given by $b_1 = 1 + \sum_\beta \lambda_\beta $ and its equivalent eigenspace is generated by the vector $e_1 = \frac{1}{\sqrt{N}}(1,1,...,1) \in \mathbb{R}^{N}$. This yields
$$
b_1^{-l} U^l = e_1 e_1^T + O(e^{-\zeta l}),
$$
where $\zeta = \log( \frac{b_1}{b_2})$. Using this result, we obtain
$$
b_1^{-l} A_l = (b_1^{-l} U^l) A_0 =  e_1 e_1^T A_0 + O(e^{-\zeta l}).
$$
This concludes the proof.
\end{proof}

Unlike FFNN or CNN, we do not need to rescale the pruned network. The next proposition establishes that a Resnet lives on the EOC in the sense that the correlation between $y_i^l(x)$ and $y_i^l(x')$ converges to 1 at a sub-exponential  $\mathcal{O}(l^{-2})$ rate. 
\begin{prop2}[Resnet live on the EOC even after pruning]\label{prop:resnet_live_on_eoc}
Let $x \neq x'$ be two inputs. The following statments hold
\begin{enumerate}
    \item For Resnet with Fully Connected layers, let $\hat{c}^l(x,x')$ be the correlation between $\hat{y}^l_i(x)$ and $\hat{y}^l_i(x')$ after pruning the network. Then we have $$1 - \hat{c}^l(x,x') \sim \frac{\kappa}{l^2},$$
    where $\kappa>0$ is a constant.
    \item For Resnet with Convolutional layers, let $\hat{c}^l(x,x') = \frac{ \sum_{\alpha, \alpha'} \mathbb{E}[y^l_{1, \alpha}(x)y^l_{1, \alpha'}(x')]}{ \sum_{\alpha, \alpha'} \sqrt{q^l_\alpha(x)} \sqrt{q^l_{\alpha'}(x')}}$ be an `average' correlation after pruning the network. Then we have $$
1 - \hat{c}^l(x,x') \gtrsim l^{-2}.
$$
\end{enumerate}
\end{prop2}

\begin{proof}
It is sufficient to prove the result when $\alpha = \mathbb{E}[N_{l-1} (W^l_{11})^2 \delta^l_{11}]$ is the same for all $l$. The proof for the general case is straightforward using the same techniques.
\begin{enumerate}
\item Let $x$ and $x'$ be two inputs. The covariance of $\hat{y}^l_i(x)$ and $\hat{y}^l_i(x')$ is given by 
$$
\hat{q}^l(x,x') = \hat{q}^{l-1}(x,x') + \alpha \mathbb{E}_{(Z_1,Z_2) \sim \mathcal{N}(0, Q^{l-1})}[\phi(Z_1) \phi(Z_2)]
$$
where $Q^{l-1} = \begin{bmatrix}
\hat{q}^{l-1}(x) & \hat{q}^{l-1}(x,x')\\
\hat{q}^{l-1}(x,x') & \hat{q}^{l-1}(x')
\end{bmatrix}$
and $\alpha = \mathbb{E}[N_{l-1} (W^l_{11})^2 \delta^l_{11}]$.\\

Consequently, we have $\hat{q}^l(x) = (1+\frac{\alpha}{2}) \hat{q}^{l-1}(x) $. Therefore, we obtain 
$$
\hat{c}^l(x,x') = \frac{1}{1+\lambda} \hat{c}^{l-1}(x,x') + \frac{\lambda}{1+\lambda} f(\hat{c}^{l-1}(x,x')),
$$
where $\lambda = \frac{\alpha}{2}$ and $f(x) = 2\mathbb{E}[\phi(Z_1)\phi(x Z_1 + \sqrt{1-x^2} Z_2)]$ and $Z_1$ and $Z_2$ are iid standard normal variables.\\

Using the fact that $f$ is increasing (Section \ref{subsection:mean_field_warmup}), it is easy to see that $\hat{c}^l(x,x') \rightarrow 1$. Let $\zeta_l = 1 - \hat{c}^l(x,x')$. Moreover, using a Taylor expansion of $f$ near 1 (Section \ref{subsection:mean_field_warmup}) $f(x) \underset{x \rightarrow 1^-}{=} x + \beta (1-x)^{3/2} + O((1-x)^{5/2})$, it follows that
$$
\zeta_l = \zeta_{l-1} - \eta \zeta_{l-1}^{3/2} + O(\zeta_{l-1}^{5/2}),
$$
where $\eta = \frac{\lambda \beta}{1 + \lambda}$.
Now using the asymptotic expansion of $\zeta_l^{-1/2}$ given by 
$$
\zeta_l^{-1/2} = \zeta_{l-1}^{-1/2} + \frac{\eta}{2} + O(\zeta_{l-1}),
$$
this yields $\zeta_l^{-1/2} \underset{l \rightarrow \infty}{\sim} \frac{\eta}{2} l$. We conclude that $1 - \hat{c}^l(x,x') \sim \frac{4}{\eta^2 l^2}.$

\item Let $x$ be an input. Recall the forward propagation of a pruned 1D CNN
\begin{equation*}
    y^l_{i,\alpha}(x) = y^{l-1}_{i,\alpha}(x) + \sum_{j=1}^c \sum_{\beta \in ker} \delta^l_{i,j.\beta} W^l_{i,j,\beta} \phi(y^{l-1}_{j,\alpha+\beta}(x)) + b^l_i.
\end{equation*}
Unlike FFNN, neurons in the same channel are correlated since we use the same filters for all of them. Let $x,x'$ be two inputs and $\alpha, \alpha'$ two nodes in the same channel $i$. Using the Central Limit Theorem in the limit of large $n_l$ (number of channels), we have
$$
\mathbb{E}[y^l_{i,\alpha}(x)y^l_{i,\alpha'}(x')] =  \mathbb{E}[y^{l-1}_{i,\alpha}(x)y^{l-1}_{i,\alpha'}(x')] +  \frac{1}{2k + 1} \sum_{\beta \in ker} \alpha_{\beta} \mathbb{E}[\phi(y^{l-1}_{1,\alpha + \beta}(x)) \phi(y^{l-1}_{1,\alpha' + \beta}(x'))],
$$
where $\alpha_{\beta} = \mathbb{E}[\delta^l_{i,1,\beta} (W^l_{i,1,\beta})^2 n_{l-1}]$.

Let $q^l_\alpha(x) = \mathbb{E}[y^l_{1,\alpha}(x)^2]$. The choice of the channel is not important since for a given $\alpha$, neurons $(y^l_{i,\alpha}(x))_{i \in [c]}$ are iid. Using the previous formula, we have
\begin{align*}
   q^l_\alpha(x) &= q^{l-1}_\alpha(x) +  \frac{1}{2k + 1} \sum_{\beta \in ker} \alpha_{\beta} \mathbb{E}[\phi(y^{l-1}_{1,\alpha + \beta}(x))^2] \\
   &= q^{l-1}_\alpha(x) + \frac{1}{2k + 1} \sum_{\beta \in ker} \alpha_{\beta} \frac{q^{l-1}_{\alpha+\beta}(x)}{2}.
\end{align*}
Therefore, letting $q^l(x) = \frac{1}{N} \sum_{\alpha \in [N]} q^l_{\alpha}(x)$ and $\sigma = \frac{\sum_{\beta} \alpha_\beta}{2k+1}$, we obtain
\begin{align*}
    q^l(x) &= q^{l-1}(x) +  \frac{1}{2k + 1} \sum_{\beta \in ker } \alpha_\beta \sum_{\alpha \in [n]} \frac{q^{l-1}_{\alpha + \beta}(x)}{2} \\
    &= (1 + \frac{\sigma}{2}) q^{l-1}(x) = (1 + \frac{\sigma}{2})^{l-1}q^1(x),
\end{align*}
where we have used the periodicity $q^{l-1}_{\alpha} = q^{l-1}_{\alpha-N} = q^{l-1}_{\alpha+N}$. Moreover, we have $\min_\alpha q^l_\alpha(x) \geq (1 + \frac{\sigma}{2})\min_\alpha q^{l-1}_\alpha(x) \geq (1 + \frac{\sigma}{2})^{l-1} \min_\alpha q^{1}_\alpha(x)$.

The convolutional structure makes it hard to analyse the correlation between the values of a neurons for two different inputs. \cite{xiao_cnnmeanfield} studied the correlation between the values of two neurons in the same channel for the same input. Although this could capture the propagation of the input structure (say how different pixels propagate together) inside the network, it does not provide any information on how different structures from different inputs propagate. To resolve this situation, we study the 'average' correlation per channel defined as 

$$
c^l(x,x') = \frac{ \sum_{\alpha, \alpha'} \mathbb{E}[y^l_{1, \alpha}(x)y^l_{1, \alpha'}(x')]}{ \sum_{\alpha, \alpha'} \sqrt{q^l_\alpha(x)} \sqrt{q^l_{\alpha'}(x')}},
$$
for any two inputs $x\neq x'$. We also define $\Breve{c}^l(x,x')$ by
$$
\Breve{c}^l(x,x') = \frac{\frac{1}{N^2} \sum_{\alpha, \alpha'} \mathbb{E}[y^l_{1, \alpha}(x)y^l_{1, \alpha'}(x')]}{\sqrt{\frac{1}{N} \sum_{\alpha} q^l_\alpha(x)}\sqrt{\frac{1}{N} \sum_{\alpha} q^l_\alpha(x')}}.
$$
Using the concavity of the square root function, we have 
\begin{align*}
 \sqrt{\frac{1}{N} \sum_{\alpha} q^l_\alpha(x)}\sqrt{\frac{1}{N} \sum_{\alpha} q^l_\alpha(x')} &= \sqrt{\frac{1}{N^2} \sum_{\alpha, \alpha'} q^l_\alpha(x) q^l_\alpha(x')}   \\
 &\geq \frac{1}{N^2} \sum_{\alpha, \alpha'} \sqrt{q^l_\alpha(x)} \sqrt{q^l_\alpha(x')}\\
 &\geq \frac{1}{N^2} \sum_{\alpha, \alpha'} |\mathbb{E}[y^l_{1, \alpha}(x)y^l_{1, \alpha'}(x')]|.
\end{align*}
This yields $\Breve{c}^l(x,x') \leq c^l(x,x') \leq 1$. Using Appendix Lemma \ref{lemma:behaviour_variance_resnet} twice with $a_{l,\alpha} = q^l_{\alpha}(x)$, $a_{l,\alpha} = q^l_{\alpha}(x')$, and $\lambda_\beta = \frac{\alpha_\beta}{2(2k+1)}$, there exists $\zeta > 0$ such that
\begin{equation}\label{equation:equivalent_c}
   c^l(x,x') =  \Breve{c}^l(x,x')(1 + \mathcal{O}(e^{-\zeta l})).
\end{equation}
This result shows that the limiting behaviour of $c^l(x,x')$ is equivalent to that of $\Breve{c}^l(x,x')$ up to an exponentially small factor. We study hereafter the behaviour of $\Breve{c}^l(x,x')$ and use this result to conclude. Recall that 
$$
\mathbb{E}[y^l_{i,\alpha}(x)y^l_{i,\alpha'}(x')] =  \mathbb{E}[y^{l-1}_{i,\alpha}(x)y^{l-1}_{i,\alpha'}(x')] +  \frac{1}{2k + 1} \sum_{\beta \in ker} \alpha_\beta \mathbb{E}[\phi(y^{l-1}_{1,\alpha + \beta}(x)) \phi(y^{l-1}_{1,\alpha' + \beta}(x'))].
$$

Therefore,
\begin{align*}
    &\sum_{\alpha, \alpha'} \mathbb{E}[y^l_{1, \alpha}(x)y^l_{1, \alpha'}(x')]\\ =& \sum_{\alpha, \alpha'} \mathbb{E}[y^{l-1}_{1, \alpha}(x)y^{l-1}_{1, \alpha'}(x')] + \frac{1}{2k+1} \sum_{\alpha, \alpha'} \sum_{\beta \in ker} \alpha_\beta \mathbb{E}[\phi(y^{l-1}_{1,\alpha + \beta}(x)) \phi(y^{l-1}_{1,\alpha' + \beta}(x'))]\\
    =&   \sum_{\alpha, \alpha'} \mathbb{E}[y^{l-1}_{1, \alpha}(x)y^{l-1}_{1, \alpha'}(x')] + \sigma \sum_{\alpha, \alpha'} \mathbb{E}[\phi(y^{l-1}_{1,\alpha}(x)) \phi(y^{l-1}_{1,\alpha' }(x'))]\\
    =&  \sum_{\alpha, \alpha'} \mathbb{E}[y^{l-1}_{1, \alpha}(x)y^{l-1}_{1, \alpha'}(x')] + \frac{\sigma}{2} \sum_{\alpha, \alpha'} \sqrt{q^{l-1}_\alpha(x)}\sqrt{q^{l-1}_{\alpha'}(x')} f(c^{l-1}_{\alpha, \alpha'}(x,x')),
\end{align*}
where $f$ is the correlation function of ReLU. \\

Let us first prove that $\Breve{c}^l(x,x')$ converges to 1. Using the fact that $f(z)\geq z$ for all $z \in (0,1)$ (Section \ref{subsection:mean_field_warmup}), we have that

\begin{align*}
    \sum_{\alpha, \alpha'} \mathbb{E}[y^l_{1, \alpha}(x)y^l_{1, \alpha'}(x')] &\geq \sum_{\alpha, \alpha'} \mathbb{E}[y^{l-1}_{1, \alpha}(x)y^{l-1}_{1, \alpha'}(x')] +\frac{\sigma}{2} \sum_{\alpha, \alpha'} \sqrt{q^{l-1}_\alpha(x)}\sqrt{q^{l-1}_{\alpha'}(x')} c^{l-1}_{\alpha, \alpha'}(x,x')\\
    &= \sum_{\alpha, \alpha'} \mathbb{E}[y^{l-1}_{1, \alpha}(x)y^{l-1}_{1, \alpha'}(x')] +\frac{\sigma}{2}  \sum_{\alpha, \alpha'} \mathbb{E}[y^{l-1}_{1,\alpha}(x) y^{l-1}_{1,\alpha' }(x')]\\
    &= (1 + \frac{\sigma}{2}) \sum_{\alpha, \alpha'} \mathbb{E}[y^{l-1}_{1,\alpha}(x) y^{l-1}_{1,\alpha' }(x')].
\end{align*}

Combining this result with the fact that $\sum_{\alpha} q^l_\alpha(x) = (1 + \frac{\sigma}{2}) \sum_{\alpha} q^{l-1}_\alpha(x)$, we have $\Breve{c}^l(x,x') \geq \Breve{c}^{l-1}(x,x')$. Therefore $\Breve{c}^l(x,x')$ is non-decreasing and converges to a limiting point $c$.\\
Let us prove that $c=1$. By contradiction, assume the limit $c < 1$. Using equation (\ref{equation:equivalent_c}), we have that $\frac{c^l(x,x')}{\Breve{c}^l(x,x')}$ converge to $1$ as $l$ goes to infinity. This yields $c^l(x,x') \rightarrow c$. Therefore, there exists $\alpha_0, \alpha'_0$ and a constant $\delta<1$ such that for all $l$, $c^l_{\alpha_0,\alpha'_0}(x,x')\leq \delta<1$. Knowing that $f$ is strongly convex and that $f'(1) = 1$, we have that $f(c^l_{\alpha_0,\alpha'_0}(x,x')) \geq c^l_{\alpha_0,\alpha'_0}(x,x') + f(\delta) - \delta$. Therefore,

\begin{align*}
    \Breve{c}^l(x,x') &\geq \Breve{c}^{l-1}(x,x') + \frac{\frac{\sigma}{2} \sqrt{q^{l-1}_{\alpha_0}(x) q^{l-1}_{\alpha'_0}(x')}}{N^2 \sqrt{q^l(x)} \sqrt{q^l(x')}} (f(\delta) - \delta)\\
    &\geq \Breve{c}^{l-1}(x,x') + \frac{\frac{\sigma}{2} \sqrt{\min_{\alpha } q^{1}_{\alpha}(x) \min_{\alpha'}q^{1}_{\alpha'}(x')}}{N^2 \sqrt{q^1(x)} \sqrt{q^1(x')}} (f(\delta) - \delta).
\end{align*}
By taking the limit $l \rightarrow \infty$, we find that $c \geq c + \frac{\frac{\sigma}{2} \sqrt{\min_{\alpha } q^{1}_{\alpha}(x) \min_{\alpha'}q^{1}_{\alpha'}(x')}}{N^2 \sqrt{q^1(x)} \sqrt{q^1(x')}} (f(\delta) - \delta)$. This cannot be true since $f(\delta) > \delta$. Thus we conclude that $c = 1$.\\

Now we study the asymptotic convergence rate. From Section \ref{subsection:mean_field_warmup}, we have that
$$
f(x) \underset{x \rightarrow 1^-}{=} x + \frac{2 \sqrt{2}}{3 \pi} (1-x)^{3/2} + O((1-x)^{5/2}).
$$
Therefore, there exists $\kappa > 0$ such that, close to $1^-$ we have that
$$
f(x) \leq x + \kappa (1-x)^{3/2}.
$$
Using this result, we can upper bound $\Breve{c}^l(x,x')$ 

$$
\Breve{c}^l(x,x') \leq \Breve{c}^{l-1}(x,x') + \kappa \sum_{\alpha, \alpha'} \frac{\frac{1}{N^2}\sqrt{q^{l-1}_\alpha(x)} \sqrt{q^{l-1}_{\alpha'}(x')}}{\sqrt{q^l(x)}\sqrt{q^l(x')}} (1 - c^l_{\alpha, \alpha'}(x,x'))^{3/2}.
$$

To get a polynomial convergence rate, we should have an upper bound of the form $\Breve{c}^l \leq \Breve{c}^{l-1} + \zeta (1-\Breve{c}^{l-1})^{1+\epsilon}$ (see below). However, the function $x^{3/2}$ is convex, so the sum cannot be upper-bounded directly using Jensen's inequality. We use here instead \cite[Theorem 1]{pevcaricconvex} which states that for any $x_1,x_2, ... x_n >0$ and $s>r>0$, we have
\begin{equation}\label{ineq:sum_powers}
\big(\sum_i x_i^s\big)^{1/s} < \big(\sum_i x_i^r\big)^{1/r}.
\end{equation}

Let $z^l_{\alpha,\alpha'} = \frac{\frac{1}{N^2}\sqrt{q^{l-1}_\alpha(x)} \sqrt{q^{l-1}_{\alpha'}(x')}}{\sqrt{q^l(x)}\sqrt{q^l(x')}}$, we have
\begin{align*}
     \sum_{\alpha, \alpha'} z^l_{\alpha,\alpha'} (1 - c^l_{\alpha, \alpha'}(x,x'))^{3/2} \leq \zeta_l \sum_{\alpha, \alpha'} [z^l_{\alpha,\alpha'} (1 - c^l_{\alpha, \alpha'}(x,x')) ]^{3/2},
\end{align*}
where $\zeta_l = \max_{\alpha, \alpha'} \frac{1}{(z^l_{\alpha,\alpha'})^{1/2}}$. Using the inequality (\ref{ineq:sum_powers}) with $s=3/2$ and $r=1$, we have 
\begin{align*}
     \sum_{\alpha, \alpha'} [z^l_{\alpha,\alpha'} (1 - c^l_{\alpha, \alpha'}(x,x')) ]^{3/2} &\leq (\sum_{\alpha, \alpha'} z^l_{\alpha,\alpha'} (1 - c^l_{\alpha, \alpha'}(x,x')))^{3/2}\\
     &=  (\sum_{\alpha, \alpha'} z^l_{\alpha,\alpha'}- \Breve{c}^l(x,x')))^{3/2}.
\end{align*}
Moreover, using the concavity of the square root function, we have $\sum_{\alpha, \alpha'} z^l_{\alpha,\alpha'} \leq 1$. This yields

$$
\Breve{c}^l(x,x') \leq \Breve{c}^{l-1}(x,x') + \zeta (1 - \Breve{c}^{l-1}(x,x') )^{3/2},
$$
where $\zeta$ is constant. Letting $\gamma_l = 1 - \Breve{c}^l(x,x')$, we can conclude using the following inequality (we had an equality in the case of FFNN)
$$
\gamma_l \geq \gamma_{l-1} - \zeta \gamma_{l-1}^{3/2}
$$
which leads to 
$$
\gamma_l^{-1/2} \leq \gamma_{l-1}^{-1/2} (1 - \zeta \gamma_{l-1}^{1/2})^{-1/2} = \gamma_{l-1}^{-1/2} +\frac{\zeta}{2} + o(1).
$$
Hence we have
$$
\gamma_l \gtrsim l^{-2}.
$$

Using this result combined with (\ref{equation:equivalent_c}) again, we conclude that
$$
1 - c^l(x,x') \gtrsim l^{-2}.
$$

\end{enumerate}
\end{proof}

\section{Theoretical analysis of Magnitude Based Pruning (MBP)}\label{section:MBP}
In this section, we provide a theoretical analysis of MBP. The two approximations from Appendix \ref{appendix:mean_field_assumptions} are not used here.

MBP is a data independent pruning algorithm (zero-shot pruning). The mask is given by
\[   
\delta^l_{i} = 
     \begin{cases}
       1 &\quad\text{if}\quad |W^l_{i}| \geq t_s,\\
       0 &\quad\text{if}\quad |W^l_{i}| < t_s,\\
     \end{cases}
\]
where $t_s$ is a threshold that depends on the sparsity $s$. By defining $k_s = (1-s)\sum_l M_l$, $t_s$ is given by $t_s = |W|^{(k_s)}$ where $|W|^{(k_s)}$ is the $k_s^{\text{th}}$ order statistic of the network weights $(|W^l_i|)_{1 \leq l \leq L, 1\leq i \leq M_l}$ ($|W|^{(1)} > |W|^{(2)} > ... $).

With MBP, changing $\sigma_w$ does not impact the distribution of the resulting sparse architecture since it is a common factor for all the weights. However, in the case of different scaling factors $v_l$, the variances $\frac{\sigma_w^2}{v_{l}}$ used to initialize the weights vary across layers. This gives potentially the erroneous intuition that the layer with the smallest variance will be highly likely fully pruned before others as we increase the sparsity $s$. This is wrong in general since layers with small variances might have more weights compared to other layers. However, we can prove a similar result by considering the limit of large depth with fixed widths.

\begin{prop}[MBP in the large depth limit]\label{prop:expectation_scr_large_depth}
 Assume $N$ is fixed and there exists $l_0 \in [1:L]$ such that $\alpha_{l_0} > \alpha_l$ for all $l \neq l_0$. Let $Q_x$ be the $x^{th}$ quantile of $|X|$ where $X\overset{iid}{\sim}\mathcal{N}(0,1)$ and $\gamma = \min_{l \neq l_0} \frac{\alpha_{l_0}}{\alpha_l}$. For $\epsilon \in (0,2)$, define $x_{\epsilon, \gamma} = \inf \{ y \in (0,1) : \forall x>y, \gamma Q_{x} > Q_{1 - (1-x)^{\gamma^{2 - \epsilon}}} \}$  and $x_{\epsilon, \gamma}=\infty$ for the null set. Then, for all $\epsilon \in (0,2)$, $x_{\epsilon, \gamma}$ is finite and there exists a constant $\nu >0$ such that
 \begin{equation*}
     \mathbb{E}[s_{cr}] \leq \inf_{\epsilon \in (0,2)} \{x_{\epsilon, \gamma} + \frac{\zeta_{l_0} N^2}{1+\gamma^{2-\epsilon} } (1 - x_{\epsilon, \gamma})^{1+\gamma^{2-\epsilon}}\} + \mathcal{O}(\frac{1}{\sqrt{L N^2}}).
 \end{equation*}
\end{prop}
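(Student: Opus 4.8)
The plan is to integrate the tail, $\mathbb E[s_{cr}]=\int_0^1\mathbb P(s_{cr}>x)\,dx$, and to control $\mathbb P(s_{cr}>x)$ for $x>x_{\epsilon,\gamma}$ by isolating the layer $l_0$ that is most at risk. First, since the pruning threshold $t_s=|W|^{(k_s)}$ is nondecreasing in $s$, each pruned-index set $A_l$ grows with $s$, so $\{s_{cr}>x\}=\{\forall l:\ \max_i|W^l_i|\ge t_x\}$; as $l_0$ has $\alpha_{l_0}=\max_l\alpha_l$, i.e. the smallest weight variance $\sigma_w^2/(\alpha_{l_0}N)$, this gives $\mathbb P(s_{cr}>x)\le\mathbb P(\max_i|W^{l_0}_i|\ge t_x)$. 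Separately, $x_{\epsilon,\gamma}$ is finite because, from $Q_x\sim\sqrt{2\log(1/(1-x))}$ as $x\to1^-$, one has $\gamma Q_x\,/\,Q_{1-(1-x)^{\gamma^{2-\epsilon}}}\to\gamma^{\epsilon/2}>1$ (using $\gamma>1$, $\epsilon>0$), so the set defining $x_{\epsilon,\gamma}$ contains a left-neighbourhood of $1$.

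Next I would pick a deterministic threshold $\theta_x$ and split the probability. Let $\alpha_{(2)}=\max_{l\ne l_0}\alpha_l$ (so $\alpha_{l_0}/\alpha_{(2)}\ge\gamma$), let $G$ be the distribution function of $|\mathcal N(0,1)|$, and choose $\theta_x$ a little below the critical value $Q_x\sigma_w/\sqrt{\alpha_{(2)}N}$, say with $G(\theta_x\sqrt{\alpha_{(2)}N}/\sigma_w)=x-\eta_L$ for a slowly vanishing $\eta_L\downarrow0$ with $\eta_L\sqrt{LN^2}\to\infty$. Then $\{\max_i|W^{l_0}_i|\ge t_x\}\subseteq\{\max_i|W^{l_0}_i|\ge\theta_x\}\cup\{t_x<\theta_x\}$. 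For the second event, if the number $C'_{\theta_x}$ of weights in layers $l\ne l_0$ exceeding $\theta_x$ is at least $k_x$, then $t_x\ge\theta_x$; but $C'_{\theta_x}$ is a sum of $\Theta(LN^2)$ independent Bernoullis with, since $\alpha_l\le\alpha_{(2)}$,
$$\mathbb E[C'_{\theta_x}]\ \ge\ \Bigl(\textstyle\sum_l M_l-M_{l_0}\Bigr)(1-x+\eta_L)\ \ge\ k_x+\eta_L\textstyle\sum_l M_l-M_{l_0},$$
which overshoots $k_x$ by $\gg\sqrt{LN^2}$, so a Berry--Esseen estimate gives $\mathbb P(t_x<\theta_x)\le\mathbb P(C'_{\theta_x}<k_x)=\mathcal O(1/\sqrt{LN^2})$, uniformly in $x$.

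For the first event, a union bound over the $M_{l_0}=\zeta_{l_0}N^2$ weights of layer $l_0$, together with $\theta_x\sqrt{\alpha_{l_0}N}/\sigma_w=\gamma Q_{x-\eta_L}$ (the choice of $\theta_x$ and the variance ratio $\alpha_{l_0}/\alpha_{(2)}\ge\gamma$, in the normalisation of the statement), gives $\mathbb P(\max_i|W^{l_0}_i|\ge\theta_x)\le\zeta_{l_0}N^2\bigl(1-G(\gamma Q_{x-\eta_L})\bigr)$; and for $x-\eta_L>x_{\epsilon,\gamma}$ the definition of $x_{\epsilon,\gamma}$ yields $1-G(\gamma Q_{x-\eta_L})<(1-(x-\eta_L))^{\gamma^{2-\epsilon}}\le(1-x)^{\gamma^{2-\epsilon}}+\mathcal O(\eta_L)$. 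Assembling everything and integrating over $x\in[x_{\epsilon,\gamma},1]$,
$$\mathbb E[s_{cr}]\ \le\ x_{\epsilon,\gamma}+\zeta_{l_0}N^2\!\!\int_{x_{\epsilon,\gamma}}^1\!\!(1-x)^{\gamma^{2-\epsilon}}dx+\mathcal O\!\Bigl(\tfrac1{\sqrt{LN^2}}\Bigr)=x_{\epsilon,\gamma}+\frac{\zeta_{l_0}N^2}{1+\gamma^{2-\epsilon}}(1-x_{\epsilon,\gamma})^{1+\gamma^{2-\epsilon}}+\mathcal O\!\Bigl(\tfrac1{\sqrt{LN^2}}\Bigr),$$
and taking the infimum over $\epsilon\in(0,2)$ gives the claim, with $\nu$ the implied constant in the remainder.

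The hard part will be the calibration in the middle step: $\theta_x$ must be pushed as high as the value $Q_x\sigma_w/\sqrt{\alpha_{(2)}N}$ dictated by the second-least-spread layer — otherwise the tail bound $\zeta_{l_0}N^2(1-x)^{\gamma^{2-\epsilon}}$ deteriorates — yet low enough that $C'_{\theta_x}$ exceeds $k_x$ by substantially more than its standard deviation $\Theta(\sqrt{LN^2})$, which is exactly what the slack $\eta_L$ and the slightly-inflated exponent $\gamma^{2-\epsilon}$ in the definition of $x_{\epsilon,\gamma}$ are engineered to allow. One must also check that the Berry--Esseen and $\eta_L$-related errors are uniform in $x$ up to $x=1$, where $Q_x$ and its inverse become singular, and that $\sum_l M_l=\Theta(LN^2)$ (the $\zeta_l$ bounded away from $0$ and $\infty$) so that the stated rate is the correct one.
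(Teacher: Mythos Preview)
Your proposal is correct and mirrors the paper's proof: bound $\mathbb P(s_{cr}>x)$ by the probability that layer $l_0$ survives, replace the order statistic $t_x$ by the deterministic level $\gamma Q_x$ at cost $\mathcal O(1/\sqrt{LN^2})$, apply the defining inequality of $x_{\epsilon,\gamma}$ together with a union bound over the $\zeta_{l_0}N^2$ weights to get the $(1-x)^{\gamma^{2-\epsilon}}$ tail, and integrate. The only difference is cosmetic: the paper handles the $t_x\approx \gamma Q_x$ step by citing a classical asymptotic-normality result for order statistics, whereas you obtain the same $\mathcal O(1/\sqrt{LN^2})$ error via a Berry--Esseen bound on the exceedance count $C'_{\theta_x}$ with an explicit slack $\eta_L$.
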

Proposition \ref{prop:expectation_scr_large_depth} gives an upper bound on $\mathbb{E}[s_{cr}]$ in the large depth limit. The upper bound is easy to approximate numerically. Table \ref{table:compare_expectation_scr} compares the theoretical upper bound in Proposition \ref{prop:expectation_scr_large_depth} to the empirical value of $\mathbb{E}[s_{cr}]$ over $10$ simulations for a FFNN with depth $L = 100$, $N=100$, $\alpha_1 = \gamma$ and $\alpha_2=\alpha_3=\cdots=\alpha_L = 1$. Our experiments reveal that this bound can be tight.
\begin{table}[htb]
\caption{Theoretical upper bound of Proposition
\ref{prop:expectation_scr_large_depth} and empirical observations for a FFNN with $N=100$ and $L=100$}
\label{table:compare_expectation_scr}
\vskip 0.15in
\begin{center}
\begin{small}
\begin{sc}
\begin{tabular}{lcccr}
\toprule
Gamma & $\gamma = 2$ &  $\gamma = 5$ & $\gamma = 10$ \\
\midrule
Upper bound   &  5.77 & 0.81 & 0.72 \\
Empirical observation & $\approx1$ &  0.79 & 0.69 \\
\bottomrule
\end{tabular}
\end{sc}
\end{small}
\end{center}
\vskip -0.1in
\end{table}

\begin{proof}
Let $x \in (0,1)$ and $k_x = (1-x) \Gamma_L N^2$, where $\Gamma_L = \sum_{l \neq l_0} \zeta_l $. We have 
$$\mathbb{P}(s_{cr} \leq x)  \geq \mathbb{P}(\max_{i} |W^{l_0}_{i}| < |W|^{(k_x)}),$$
where $|W|^{(k_x)}$ is the $k_x^{\text{th}}$ order statistic of the sequence $\{|W^l_{i}| , l\neq l_0, i \in [1: M_{l}]\}$; i.e $|W|^{(1)}>|W|^{(2)}>...>|W|^{(k_x)}$.\\

Let $(X_i)_{i \in [1: M_{l_0}]}$ and $(Z_i)_{i \in [1: \Gamma_L N^2]}$ be two sequences of iid standard normal variables. It is easy to see that 
$$\mathbb{P}(\max_{i,j} |W^{l_0}_{ij}| < |W|^{(k_x)}) \geq \mathbb{P}(\max_{i} |X_i| < \gamma |Z|^{(k_x)})$$ where $\gamma = \min_{l \neq l_0} \frac{\alpha_{l_0}}{\alpha_l} $.\\

Moreover, we have the following result from the theory of order statistics, which is a weak version of Theorem 3.1. in \cite{puri}
\begin{lemma2}\label{lemma:convergence_quantile}
Let $X_1, X_2, ..., X_n$ be iid random variables with a cdf $F$. Assume $F$ is differentiable and let $p \in (0,1)$ and let $Q_p$ be the order $p$ quantile of the distribution $F$, i.e. $F(Q_p) = p$. Then we have 
$$
\sqrt{n} (X^{(pn)} - Q_p) F'(Q_p) \sigma_p^{-1} \underset{D}{\rightarrow} \mathcal{N}(0,1),
$$
where the convergence is in distribution and $\sigma_p = p(1-p)$.
\end{lemma2}
Using this result, we obtain
$$
\mathbb{P}(\max_{i} |X_i| < \gamma |Z|^{(k_x)}) = \mathbb{P}(\max_{i} |X_i| < \gamma Q_x) + \mathcal{O}(\frac{1}{\sqrt{L N^2}}),
$$
where $Q_x$ is the $x$ quantile of the folded standard normal distribution.\\

The next result shows that $x_{\epsilon, \gamma}$ is finite for all $\epsilon \in (0,2)$.
\begin{lemma2}\label{lemma:quantile_ineq}
Let $\gamma > 1$. For all $\epsilon \in (0,2)$, there exists $x_{\epsilon} \in (0,1)$ such that, for all $x>x_{\epsilon}$, $\gamma Q_{x} > Q_{1 - (1-x)^{\gamma^{2 - \epsilon}}}$.
\end{lemma2}
\begin{proof}
Let $\epsilon>0$, and recall the asymptotic equivalent of $Q_{1 - x}$ given by $$Q_{1 - x} \sim_{x\rightarrow 0} \sqrt{-2 \log(x)}$$
Therefore, $\frac{\gamma Q_{x}}{Q_{1 - (1-x)^{\gamma^{2 - \epsilon}}}} \sim_{x \rightarrow 1} \sqrt{\gamma^\epsilon} >1 $. Hence $x_{\epsilon}$ exists and is finite.
\end{proof}
Let $\epsilon>0$. Using Appendix Lemma \ref{lemma:quantile_ineq}, there exists $x_{\epsilon}>0$ such that
\begin{align*}
\mathbb{P}(\max_{i} |X_i| < \gamma Q_x) &\geq \mathbb{P}(\max_{i} |X_i| < Q_{1 - (1 - x)^{\gamma^{2 - \epsilon}}}) \\
&= (1 - (1 - x)^{\gamma^{2 - \epsilon}})^{\zeta_{l_0} N^2}\\
&\geq 1 - \zeta_{l_0} N^2 (1 - x)^{\gamma^{2 - \epsilon}},
\end{align*}
where we have used the inequality $(1 - t)^z \geq 1 - zt$ for all $(t,z) \in [0,1] \times (1,\infty)$.  \\

Using the last result, we have 
$$
\mathbb{P}(s_{cr} \geq x) \leq \zeta_{l_0} N^2 (1 - x)^{\gamma^{2 - \epsilon}} + \mathcal{O}(\frac{1}{\sqrt{L N^2}}).
$$
Now we have
\begin{align*}
    \mathbb{E}[s_{cr}] &= \int_{0}^1 \mathbb{P}(s_{cr} \geq x) dx\\
    &\leq x_{\epsilon} + \int_{x_\epsilon}^1 \mathbb{P}(s_{cr} \geq x) dx\\
    &\leq x_{\epsilon} + \frac{\zeta_{l_0} N^2}{1 + \gamma^{2 - \epsilon}} (1 - x_\epsilon)^{\gamma^{2 - \epsilon} + 1} + \mathcal{O}(\frac{1}{\sqrt{L N^2}}).
\end{align*}
This is true for all $\epsilon \in (0,2)$, and the additional term $\mathcal{O}(\frac{1}{\sqrt{L N^2}})$ does not depend on $\epsilon$. Therefore there exists a constant $\nu \in \mathbb{R}$ such that for all $\epsilon$
$$
\mathbb{E}[s_{cr}] \leq x_{\epsilon} + \frac{\zeta_{l_0} N^2}{1 + \gamma^{2 - \epsilon}} (1 - x_\epsilon)^{\gamma^{2 - \epsilon} + 1} + \frac{\nu}{\sqrt{L N^2}}.
$$
We conclude by taking the infimum over $\epsilon$.
\end{proof}

\section{ImageNet Experiments}\label{app:ImageNet}
To validate our results on large scale datasets, we prune ResNet50 using SNIP, GraSP, SynFlow and our algorithm SBP-SR, and train the pruned network on ImageNet. We train the pruned model for 90 epochs with SGD. The training starts with a learning rate 0.1 and it drops by a factor of 10 at epochs $30, 60, 80$. We report in table \ref{table:imagenet} Top-1 test accuracy for different sparsities. Our algorithm SBP-SR has a clear advantage over other algorithms. We are currently running extensive simulations on ImageNet to confirm these results.

\begin{table}[htp]
 \caption{\small{Classification accuracy on ImageNet (Top-1) for ResNet50 with varying sparsities} (TODO: These results will be updated to include confidence intervals)}
 \label{table:imagenet}
 \begin{center}
 \begin{small}
 \begin{sc}
 \resizebox{6.cm}{!}{%
 \begin{tabular}{lcccr}
 \toprule
  Algorithm & $85\%$ & $90\%$ & $95\%$\\
 \midrule
   SNIP  &  69.05 & 64.25 & 44.90 \\
   GraSP  &  69.45 & 66.41 & 62.10 \\
   SynFlow  &  69.50 & 66.20 & 62.05 \\
   SBP-SR& \textbf{69.75} & \textbf{67.02} & \textbf{62.66} \\
 \bottomrule
 \end{tabular}
 }
 \end{sc}
 \end{small}
 \end{center}
 \end{table}

\section{Additional Experiments}\label{additionalExperiments}

In Table \ref{table:experiments_with_relu_tanh}, we present additional experiments with varying Resnet Architectures (Resnet32/50), and sparsities (up to 99.9$\%$) with Relu and Tanh activation functions on Cifar10. We see that overall, using our proposed Stable Resnet performs overall better that standard Resnets. \\

In addition, we also plot the remaining weights for each layer to get a better understanding on the different pruning strategies and well as understand why some of the Resnets with Tanh activation functions are untrainable. Furthermore, we added additional MNIST experiments with different activation function (ELU, Tanh) and note that our rescaled version allows us to prune significantly more for deeper networks. \\
\begin{table}
\caption{Test accuracy of pruned neural network on CIFAR10 with different activation functions}
\label{table:experiments_with_relu_tanh}
\center
\begin{tabular}{llllll}
Resnet32            & Algo    & 90          & 98          & 99.5        & 99.9         \\ \hline
\multirow{2}{*}{Relu} & SBP-SR  & \textbf{92.56(0.06)} & \textbf{88.25(0.35)} & \textbf{79.54(1.12)} & \textbf{51.56(1.12)}  \\
                      & SNIP & 92.24(0.25) & 87.63(0.16) & 77.56(0.36) & 10(0)       
           \\ \cmidrule(r){2-6}
\multirow{2}{*}{Tanh} & SBP-SR  & \textbf{90.97(0.2)}  & \textbf{86.62(0.38)} & \textbf{75.04(0.49)} & \textbf{51.88(0.56)}  \\
                      & SNIP & 90.69(0.28)         & 85.47(0.18)         & 10(0)         & 10(0)          \\
\hline
Resnet50                 \\ \cmidrule(r){1-1}
\multirow{2}{*}{Relu} & SBP-SR  & \textbf{92.05(0.06)} & \textbf{89.57(0.21)} & \textbf{82.68(0.52)} & \textbf{58.76(1.82)}  \\
                      & SNIP & 91.64(0.14) & 89.20(0.54) & 80.49(2.41) & 19.98(14.12)        \\ \cmidrule(r){2-6}
\multirow{2}{*}{Tanh} & SBP-SR  & \textbf{90.43(0.32)}  & \textbf{88.18(0.10)} & \textbf{80.09(0.0.55)} & \textbf{58.21(1.61)}  \\

                      & SNIP & 89.55(0.10)         & 10(0)         & 10(0)         & 10(0)          \\
\hline
\end{tabular}

\end{table}

\begin{figure*}[!htp]
  \centering
  \label{fig:resnet_scaled}
  \includegraphics[width=0.25\linewidth]{resnets32/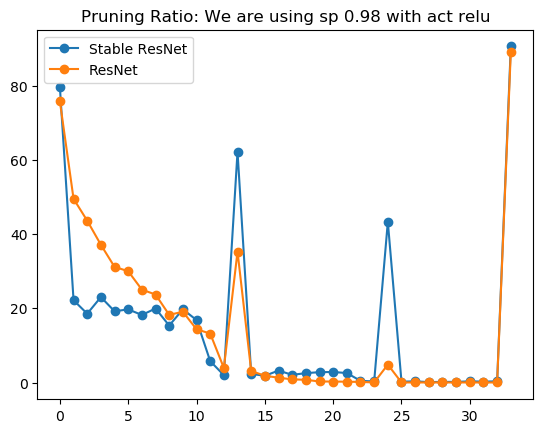}
  \includegraphics[width=0.25\linewidth]{resnets32/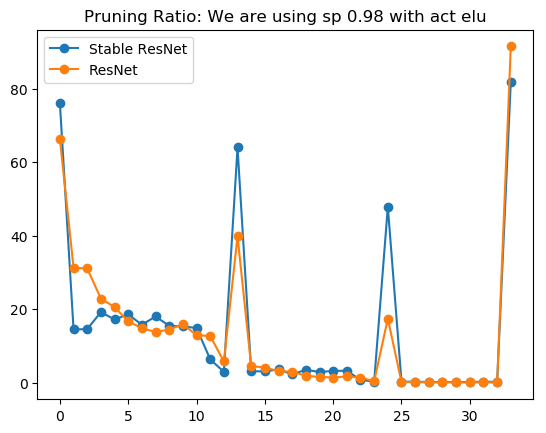}
  \includegraphics[width=0.25\linewidth]{resnets32/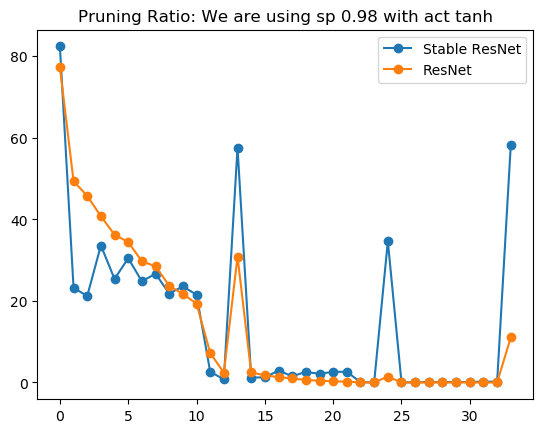}
   
  \includegraphics[width=0.25\linewidth]{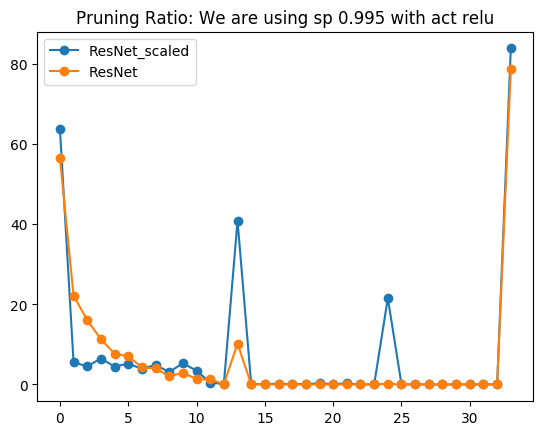}
  \includegraphics[width=0.25\linewidth]{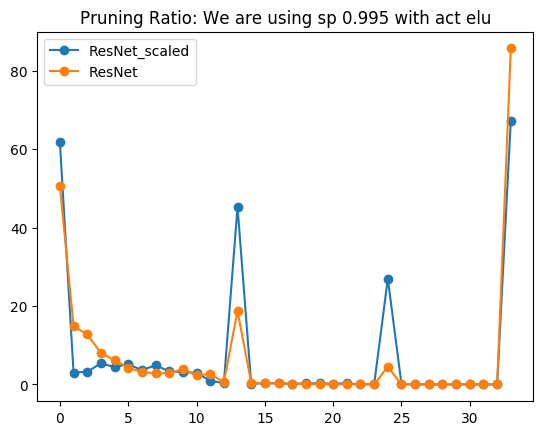}
  \includegraphics[width=0.25\linewidth]{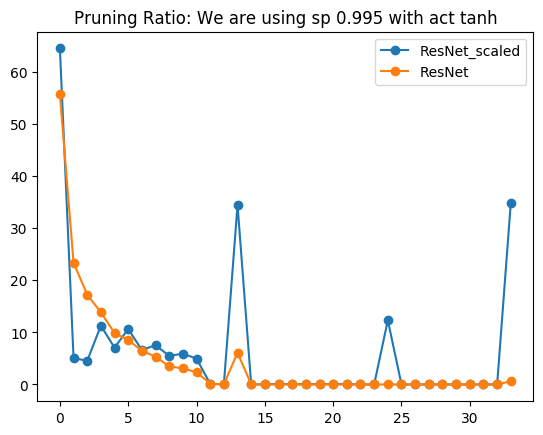}

\caption{Percentage of pruned weights per layer in a ResNet32 for our scaled ResNet32 and standard Resnet32 with Kaiming initialization}
\label{fig:test3}
\end{figure*}

\begin{figure*}
  \centering
  \subfigure[ELU with EOC Init $\&$ Rescaling]{%
  \label{fig:elu_eoc_scaled}
  \includegraphics[width=0.25\linewidth]{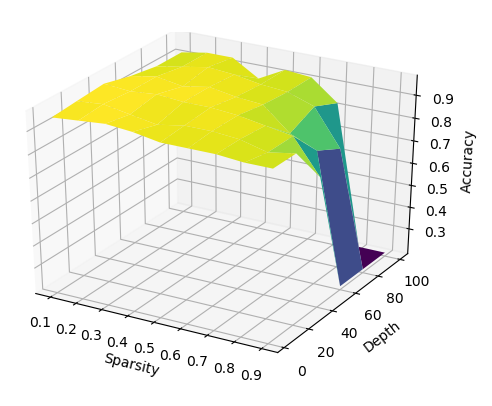}
  }%
  \subfigure[ELU with EOC Init]{%
  \label{fig:elu_eoc}
  \includegraphics[width=0.25\linewidth]{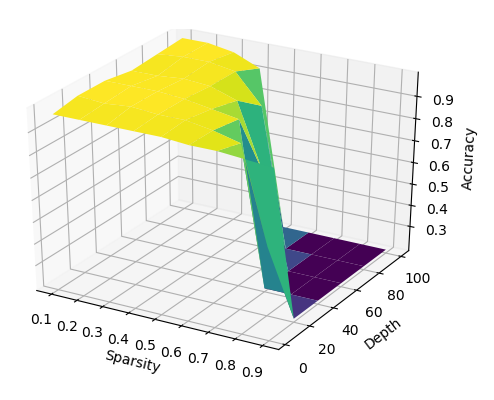}
  }%
    \subfigure[ELU with Ordered phase Init]{%
  \label{fig:elu_ordered}
  \includegraphics[width=0.25\linewidth]{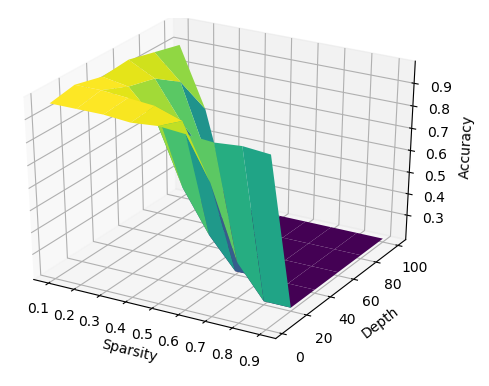}
  }
  \subfigure[Tanh with EOC Init $\&$ Rescaling]{%
  \label{fig:tanh_eoc_scaled}
  \includegraphics[width=0.25\linewidth]{mnist_FFN/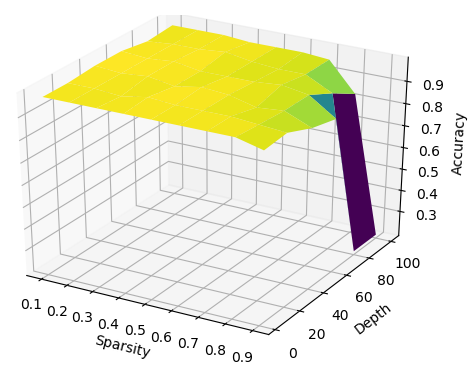}
  }%
    \subfigure[Tanh with EOC Init]{%
  \label{fig:tanh_eoc}
  \includegraphics[width=0.25\linewidth]{mnist_FFN/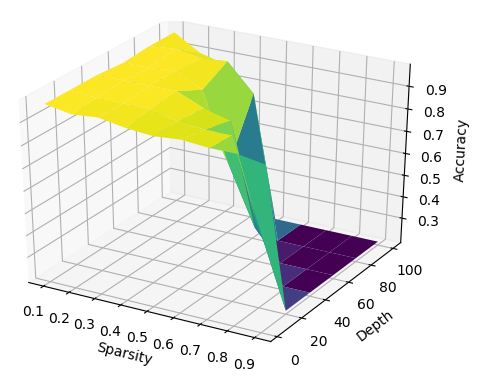}
  }%
  \subfigure[Tanh with Ordered phase Init]{%
  \label{fig:tanh_ordered}
  \includegraphics[width=0.25\linewidth]{mnist_FFN/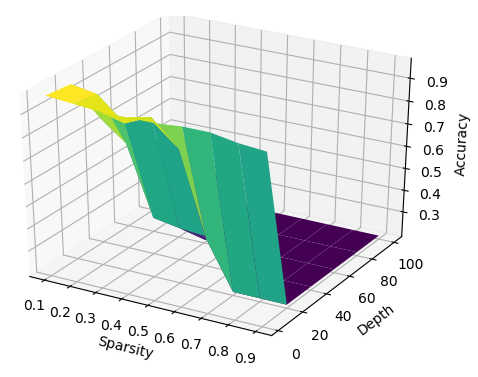}
  }%
\caption{Accuracy on MNIST with different initialization schemes including EOC with rescaling, EOC without rescaling, Ordered phase, with varying depth and sparsity. This figure clearly illustrates the benefits of rescaling very deep and sparse FFNN.}
\label{fig:elu_tanh_rescaling_trick}
\end{figure*}

\begin{table}
\caption{Test accuracy of pruned vanilla-CNN on CIFAR10 with different depth/sparsity levels}
\label{table:experiments_with_relu_tanh}
\center
\begin{tabular}{llllll}
Resnet32            & Algo    & 90          & 98          & 99.5        & 99.9         \\ \hline
\multirow{2}{*}{Relu} & SBP-SR  & \textbf{92.56(0.06)} & \textbf{88.25(0.35)} & \textbf{79.54(1.12)} & \textbf{51.56(1.12)}  \\
                      & SNIP & 92.24(0.25) & 87.63(0.16) & 77.56(0.36) & 10(0)       
           \\ \cmidrule(r){2-6}
\multirow{2}{*}{Tanh} & SBP-SR  & \textbf{90.97(0.2)}  & \textbf{86.62(0.38)} & \textbf{75.04(0.49)} & \textbf{51.88(0.56)}  \\
                      & SNIP & 90.69(0.28)         & 85.47(0.18)         & 10(0)         & 10(0)          \\
\hline
Resnet50                 \\ \cmidrule(r){1-1}
\multirow{2}{*}{Relu} & SBP-SR  & \textbf{92.05(0.06)} & \textbf{89.57(0.21)} & \textbf{82.68(0.52)} & \textbf{58.76(1.82)}  \\
                      & SNIP & 91.64(0.14) & 89.20(0.54) & 80.49(2.41) & 19.98(14.12)        \\ \cmidrule(r){2-6}
\multirow{2}{*}{Tanh} & SBP-SR  & \textbf{90.43(0.32)}  & \textbf{88.18(0.10)} & \textbf{80.09(0.0.55)} & \textbf{58.21(1.61)}  \\

                      & SNIP & 89.55(0.10)         & 10(0)         & 10(0)         & 10(0)          \\
\hline
\end{tabular}

\end{table}
\newpage
\section{On The Lottery Ticket Hypothesis }\label{sec:LTH}
The Lottery Ticket Hypothesis (LTH) \citep{frankleLOTTERY} states that ``randomly initialized networks contain subnetworks that when trained in isolation reach test accuracy comparable to the original network''. We have shown so far that pruning a NN initialized on the EOC will output sparse NNs that can be trained after rescaling. Conversely, if we initialize a random NN with any hyperparameters $(\sigma_w,\sigma_b)$, then intuitively, we can prune this network in a way that ensures that the pruned NN is on the EOC. This would theoretically make the sparse architecture trainable. We formalize this intuition as follows.

{\bf Weak Lottery Ticket Hypothesis (WLTH):} {\it For any randomly initialized network, there exists a subnetwork that is initialized on the Edge of Chaos.}

In the next theorem, we prove that the WLTH is true for FFNN and CNN architectures that are initialized with Gaussian distribution. 
\begin{thm}\label{thm:wlth}
Consider a FFNN or CNN with layers initialized with variances $\sigma_{w}^2 > 0$ for weights and variance $\sigma_b^2$ for bias. Let $\sigma_{w,EOC}$ be the value of $\sigma_w$ such that $(\sigma_{w,EOC}, \sigma_b)\in EOC$. Then, for all $\sigma_w > \sigma_{w,EOC}$, there exists a subnetwork that is initialized on the EOC.
Therefore WLTH is true.
\end{thm}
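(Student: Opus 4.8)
The plan is to obtain the EOC subnetwork by \emph{running the rescaling-trick computation of Proposition~\ref{prop:resclaing_trick} in reverse}: rather than rescaling a pruned network back onto the EOC, I prune an over-dispersed network \emph{onto} the EOC. Recall from the proof of Proposition~\ref{prop:resclaing_trick} that, after applying any binary mask $\delta$ to a FFNN, the forward covariance satisfies
\[
\hat q^l(x,x') \;=\; \alpha_l\,\mathbb{E}\big[\phi(\hat y^{l-1}_1(x))\,\phi(\hat y^{l-1}_1(x'))\big] + \sigma_b^2,\qquad \alpha_l \;=\; \mathbb{E}\big[N_{l-1}(W^l_{11})^2\delta^l_{11}\big],
\]
and, for CNNs, that the average-per-channel kernel $\hat\kappa^l$ of Appendix~\ref{subsection:approx1} obeys the same recursion with $\sigma_w^2$ replaced by the analogous $\alpha_l$. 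In other words, pruning acts on the covariance dynamics exactly like replacing the weight variance $\sigma_w^2$ by $\alpha_l\le\sigma_w^2$ while leaving $\sigma_b^2$ fixed. The elementary fact I will exploit is that, under Approximation~\ref{approximation:infinite_width}, a layer with incoming weights i.i.d.\ $\mathcal{N}(0,\sigma_w^2/v_l)$ that is pruned independently at keep-probability $p\in(0,1]$ has $\alpha_l = p\,\sigma_w^2$ (the empirical second moment of the retained weights concentrates, fluctuations being $O(v_l^{-1/2})\to 0$; alternatively one fixes the mask greedily per neuron so the retained squared weights sum exactly to $\sigma_{w,\mathrm{EOC}}^2$, the greedy error being at most one weight, hence $o(1)$).

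Given this, the construction is immediate. Since $\sigma_w > \sigma_{w,\mathrm{EOC}}$, the ratio $p := \sigma_{w,\mathrm{EOC}}^2/\sigma_w^2$ lies in $(0,1)$; take the subnetwork that keeps every weight of every layer (including the first) independently with probability $p$ and keeps all biases. Then $\alpha_l = \sigma_{w,\mathrm{EOC}}^2$ for all $l$, so the pruned covariance recursion --- and, for ReLU, the variance recursion $\hat q^l(x) = \tfrac12\sigma_{w,\mathrm{EOC}}^2\,\hat q^{l-1}(x)+\sigma_b^2$ --- coincides verbatim with that of an \emph{unpruned} FFNN (resp.\ CNN) initialized at $(\sigma_{w,\mathrm{EOC}},\sigma_b)$. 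Hence the pruned network has the same limiting variance $q$, the same correlation map $f$, and therefore the same $\chi = f'(1) = \sigma_{w,\mathrm{EOC}}^2\,\mathbb{E}[\phi'(\sqrt{q}\,Z)^2]$, which equals $1$ by the very definition of $\sigma_{w,\mathrm{EOC}}$. Thus the pruned network is initialized on the EOC. Since $p>0$ is a fixed fraction and widths are large, no layer is emptied, so the mask genuinely defines a subnetwork, and WLTH follows.

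The only delicate point --- and essentially the whole content beyond re-reading Proposition~\ref{prop:resclaing_trick} --- is making $\alpha_l$ land \emph{exactly} on $\sigma_{w,\mathrm{EOC}}^2$: at finite width one can only approach it, so the statement is to be understood within the mean-field/infinite-width approximation used throughout the paper, under which either the uniform random mask or the per-neuron greedy mask hits the target variance exactly in the limit. I would also emphasize that the hypothesis $\sigma_w > \sigma_{w,\mathrm{EOC}}$ is not a technicality: pruning can only push $\alpha_l$ below $\sigma_w^2$, so a network initialized in the ordered phase cannot be brought onto the EOC by pruning alone --- one would have to rescale rather than merely delete weights --- and this asymmetry is precisely why WLTH is stated as a one-sided result rather than for all $\sigma_w>0$.
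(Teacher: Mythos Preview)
Your proposal is correct and is essentially the paper's own proof: both construct an i.i.d.\ Bernoulli mask with keep-probability $p=\sigma_{w,\mathrm{EOC}}^2/\sigma_w^2$, observe that this replaces the effective weight variance $\sigma_w^2$ by $p\,\sigma_w^2=\sigma_{w,\mathrm{EOC}}^2$ in the covariance recursion, and conclude that the pruned network sits on the EOC. The paper states it slightly more generally (allowing layer-dependent $\sigma_{w,l}$ and hence layer-dependent $p_l$), but the argument is identical; your additional remarks on finite-width concentration and on why the hypothesis $\sigma_w>\sigma_{w,\mathrm{EOC}}$ is necessary are correct commentary that the paper omits.
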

The idea behind the proof of Theorem \ref{thm:wlth} is that by removing a fraction of weights from each layer, we are changing the covariance structure in the next layer. By doing so in a precise way, we can find a subnetwork that is initialized on the EOC.

We prove a slightly more general result than the one stated.
\begin{thm}[Winning Tickets on the Edge of Chaos]\label{prop:winning_tickets_lth}
Consider a neural network with layers initialized with variances $\sigma_{w,l} \in \mathbb{R}^+$ for each layer and variance $\sigma_b>0$ for bias. We define $\sigma_{w,EOC}$ to be the value of $\sigma_w$ such that $(\sigma_{w,EOC}, \sigma_b)\in EOC$. Then, for all sequences $(\sigma_{w,l})_{l}$ such that $\sigma_{w,l} > \sigma_{w,EOC}$ for all $l$, there exists a distribution of subnetworks initialized on the Edge of Chaos.
\end{thm}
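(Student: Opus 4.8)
The plan is to run the covariance-propagation argument behind the Rescaling Trick (Proposition~\ref{prop:resclaing_trick}) in reverse. The key observation is that pruning a layer with a mask that is \emph{statistically independent of the weights} has exactly the same effect, at the level of the forward covariance recursion and the backward gradient-covariance recursion, as multiplying that layer's weight variance by the retention probability. Since every $\sigma_{w,l}^2$ exceeds $\sigma_{w,EOC}^2$, I can pick a retention probability $p_l := \sigma_{w,EOC}^2/\sigma_{w,l}^2 \in (0,1)$ that brings the \emph{effective} variance of layer $l$ down to exactly $\sigma_{w,EOC}^2$; the resulting random subnetwork then has the mean-field dynamics of a standard FFNN (or CNN) initialized with uniform weight variance $\sigma_{w,EOC}^2$ and bias variance $\sigma_b^2$, which is on the EOC by the very definition of $\sigma_{w,EOC}$.

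Concretely I would proceed as follows. First, fix the distribution over subnetworks: for each layer $l$ let the mask entries $\delta^l_{ij}$ (resp.\ $\delta^l_{i,j,\beta}$ for a CNN) be i.i.d.\ $\mathrm{Bernoulli}(p_l)$, drawn independently of the weights. Second, repeat the covariance computation from the proof of Proposition~\ref{prop:resclaing_trick}: off-diagonal terms vanish by independence and zero mean, and, because the mask is independent of the weights, $\mathbb{E}[(W^l_{ij})^2\delta^l_{ij}] = p_l\,\sigma_{w,l}^2/N_{l-1}$, so in the infinite-width limit
$$\hat q^l(x,x') = p_l\,\sigma_{w,l}^2\,\mathbb{E}[\phi(\hat y^{l-1}_1(x))\phi(\hat y^{l-1}_1(x'))] + \sigma_b^2 = \sigma_{w,EOC}^2\,\mathbb{E}[\phi(\hat y^{l-1}_1(x))\phi(\hat y^{l-1}_1(x'))] + \sigma_b^2,$$
with the analogous identity for CNN (the per-filter factor $\alpha_{l,\beta}$ of Proposition~\ref{prop:resclaing_trick} becomes $\sigma_{w,EOC}^2/(2k+1)$, uniformly in $\beta$). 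Third, conclude: the pruned network's covariance recursion is now \emph{exactly} that of the unpruned network at uniform variance $\sigma_{w,EOC}^2$, so its limiting variance is $q := q(\sigma_b,\sigma_{w,EOC})$, its correlation map and gradient-covariance recursion coincide with those of the EOC-initialized network, and in particular $\chi = \sigma_{w,EOC}^2\,\mathbb{E}[\phi'(\sqrt{q}Z)^2] = 1$. Hence the subnetwork lives on the EOC. Since $p_l<1$ a positive fraction of weights in every layer is zeroed (so it is a proper subnetwork), and since $p_l>0$ no layer is fully pruned; moreover $p_l N_{l-1}\to\infty$, so the infinite-width approximation remains valid after pruning. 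Taking all $\sigma_{w,l}\equiv\sigma_w>\sigma_{w,EOC}$ recovers Theorem~\ref{thm:wlth}.

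The argument uses only the two mean-field approximations invoked throughout (infinite width and gradient independence). The points requiring care are: the direction of the hypothesis — $\sigma_{w,l}>\sigma_{w,EOC}$ is precisely what makes $p_l$ a legal probability, whereas a layer with $\sigma_{w,l}<\sigma_{w,EOC}$ cannot be corrected by pruning; the fact that keeping a constant fraction of weights leaves the pruned layers within the regime where the covariance and gradient recursions are derived; and the routine observation that the Bernoulli mask concentrates, so that essentially every realization yields the deterministic EOC covariance recursion (which is what ``the distribution of subnetworks is on the EOC'' means in the mean-field sense). I do not expect a genuine obstacle: this statement is the ``inverse'' of the Rescaling Trick, and the one computation that does the work — the effective-variance recursion for a pruned layer — has already been carried out in the proof of Proposition~\ref{prop:resclaing_trick}.
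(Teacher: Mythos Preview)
Your proposal is correct and follows essentially the same route as the paper: introduce an i.i.d.\ Bernoulli$(p_l)$ mask independent of the weights, observe that the effective weight variance of layer $l$ becomes $p_l\,\sigma_{w,l}^2$, and set $p_l=\sigma_{w,EOC}^2/\sigma_{w,l}^2$ so that the pruned covariance recursion coincides with that of an unpruned network at $(\sigma_{w,EOC},\sigma_b)\in\mathrm{EOC}$. Your write-up is in fact slightly more thorough than the paper's (you also mention the gradient-covariance recursion, the CNN factor, and why $p_l\in(0,1)$ matters), but the argument is the same.
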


\begin{proof}

We prove the result for FFNN. The proof for CNN is similar.
Let $x, x'$ be two inputs. For all $l$, let $(\delta^l)_{ij}$ be a collection of Bernoulli variables with probability $p_l$. The forward propagation of the covariance is given by\\
\begin{align*}
\hat{q}^l(x,x') &= \mathbb{E}[y^l_i(x)y^l_i(x')]\\
&= \mathbb{E}[\sum_{j, k}^{N_{l-1}} W^l_{ij} W^l_{ik} \delta^l_{ij} \delta^l_{ik}\phi(\hat{y}^{l-1}_j(x)) \phi(\hat{y}^{l-1}_j(x'))] + \sigma_b^2.
\end{align*}
This yields
\begin{align*}
\hat{q}^l(x,x') &= \sigma_{w,l}^2 p_l \mathbb{E}[\phi(\hat{y}^{l-1}_1(x))\phi(\hat{y}^{l-1}_1(x'))] + \sigma_b^2.
\end{align*}
By choosing $p_l = \frac{\sigma_{w,EOC}^2}{\sigma_{w,l}^2}$, this becomes 
\begin{align*}
\hat{q}^l(x,x') &= \sigma_{w,EOC}^2 \mathbb{E}[\phi(\Tilde{y}^{l-1}_1(x))\phi(\Tilde{y}^{l-1}_1(x'))] + \sigma_b^2.
\end{align*}
Therefore, the new variance after pruning with the Bernoulli mask $\delta$ is $\Tilde{\sigma}_w^2 = \sigma_{w,EOC}^2$. Thus, the subnetwork defined by $\delta$ is initialized on the EOC. The distribution of these subnetworks is directly linked to the distribution of $\delta$. We can see this result as layer-wise pruning, i.e. pruning each layer aside. The proof is similar for CNNs.
\end{proof}

Theorem \ref{thm:wlth} is a special case of the previous result where the variances $\sigma_{w,l}$ are the same for all layers.
\newpage
\section{Algorithm for section 2.3}
\label{app:algo}
\begin{algorithm}[]\label{alg:re_scaling_trick}
  \caption{Rescaling trick for FFNN}
  \label{alg:example}
\begin{algorithmic}
  \STATE {\bfseries Input:} Pruned network, size $m$
  \FOR{$L=1$ {\bfseries to} $L$}
  \FOR{$i=1$ {\bfseries to} $N_l$}
  \STATE $\alpha^l_{i} \leftarrow \sum_{j=1}^{N_{l-1}} (W_{ij})^2 \delta^l_{ij} $
  \STATE $\rho^l_{ij} \leftarrow 1/\sqrt{\alpha^l_{i}} $ {\bfseries for all $j$}
  \ENDFOR
  \ENDFOR
\end{algorithmic}
\end{algorithm}

\end{document}